\documentclass{article}

\usepackage{microtype}
\usepackage{graphicx}
\usepackage{subcaption}
\usepackage{algorithm}
\usepackage{tabularx}
\usepackage{booktabs}
\usepackage{threeparttable}
\usepackage{bm}
\usepackage[table]{xcolor}
\usepackage{multirow}
\usepackage{hyperref}

\usepackage[accepted]{icml2026}

\usepackage{amsmath}
\usepackage{amssymb}
\usepackage{mathtools}
\usepackage{amsthm}
\usepackage{balance}
\usepackage{url}
\usepackage{comment}

\usepackage[capitalize,noabbrev]{cleveref}

\theoremstyle{plain}
\newtheorem{theorem}{Theorem}[section]
\newtheorem{proposition}[theorem]{Proposition}
\newtheorem{lemma}[theorem]{Lemma}

\theoremstyle{definition}
\newtheorem{definition}[theorem]{Definition}

\theoremstyle{remark}

\newcommand{\improve}[1]{\cellcolor{green!10}{#1}}
\newcommand{\degrade}[1]{\cellcolor{red!10}{#1}}
\newtheorem*{innerrepproposition}{\reppropositionname}
\newcommand{\reppropositionname}{}

\newenvironment{repproposition}[2][]{%
  \renewcommand{\reppropositionname}{Proposition~\ref{#2}}%
  \begin{innerrepproposition}[#1]%
}{%
  \end{innerrepproposition}%
}
\usepackage[textsize=tiny]{todonotes}

\icmltitlerunning{RADE: Random Add-Drop Edge as a Regularizer}

\begin{document}

\twocolumn[
  \icmltitle{RADE: Random Add-Drop Edge as a Regularizer}



  \icmlsetsymbol{equal}{*}

    \begin{icmlauthorlist}
      \icmlauthor{Danial Saber}{ontariotech}
      \icmlauthor{Amirali Salehi-Abari}{ontariotech}
    \end{icmlauthorlist}
    
    \icmlaffiliation{ontariotech}
    {Ontario Tech University, Oshawa, Ontario, Canada}
    
    \icmlcorrespondingauthor{Danial Saber}{danial.saber@ontariotechu.ca}
    
  \icmlkeywords{Machine Learning, ICML}

  \vskip 0.3in
]



\printAffiliationsAndNotice{}  

\begin{abstract}
Graph Neural Networks (GNNs) suffer from overfitting and over-squashing of long-range information. Stochastic graph augmentations (e.g., edge deletion) regularize training against overfitting but can introduce train-inference misalignment and do not improve over-squashing. In contrast, rewiring methods improve connectivity to mitigate over-squashing, but are not designed to regularize training. We propose \emph{Random Add-Drop Edge (RADE)}, a stochastic graph augmentation method that jointly drops and adds edges to address both overfitting and over-squashing simultaneously. RADE is provably designed to align training and inference so that random augmentations regularize training without distribution shift, while supporting long-range communication at inference.  We further propose and study a mini-batch gradient-norm balancing algorithm that adapts deletion and addition rates during training, rendering RADE hyperparameter-free in practice. Experiments on node- and graph-classification benchmarks show that RADE is a strong regularizer and mitigates over-squashing. Ablations support the roles of train-inference alignment, adaptive rate selection, and the complementary effects of random edge deletion and edge addition.
\end{abstract}

\section{Introduction}
Graph Neural Networks (GNNs) \cite{gori2005new,micheli2009neural,scarselli2008graph} are neural networks designed for relational data (e.g., social networks, protein-protein networks, etc.), with diverse applications such as recommender systems \cite{wu2022graph}, knowledge graph completion \cite{arora2020survey}, and molecular property prediction \cite{kearnes2016molecular}. The dominant paradigm of GNNs is Message-Passing Neural Networks (MPNNs) \cite{gilmer2017neural}, in which each node iteratively aggregates and transforms messages from its neighbors.
Despite GNNs' success, their generalizability faces several challenges, including limited expressiveness \cite{bevilacqua2021equivariant}, over-smoothing \cite{li2019deepgcns}, over-squashing \cite{alon2021on}, and overfitting. We focus on two problems of \textit{overfitting}, a widespread challenge in machine learning, and \textit{over-squashing}, a distinctive challenge for MPNNs. Over-squashing occurs when the set of nodes that influence a node's representation through message passing expands rapidly as the number of layers increases, forcing a lot of information to be compressed into a fixed-size embedding \cite{alon2021on}. This limits the ability to capture higher-order and long-range dependencies in graphs.

Prior work addresses overfitting and over-squashing largely in separation. To reduce overfitting in GNNs, common approaches include penalty-based regularization (e.g., weight decay), architectural regularization (e.g., Dropout \cite{srivastava2014dropout}), and stochastic graph augmentation as an implicit regularizer---mostly in the form of deletion-based perturbations (e.g., DropEdge \cite{rong2019dropedge}). Of special interest for GNNs is graph augmentation, but it is often tuning-sensitive: weak perturbations have little effect, while strong ones remove useful graph signal. Also, training on perturbed views can create train-inference misalignment unless an explicit correction is applied. To mitigate over-squashing, existing methods typically increase connectivity through rewiring or shortcut edges \cite{abboud2022shortest,arnaiz2022diffwire,karhadkarfosr2022,alon2021on}. These methods can improve long-range information flow, but the extent of rewiring must still be tuned, and simple random edge addition is less emphasized than more structured surgical rewiring. 

Looking across these two bodies of work, stochastic augmentations regularize training without improving long-range communication, whereas rewiring methods---viewed as a form of augmentation---improve connectivity without regularizing. This gap calls for a unified augmentation framework that regularizes training and, when needed, improves long-range connectivity against over-squashing. Since random edge addition has the potential to be a stochastic regularizer and to facilitate long-range connectivity, a natural question arises: can random edge addition be combined with edge deletion so that the same augmentation both regularizes training and, when useful, improves long-range communication? The main difficulty is train-inference alignment, since training on randomly perturbed graphs but inferring on a fixed graph can induce misalignment or distribution shift. A second practical challenge is controlling the deletion and addition rates without dataset-specific tuning.

We propose \emph{Random Add-Drop Edge (RADE)}, a stochastic topology augmentation that randomly drops existing edges and adds non-edges during training, with corrections for train-inference alignment.
We introduce two variants of RADE. 
\emph{RADE-OF} solely targets overfitting: it corrects both 
deletion and addition effects, so that training-time message aggregation, in expectation, matches the aggregation of the input graph at inference. Thus, the random perturbations act as training-time regularization without systematically shifting the inference rule. \emph{RADE-OFS} targets both overfitting and over-squashing: it corrects the deletion effect but keeps the expected contribution of added edges at inference, thus preserving the regularization benefit of stochastic perturbations while creating additional communication paths for long-range information flow. For both variants, we further show that the regularization effects of random deletion and addition are generally not interchangeable. We also introduce a mini-batch gradient-norm balancing rule that adapts the drop/add rates during training, thereby making RADE practically hyperparameter-free.

Our extensive evaluation across diverse node- and graph-level datasets and various backbone models (e.g., GCN, GIN, and GAT) confirms RADE's potential for both implicit regularization and over-squashing mitigation.
RADE-OF consistently outperforms competitive regularizers while RADE-OFS is the most effective when long-range communication is important: it yields the largest gains on over-squashing-prone graph classification benchmarks \cite{saber2025over} and datasets requiring long-range interactions (e.g., peptides-func) \cite{dwivedi2022long}. Ablations further verify the benefit of our train-inference alignment rules (i.e., removing alignment rules downgrades performance), the performance gains of our adaptive rate selection (i.e., adaptively selected drop/add rates outperform fixed, hyperparameter-tuned choices), and the complementary effects of random drop and addition  (i.e., RADE outperforms the drop-only or add-only variants). Overall, RADE appears to be a simple and effective augmentation framework for regularizing against overfitting and mitigating over-squashing. 

\section{Related Work}
Data augmentation for GNNs perturbs the input graph through its topology (edges/nodes) or attributes (node/edge features). Many augmentations are stochastic regularizers against overfitting, while others address structural limitations of MPNNs (e.g., over-squashing). We review prior work through two lenses: overfitting and over-squashing.

\noindent \textbf{Overfitting.}
Stochastic augmentations act as implicit regularizers against overfitting. Examples include random edge deletion \cite{rong2019dropedge}, feature/representation noise injection \cite{velivckovic2018deep,feng2019graph,yang2021graph}, feature/message masking \cite{thakoor2021large,fang2023dropmessage,you2021graph}, and node dropping to generate alternative views \cite{feng2020graph,saber2024scalable,you2020graph,louis2023simplifying}. These perturbations can also mitigate \emph{over-smoothing} in deep GNNs by weakening or randomizing propagation pathways \cite{rong2019dropedge,thakoor2021large,fang2023dropmessage}.

\noindent \textbf{Over-squashing.} To improve long-range information flow, many methods add or rewire connectivity to alleviate bottlenecks \cite{alon2021on}. This includes virtual nodes \cite{cai2023connection,southern2024understanding}, higher-order structures \cite{bodnar2021weisfeilera,bodnar2021weisfeilerb}, last-layer full connectivity \cite{alon2021on}, diffusion/multi-hop rewiring \cite{abboud2022shortest,abu2019mixhop,wang2020multi,nikolentzos2020k,gasteiger2019diffusion,gabrielsson2022rewiring,barbero2023locality,gutteridge2023drew}, fully-connected attention in graph transformers \cite{ying2021transformers,kreuzer2021rethinking,rampavsek2022recipe}, and targeted edge addition around bottlenecks \cite{topping2021understanding,karhadkarfosr2022}.

\noindent \textbf{Other objectives.} Augmentations have also been used to improve expressiveness by ensembling diverse substructure views \cite{papp2021dropgnn,bevilacqua2021equivariant}, to support robustness/denoising under structural noise or adversaries \cite{wu2019adversarial,zhang2020gnnguard,entezari2020all}, and to enable scalable stochastic training via sampled neighborhoods \cite{hamilton2017inductive,chen2017stochastic,chen2018fastgcn,jacob2023stochastic,zeng2019graphsaint,louis2022sampling}.

\noindent\textbf{Our work.} Our work bridges two separate uses of graph augmentation---regularization against overfitting and connectivity enrichment against over-squashing---through random edge addition. Existing regularizers mostly drop graph signals (e.g., nodes, edges, etc.), while random edge addition remains underexplored as a regularizer for overfitting. In contrast, over-squashing methods often add edges deterministically or in a bottleneck-driven manner, rather than through uniformly random edge addition. RADE combines the two views through random add-drop edge augmentation: deletion and addition jointly induce a regularizer, while addition can enrich long-range communication paths.

\section{Preliminaries and Background}
We consider an undirected graph $\mathcal{G}=(V,E)$ with $n$ nodes and $m$ edges, represented by an adjacency matrix $\mathbf{A}\in\{0,1\}^{n\times n}$, where $A_{ij}=1$ iff $(i,j)\in E$.
Each node $i$ has a $d$-dimensional feature vector $\mathbf{x}_i$, stacked as $\mathbf{X}\in\mathbb{R}^{n\times d}$.

\noindent \textbf{Message-Passing Neural Networks (MPNNs).}
Most GNNs are MPNNs, in which node representations are iteratively updated via messages from their neighbors. Let $\mathbf{H}^{(\ell)}\in\mathbb{R}^{n\times d_\ell}$ denote the representations at layer $\ell$, where each row vector $\mathbf{h}_i^{(\ell)}$ corresponds to node $i$'s representation. The initial representations are $\mathbf{h}^{(0)}_i=\mathbf{x}_i.$ 
For each layer $\ell \leq L$, each node $i$'s representation is updated by
\begin{equation}
\mathbf{h}^{(\ell)}_i
=
\mathrm{Up}^{(\ell)}\!\Big(
\mathbf{h}^{(\ell-1)}_i, \mathbf{a}_i^{(\mathcal{G},\ell)}
\Big),
\end{equation}
where the update function $\mathrm{Up}^{(\ell)}$ uses the previous representation $\mathbf{h}^{(\ell-1)}_i$, and the aggregated neighbor message 
\begin{equation}
\mathbf{a}_i^{(\mathcal{G},\ell)}
=
\mathrm{Agg}^{(\ell)}\!\left(\left\{\mathbf{m}_{ij}^{(\ell-1)}:\ A_{ij}=1,\ j\neq i\right\}\right).
\end{equation}
Here, $\mathrm{Agg}^{(\ell)}$ is a layer-$\ell$ aggregation operator (e.g., sum/mean or degree-normalized aggregation) over $\mathbf{m}_{ij}^{(\ell-1)}$, where $\mathbf{m}_{ij}^{(\ell-1)}$ denotes the message of node $j$ to node $i$ based on their $(\ell-1)$-layer representations. Stacking layers allows information from multi-hop neighborhoods to influence final node $i$'s representation $\mathbf{h}_i^{(L)}$. Many GNNs use a weighted-sum aggregation operator. 

\begin{definition}[Weighted-sum aggregation]
\label{def:weighted-sum}
$\mathrm{Agg}^{(\ell)}$ is a \emph{weighted-sum} aggregator if there exist weights
$\{\alpha_{ij}^{(\mathcal{G)}}\}_{j\neq i}$ such that, for each node $i$,
\begin{equation}
\mathbf{a}_i^{(\mathcal{G},\ell)}
=
\sum_{j\neq i} \alpha_{ij}^{\mathcal G}\,\mathbf{m}_{ij}^{(\ell-1)},
\label{eq:weighted_sum_agg}
\end{equation}
The coefficients $\alpha_{ij}^{\mathcal G}$ depend on graph structure (e.g., degrees) and/or representations (e.g., attention).\footnote{For notational brevity, we write $\alpha_{ij}^{\mathcal G}$ in place of $\alpha_{ij}^{(\mathcal G,l)}$, suppressing the layer dependence when unambiguous.}
\end{definition}
For example, in GIN \cite{xu2018powerful}, $\alpha_{ij}^{\mathcal G}=A_{ij}$ (sum aggregation); in GAT \cite{velivckovic2017graph},
$\alpha_{ij}^{\mathcal G} = A_{ij}B_{ij}$, with $B_{ij}$ being the j's attention to $i$; and in GCN \cite{kipf2016semi},
$\alpha_{ij}^{\mathcal G}=A_{ij}/\sqrt{d_i d_j}$, with degrees $d_i$ and $d_j$.

\noindent \textbf{Data Augmentation in Graphs.}
Graph augmentation samples a \emph{perturbed view} $\mathcal{G'}$ of an input graph $\mathcal{G}$ by modifying its topology $\mathbf{A}$ or node attributes $\mathbf{X}$.
We model an augmentation method as a conditional distribution $\mathcal{Q}(\cdot \mid \mathcal{G})$ over graphs with a sampled view $\mathcal{G'}\sim \mathcal{Q}(\cdot \mid \mathcal{G})$. 
This view-sampling perspective unifies two common \emph{purposes} of augmentation: mitigating overfitting and over-squashing. When $\mathcal{G'}$ is sampled \emph{stochastically during training} (e.g., once per epoch) but inference uses the input graph $\mathcal{G}$, augmentation serves as implicit regularization against overfitting. In contrast, when augmentation is used to alleviate over-squashing, one typically constructs a single augmented graph $\mathcal{G'}$---i.e., $\mathcal{Q}$ is degenerate at $\mathcal{G'}$---and uses that same graph for \emph{both training and inference}.
Such augmentations typically add shortcut edges that shorten long-range message-passing paths and improve information flow.
These two uses suggest a combined setting: one may resample perturbed views stochastically during training while using a fixed augmented graph at inference, so that augmentation helps address overfitting during training and improves connectivity to mitigate over-squashing at inference.

\noindent\textbf{Train-Inference Alignment in Data Augmentation.}
Since graph augmentations perturb message passing, a central question is whether the aggregation used during training aligns with the one used at inference, as failing to do so can hinder generalization. The nature of this alignment depends on the purpose of augmentation. 

When augmentation is used to mitigate overfitting, a train-inference mismatch arises: the model aggregates messages on epoch-specific perturbed graphs during training, but on the fixed input graph during inference.
To avoid potential train-inference mismatch, the training-time aggregation should be \textit{corrected} so that it aligns with the inference-time aggregation in expectation under the augmentation $\mathcal Q$.
\begin{definition}[Expectation-Preserving Aggregation]\label{def:EP}
Let $\widetilde{\mathbf{a}}_i^{(\mathcal {G'},\ell)}$ be the output of the corrected training-time aggregation $\widetilde{\mathrm{Agg}}$ on the perturbed graph $\mathcal{G'}$ under $\mathcal{Q}$, and $\mathbf a_i^{(\mathcal G,\ell)}$ be the output of the inference-time aggregation $\mathrm{Agg}$. The training-time aggregation is \emph{expectation-preserving} if, for every layer $\ell$ and node $i$,
\begin{equation}
\vspace{-10pt}
\label{eq:unbiased-augmentation}
\mathbb{E}_{\mathcal G'\sim\mathcal Q(\cdot\mid\mathcal G)}
\!\left[\widetilde{\mathbf a}_i^{(\mathcal G',\ell)} \mid \mathbf H^{(\ell-1)}\right]
=
\mathbf a_i^{(\mathcal G,\ell)}.
\end{equation}
\end{definition}
When this condition holds, stochastic training introduces mean-zero aggregation noise around the inference-time aggregation rather than a systematic bias.
Many common graph augmentations \cite{rong2019dropedge,feng2020graph} do not yield expectation-preserving aggregation by default---e.g., simple edge deletion \cite{rong2019dropedge} changes the expected aggregated messages---so achieving expectation-preserving criterion typically requires correcting the training-time aggregation. Definition~\ref{def:EP} is not only an alignment criterion; it also provides the principle for deriving the corrected aggregation rule $\widetilde{\mathrm{Agg}}$ from any aggregation rule $\mathrm{Agg}$ under stochastic augmentation.

In contrast, when the goal is to mitigate over-squashing, one typically uses the same augmented graph $\mathcal{G'}$ in \emph{both training and inference}.
In this regime, the same aggregation $\mathbf a_i^{(\mathcal G',\ell)}$ is used in both phases, so there is no train-inference aggregation mismatch to correct.

If one seeks to mitigate overfitting and over-squashing simultaneously, and the same augmentation is used in both training and inference, then both phases induce the same change in message aggregation without any train-inference mismatch. However, if training uses stochastic perturbed views while inference uses a fixed aggregation rule, then the expected training-time aggregation should be aligned with the chosen inference-time aggregation.
In this case, expectation preservation should be defined not necessarily with respect to the input-graph aggregation, but with respect to the desirable inference-time aggregation that preserves the desirable augmentation effect (e.g., shortcuts for long-range communication).

\noindent \textbf{Augmentation for Implicit Regularization.}
Under some simplifying assumptions, stochastic graph augmentation with expectation-preserving aggregation can be interpreted as an implicit regularizer \cite{fang2023dropmessage}. 
Consider binary cross-entropy loss under a \emph{linear} MPNN,\footnote{For nonlinear GNNs, nonlinearity introduces bias after aggregation, so Eq.~\eqref{eq:reg} holds in approximation.} where $L_{\mathrm{BCE}}$ and $\tilde{L}_{\mathrm{BCE}}$ are the BCE losses on the input and perturbed graphs, respectively. Then the expected loss follows the variance-based regularization view \cite{fang2023dropmessage}:
\begin{equation}
\label{eq:reg}
\mathbb{E}[\tilde{L}_{\mathrm{BCE}}]
\approx
L_{\mathrm{BCE}}
+
\frac{1}{2}\sum_i z_i(1-z_i)\,\mathrm{Var}(\delta_i),
\end{equation}
where $z_i=\sigma(s_i)$, with $\sigma$ the sigmoid function and $s_i$ the logit on the input graph, and $\delta_i$ is the induced perturbation noise around $s_i$.\footnote{Ignoring cross-class covariances, Eq.~\eqref{eq:reg} extends to $C$-class classification as $\mathbb{E}[\tilde{L}_{\mathrm{CE}}]\!\approx\! L_{\mathrm{CE}}+\frac{1}{2}\sum_i\sum_{c=1}^C p_{i,c}(1-p_{i,c})\,\mathrm{Var}(\delta_{i,c})$, where $p_{i,c}$ is the predicted probability of class $c$ for node $i$, and $\delta_{i,c}$ is the class-$c$ logit perturbation.} Eq.~\eqref{eq:reg} shows that stochastic perturbation is a variance-based regularizer: training favors parameters that produce low-variance predictions across perturbed views, as in Dropout~\cite{srivastava2014dropout}, DropMessage~\cite{fang2023dropmessage}, and DropEdge~\cite{rong2019dropedge}.

\noindent\textbf{DropEdge.} 
DropEdge \cite{rong2019dropedge} randomly deletes edges during each epoch of training, so messages are aggregated on a perturbed graph with a stochastic subset of the original edges. This stochastic perturbation acts as an implicit regularizer. However, expectation-preserving aggregation is not automatic: edge deletion changes the expected aggregation unless the training-time aggregation is \textit{corrected}. For sum aggregation (e.g., GIN), this correction reduces to rescaling by a global factor. For general weighted-sum schemes, however, deletion perturbs the aggregation in a way that is not generally corrected by a single global rescaling. This motivates us to develop a principled framework for deriving expectation-preserving corrections for general weighted-sum aggregators.

\noindent\textbf{Deletion vs.\ Addition as Distinct Topology Perturbations.}
Random edge deletion stochastically suppresses existing message pathways and is primarily used as an implicit regularizer against overfitting.
In contrast, random edge addition, often neglected in the literature, creates new pathways that can improve long-range information flow and mitigate over-squashing while still providing stochastic regularization during training. Thus, deletion acts on existing edges, whereas addition acts on non-edges, so their effects are generally not interchangeable, even when viewed purely as stochastic regularizers. This asymmetry motivates studying edge addition as a complementary perturbation primitive and, more broadly, \emph{add-drop} augmentations that combine both mechanisms within a unified view-sampling framework.

\section{Random Add-Drop Edge}
We propose \textbf{R}andom \textbf{A}dd-\textbf{D}rop \textbf{E}dge (RADE), a graph augmentation strategy with an explicit aggregation correction rule, designed to mitigate overfitting and over-squashing. RADE has two components:
(i) \textit{perturbation}, specifying how a perturbed view $\mathcal{G'}$ is sampled and
(ii) \textit{aggregation-correction}, detailing how message aggregation is corrected to control train-inference aggregation alignment.

\noindent \textbf{Perturbation}. At each epoch of training, RADE samples a perturbed adjacency $\mathbf{A}'$ by independently \emph{dropping} existing edges with probability $p$ and \emph{adding} non-edges with probability $q$:
\begin{equation}
\label{eq:rade-perturb}
A_{ij}^\prime\sim
\begin{cases}
\mathrm{Bernoulli}(1-p), & A_{ij} = 1\\
\mathrm{Bernoulli}(q),   & A_{ij} = 0,
\end{cases}
\end{equation}
with $A^\prime_{ji}=A^\prime_{ij}$ and $A^\prime_{ii}=0$. Although we use Bernoulli perturbations, the framework can be extended to other distributions.
RADE generalizes DropEdge: $q=0$ recovers it, while $p=0$ yields pure random edge addition.\footnote{In practice, to avoid enumerating all non-edges $|\overline E|$, we add $K=q|\overline E|$ non-edges sampled uniformly without replacement--the hypergeometric analogue of i.i.d.\ Bernoulli($q$)--so $K/|\overline E|\approx q$.}

\noindent \textbf{Aggregation-correction mechanism.} A topology perturbation generally changes the expected aggregated messages, so maintaining train-inference alignment typically requires an explicit correction mechanism. RADE specifies at each layer how aggregation is defined on a perturbed view and how it is aligned with the inference-time aggregation. Under a weighted-sum aggregator, for a sampled view $\mathcal{G'}$, RADE aggregates via
\begin{equation}
\label{eq:rade-corrected-agg}
\widetilde{\mathbf{a}}_i^{(\mathcal G',\ell)}
=\sum_{j\neq i}\alpha_{ij}^{\mathcal G'}\,
\widetilde{\mathbf m}_{ij}^{(\ell-1)},
\end{equation}
which corrects the training-time aggregation through the corrected message $\widetilde{\mathbf m}_{ij}^{(\ell-1)}$. Together with the inference-time aggregation to which it is aligned, this defines the alignment mechanism. We consider two RADE variants, depending on how the messages are corrected: RADE-OF, which targets only overfitting, and RADE-OFS, which targets both overfitting and over-squashing.

\noindent \textbf{RADE-OF.} 
When only overfitting is a concern, RADE-OF corrects messages on both existing edges and non-edges so that each layer's training-time aggregation is expectation-preserving with respect to the original inference-time aggregation on the input graph. Define
\begin{equation}
\label{eq:rade-of-correction}
\widetilde{\mathbf m}_{ij}^{(\ell-1)}
=
\begin{cases}
\displaystyle
\frac{\alpha_{ij}^{\mathcal G}}
{\mathbb E\!\left[\alpha_{ij}^{\mathcal G'}\right]}
\,\mathbf m_{ij}^{(\ell-1)}, & A_{ij}=1,\\[2.0ex]
\mathbf m_{ij}^{(\ell-1)}-\boldsymbol{\mu}_i^{(\ell-1)}, & A_{ij}=0,
\end{cases}
\end{equation}
where the centering term $\boldsymbol{\mu}_i^{(\ell-1)}$ is the weighted mean over non-neighbor messages:
\begin{equation}
\label{eq:rade-of-mu}
\boldsymbol{\mu}_i^{(\ell-1)}
=
\frac{1}{Z_i}
\sum_{j: A_{ij} = 0}
\mathbb E\!\left[\alpha_{ij}^{\mathcal G'}\right]\,
\mathbf m_{ij}^{(\ell-1)},
\end{equation}
with
$Z_i=\sum_{j:A_{ij} = 0}\mathbb E\!\left[\alpha_{ij}^{\mathcal G'}\right]$. On existing edges ($A_{ij}=1$), RADE-OF rescales messages so that each edge’s expected contribution matches the coefficient $\alpha_{ij}^{\mathcal G}$.
On non-edges ($A_{ij}=0$), it centers added messages by $\boldsymbol{\mu}_i^{(\ell-1)}$, so that the total added contribution is zero in expectation.
\begin{proposition}[Expectation-Preservation in RADE-OF]
\label{prop:rade-of-unbiased}
Under the RADE perturbation~\eqref{eq:rade-perturb}, any weighted-sum aggregation (Def. \ref{def:weighted-sum}) corrected by RADE-OF rule~\eqref{eq:rade-of-correction} is expectation-preserving (Def.~\ref{def:EP}).
\end{proposition}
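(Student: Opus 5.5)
The proof is a direct computation of the conditional expectation of the corrected aggregation~\eqref{eq:rade-corrected-agg} given $\mathbf H^{(\ell-1)}$. The key observation is that, conditional on $\mathbf H^{(\ell-1)}$, the messages $\mathbf m_{ij}^{(\ell-1)}$ are fixed. So are the centering vectors $\boldsymbol\mu_i^{(\ell-1)}$, since by~\eqref{eq:rade-of-mu} they depend only on the messages, on the input graph, and on the deterministic quantities $\mathbb E[\alpha_{ij}^{\mathcal G'}]$. Likewise the rescaling factors $\alpha_{ij}^{\mathcal G}/\mathbb E[\alpha_{ij}^{\mathcal G'}]$ are deterministic. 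The only randomness therefore sits in the coefficients $\alpha_{ij}^{\mathcal G'}$ induced by the perturbation~\eqref{eq:rade-perturb}.

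The expectations are taken conditional on $\mathbf H^{(\ell-1)}$. This matters for attention-type coefficients, which depend on representations. Conditioning makes them functions of $\mathbf A'$ alone.

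By linearity of expectation, I split the sum over $j\neq i$ into existing edges $\{j:A_{ij}=1\}$ and non-edges $\{j:A_{ij}=0\}$.

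\textbf{Existing edges.} Each term equals $\alpha_{ij}^{\mathcal G'}\,\frac{\alpha_{ij}^{\mathcal G}}{\mathbb E[\alpha_{ij}^{\mathcal G'}]}\mathbf m_{ij}^{(\ell-1)}$. Its expectation is exactly $\alpha_{ij}^{\mathcal G}\mathbf m_{ij}^{(\ell-1)}$, so these terms sum to $\sum_{j:A_{ij}=1}\alpha_{ij}^{\mathcal G}\mathbf m_{ij}^{(\ell-1)}$.

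\textbf{Non-edges.} The expectation of their contribution is
\[
\sum_{j:A_{ij}=0}\mathbb E[\alpha_{ij}^{\mathcal G'}]\,\mathbf m_{ij}^{(\ell-1)}-Z_i\boldsymbol\mu_i^{(\ell-1)}.
\]
By the definitions of $\boldsymbol\mu_i^{(\ell-1)}$ and $Z_i$, this is identically zero.

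\textbf{Conclusion.} Definition~\ref{def:weighted-sum} gives $\alpha_{ij}^{\mathcal G}=0$ whenever $A_{ij}=0$ for the aggregators considered, such as GIN, GAT and GCN, where the coefficient carries a factor $A_{ij}$. Hence the inference aggregation $\mathbf a_i^{(\mathcal G,\ell)}$ equals the sum over existing edges. This yields~\eqref{eq:unbiased-augmentation} for every $i$ and $\ell$. I will state explicitly the mild assumption that $\alpha^{\mathcal G}_{ij}$ vanishes on non-edges, since it is implicit in the weighted-sum definition restricted to neighbors.

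\textbf{Main obstacle: well-definedness.} The correction requires $\mathbb E[\alpha_{ij}^{\mathcal G'}]>0$ for existing edges, and $Z_i>0$ whenever the non-edge sum is nonempty. I would handle this with conventions: assume $p<1$ and nonnegative coefficients that are positive on present edges. If $q=0$ or $i$ has no non-edges, the non-edge terms vanish almost surely and can be dropped.

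A second subtlety is that $\alpha^{\mathcal G'}_{ij}$ for GCN or GAT depends on the whole random neighborhood, through degrees or the softmax. I will stress that the argument never requires computing $\mathbb E[\alpha_{ij}^{\mathcal G'}]$ in closed form or using independence across edges. It only uses linearity of expectation and the fact that the correction factors are deterministic given $\mathbf H^{(\ell-1)}$. This is what makes the result hold for any weighted-sum aggregator.
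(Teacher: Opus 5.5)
Your proposal is correct and matches the paper's proof essentially step for step. The paper likewise splits the conditional expectation into existing-edge and non-edge sums, and handles the existing-edge sum by linearity with the deterministic rescaling factor (its existing-edge correction lemma). It removes the non-edge sum via the weighted-centering identity $\sum_{j:A_{ij}=0}\mathbb E[\alpha_{ij}^{\mathcal G'}](\mathbf m_{ij}^{(\ell-1)}-\boldsymbol\mu_i^{(\ell-1)})=\mathbf 0$ and concludes using $\alpha_{ij}^{\mathcal G}=0$ on non-edges. Your well-definedness conditions correspond to the hypotheses $\mathbb E[\alpha_{ij}^{\mathcal G'}]\neq 0$ and $Z_i>0$ that the paper builds into its two lemmas.
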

Thus, RADE-OF preserves each layer’s message aggregation in expectation while still injecting stochasticity through the realized perturbations---the setting that underlies the variance-based regularization view, see Eq.~\eqref{eq:reg}.

\noindent\textbf{RADE-OFS.} When both overfitting and over-squashing are concerns, RADE-OFS uses stochastic edge deletion and addition during training, while modifying the inference-time aggregation to retain the effect of edge addition for mitigating over-squashing. Accordingly, its train-inference alignment target is a \emph{modified inference-time aggregation} rather than the original input-graph aggregation.

During training, both random deletion and addition introduce stochasticity for regularization. The correction is designed so that, in expectation, the effect of random deletion is canceled, and deletion serves only as a source of stochastic regularization. The corrected messages are
\begin{equation}
\label{eq:rade-ofs-train}
\widetilde{\mathbf m}_{ij}^{(\ell-1)}
=
\begin{cases}
\displaystyle
\frac{\alpha_{ij}^{\mathcal G}}
{\mathbb E\!\left[\alpha_{ij}^{\mathcal G'}\right]}
\,\mathbf m_{ij}^{(\ell-1)}, & A_{ij}=1\\
\mathbf m_{ij}^{(\ell-1)}, & A_{ij}=0
\end{cases}
\end{equation}
Note that the effect of random edge addition is retained during training and later realized through inference-time aggregation, enabling the resulting densification to improve long-range communication. In inference, RADE-OFS aligns the corrected training-time aggregation, in expectation, with the modified aggregation of
\begin{equation}
\label{eq:rade-ofs-infer}
\widehat{\mathbf a}_i^{(\mathcal G,\ell)}
=
\mathbf a_i^{(\mathcal G,\ell)}
+
\sum_{j:A_{ij} = 0}
\mathbb E\!\left[\alpha_{ij}^{\mathcal G'}\right]\,
\mathbf m_{ij}^{(\ell-1)},
\end{equation}
which adds the deterministic expected contribution of non-edges and thereby creates additional communication paths that can mitigate over-squashing.

\begin{proposition}[Expectation-Preservation in RADE-OFS]
\label{prop:rade-ofs-match}
Under the RADE perturbation~\eqref{eq:rade-perturb}, any weighted-sum aggregation (Def. \ref{def:weighted-sum}) corrected by the RADE-OFS rule~\eqref{eq:rade-ofs-train} is expectation-preserving with respect to the modified inference-time aggregation in Eq.~\eqref{eq:rade-ofs-infer}.
\end{proposition}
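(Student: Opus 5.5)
The proof is a direct linearity-of-expectation computation, carried out node by node and layer by layer, conditioning throughout on $\mathbf H^{(\ell-1)}$ as in Definition~\ref{def:EP}. Under this conditioning the messages $\mathbf m_{ij}^{(\ell-1)}$ are fixed. The inference-graph coefficients $\alpha_{ij}^{\mathcal G}$ are deterministic, and so are the expectations $\mathbb E[\alpha_{ij}^{\mathcal G'}]$. The corrected messages in Eq.~\eqref{eq:rade-ofs-train} are therefore deterministic, and the only randomness comes from the coefficients $\alpha_{ij}^{\mathcal G'}$ induced by the perturbation~\eqref{eq:rade-perturb}. As in the RADE-OF case (Proposition~\ref{prop:rade-of-unbiased}), I will assume $\mathbb E[\alpha_{ij}^{\mathcal G'}]>0$ whenever $A_{ij}=1$, so that the rescaling is well defined.

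\textbf{Step 1: split the sum.} Starting from Eq.~\eqref{eq:rade-corrected-agg}, I write
\[
\mathbb E\big[\widetilde{\mathbf a}_i^{(\mathcal G',\ell)}\mid \mathbf H^{(\ell-1)}\big]
=\sum_{j:A_{ij}=1}\mathbb E\big[\alpha_{ij}^{\mathcal G'}\big]\,\widetilde{\mathbf m}_{ij}^{(\ell-1)}
+\sum_{j\neq i:A_{ij}=0}\mathbb E\big[\alpha_{ij}^{\mathcal G'}\big]\,\widetilde{\mathbf m}_{ij}^{(\ell-1)}.
\]
This uses linearity of expectation and the fact that each $\widetilde{\mathbf m}_{ij}^{(\ell-1)}$ is fixed given $\mathbf H^{(\ell-1)}$.

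\textbf{Step 2: the edge terms.} For an existing edge, substituting the rescaling $\alpha_{ij}^{\mathcal G}/\mathbb E[\alpha_{ij}^{\mathcal G'}]$ makes the expectation cancel, and each term becomes $\alpha_{ij}^{\mathcal G}\mathbf m_{ij}^{(\ell-1)}$. Because $\alpha_{ij}^{\mathcal G}$ is a weighted-sum coefficient on the input graph, it vanishes whenever $A_{ij}=0$ (for GIN, GCN and GAT it carries the factor $A_{ij}$). The first sum therefore equals $\sum_{j\neq i}\alpha_{ij}^{\mathcal G}\mathbf m_{ij}^{(\ell-1)}=\mathbf a_i^{(\mathcal G,\ell)}$ by Definition~\ref{def:weighted-sum}.

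\textbf{Step 3: the non-edge terms.} For non-edges the message is left uncorrected, so the second sum is exactly $\sum_{j:A_{ij}=0}\mathbb E[\alpha_{ij}^{\mathcal G'}]\mathbf m_{ij}^{(\ell-1)}$. This is the added term in Eq.~\eqref{eq:rade-ofs-infer}. Adding the two sums gives $\widehat{\mathbf a}_i^{(\mathcal G,\ell)}$, which holds for every $i$ and $\ell$.

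\textbf{Main obstacle.} The delicate point is justifying that the expectation can pass through the product $\alpha_{ij}^{\mathcal G'}\widetilde{\mathbf m}_{ij}^{(\ell-1)}$. For structure-only coefficients (sum aggregation, or GCN degree normalization) this is immediate once we condition on $\mathbf H^{(\ell-1)}$. For attention, $\alpha_{ij}^{\mathcal G'}$ also depends on $\mathbf H^{(\ell-1)}$ and on the sampled neighborhood, for example through softmax normalization. I would handle this by interpreting every expectation as conditional on $\mathbf H^{(\ell-1)}$, so that $\mathbb E[\alpha_{ij}^{\mathcal G'}]$ means $\mathbb E[\alpha_{ij}^{\mathcal G'}\mid\mathbf H^{(\ell-1)}]$; the messages are then fixed and only $\alpha_{ij}^{\mathcal G'}$ is random. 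With that convention, the argument holds for any weighted-sum aggregator. No independence between edges is needed, since the argument uses only linearity of expectation.
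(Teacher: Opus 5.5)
Your proposal is correct and follows essentially the same route as the paper: condition on $\mathbf H^{(\ell-1)}$, split the corrected aggregation into existing-edge and non-edge sums, and cancel the rescaling on the edge terms. The paper isolates that cancellation as its existing-edge correction lemma, with the same nonzero-expectation assumption, and then reads off the uncorrected non-edge sum as the extra term in Eq.~\eqref{eq:rade-ofs-infer}, using only linearity of expectation as you do.
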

Thus, in RADE-OFS, the expected training-time aggregation matches the modified inference-time aggregation rather than the input-graph aggregation. Both random drop and add regularize training, while the drop effect is corrected in expectation, but the addition effect is retained at the inference aggregation rule to improve long-range communication.

\begin{table}[t]
\centering
\small
\setlength{\tabcolsep}{4.5pt}
\renewcommand{\arraystretch}{1.15}
\caption{Comparison of graph augmentation settings for mitigating overfitting (OF), over-squashing (OS), or both (OFS).}
\label{tab:augmentation_regimes}
\begin{tabularx}{\linewidth}{@{}l cc c c@{}}
\toprule
& \multicolumn{2}{c}{\textbf{Graph Used}} & \multicolumn{2}{c}{\textbf{Aggregation Correction}} \\
\cmidrule(lr){2-3} \cmidrule(lr){4-5} 
\textbf{Goal} & \textbf{Training} & \textbf{Inference} &  \textbf{Training} & \textbf{Inference} \\
\midrule
OF
& Stochastic~~~~~ $\mathcal{G'}$
& $\mathcal{G}$
& Yes
& No \\

OS
& Deterministic $\mathcal{G'}$
& ~$\mathcal{G'}$
& No
& No \\

OFS
& Stochastic~~~~~ $\mathcal{G'}$
& $\mathcal{G}$
& Yes
& Yes \\
\bottomrule
\end{tabularx}
\end{table}

Table~\ref{tab:augmentation_regimes} summarizes three augmentation settings and highlights where RADE-OF and RADE-OFS fit.
RADE-OF corresponds to the augmentation settings for mitigating overfitting (OF), where stochastic perturbed views are used during training, inference uses the input graph, and training-time aggregation correction is required to remove train-inference mismatch. RADE-OFS corresponds to the setting for joint overfitting and over-squashing (OFS) mitigation, where training uses stochastic perturbed views, and inference remains on the input graph. However, both training-time and inference-time aggregations are corrected differently: the effect of edge removal (in expectation) is corrected away in training, while the expected effect of edge addition is kept in training and carried into inference through the modified inference-time aggregation. We note that rather than modifying/correcting the aggregation in inference, an alternative is to average predictions over multiple randomly perturbed graphs at inference. This would align training and inference through the same stochastic mechanism while defining an ensemble predictor over perturbed graphs. We avoid this approach for computational efficiency.  

\noindent \textbf{Computing Expectation Terms in RADE.} For a weighted-sum backbone, RADE requires message-correction rules for aggregation alignment (see, e.g., Eqs.~\ref{eq:rade-of-correction}--\ref{eq:rade-ofs-infer}). These rules require expectation terms involving perturbed aggregation coefficients, such as $\mathbb E[\alpha_{ij}^{\mathcal G'}]$. When tractable, these terms can be derived analytically; when exact derivation is difficult, they can be approximated analytically or estimated empirically by averaging realized coefficients over sampled perturbed views. Below, we provide analytic expressions or approximations for three common aggregators.

\noindent \textit{Sum aggregation (GIN-style).} Here, $\alpha_{ij}^{\mathcal G}=A_{ij}$ and $\alpha_{ij}^{\mathcal G'}=A'_{ij}$, so $\mathbb E[\alpha_{ij}^{\mathcal G'}]=1-p$ on edges and $q$ on non-edges. Thus, in RADE-OF, the correction reduces to scalar rescaling on neighbors together with centering over non-neighbors, while in RADE-OFS, neighbors are rescaled in the same way, and non-neighbor contributions are retained and reintroduced at inference through Eq.~\eqref{eq:rade-ofs-infer}.

\noindent \textit{Symmetric normalization (GCN-style).} Here, $\alpha_{ij}^{\mathcal G}=A_{ij}(d_i d_j)^{-\frac{1}{2}}$ and $\alpha_{ij}^{\mathcal G'}=A'_{ij}(d_i' d_j')^{-\frac{1}{2}}$, where $d_i'$ and $d_j'$ are perturbed degrees. In this case, the required expectation terms depend on random degrees and are harder to derive exactly. Moreover, they vary across node pairs and therefore do not reduce to a global scaling factor. In Appendix~\ref{APP:GIN/GCN_Unbiased}, we derive analytic approximations via mixed-binomial delta-method expansions. More generally, these terms can also be estimated empirically from sampled perturbed views.

\noindent \textit{Attention normalization (GAT-style).}
For attention aggregation, $\alpha_{ij}^{\mathcal G}=\frac{A_{ij}\exp(\beta_{ij})}{\sum_{k\neq i}A_{ik}\exp(\beta_{ik})}$, where $\beta_{ij}$ is raw attention score of $i$ to $j$.
Conditional on the previous-layer representations, the raw attention scores are
fixed, but the graph perturbation changes both the numerator and the
denominator. Also, \(\mathbb E[\alpha_{ij}^{\mathcal G'}]\) is pair-dependent. In Appendix~\ref{APP:GIN/GCN_Unbiased}, we present an approximation method for expectation terms combining the delta-method expansion and sampling method. 

\noindent \textbf{Logit Variance.}
As seen in Eq.~\eqref{eq:reg}, the regularization effect of stochastic graph perturbation is controlled by the logit variance $\mathrm{Var}(\delta_i)$. Thus, to understand how RADE regularizes, we analyze $\mathrm{Var}(\delta_i)$ under RADE's perturbation. 

\begin{proposition}[Logit perturbation variance]
\label{prop:logit-variance}
Assuming for each node $i$ the stochastic aggregation coefficients $\alpha_{ij}^{\mathcal{G'}}$ are mutually independent across $j$,\footnote{This independence assumption holds when the randomness entering the coefficients is separate across $j$ (e.g., sum aggregation), and is an approximation for degree-normalized coefficients where they share randomness through perturbed degrees.} then 
\begin{equation}
\label{eq:logit-variance}
\mathrm{Var}(\delta_i)
=
\sum_{j\neq i}
\widetilde m_{ij}^{\,2}\,
\mathrm{Var}\!\left(\alpha_{ij}^{\mathcal G'}\right).
\end{equation}
Multiclass results hold coordinate-wise.
\end{proposition}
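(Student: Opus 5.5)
We work in the linear, scalar-logit setting behind Eq.~\eqref{eq:reg}. Condition on the previous-layer representations $\mathbf H^{(\ell-1)}$, so every message is a fixed quantity. Under this conditioning, the corrected messages $\widetilde{\mathbf m}_{ij}^{(\ell-1)}$ of Eq.~\eqref{eq:rade-of-correction} or Eq.~\eqref{eq:rade-ofs-train} are deterministic. This holds for the centering term $\boldsymbol\mu_i^{(\ell-1)}$ as well, because it involves only expectations $\mathbb E[\alpha_{ij}^{\mathcal G'}]$. We also take the readout to be the linear map $s_i=\mathbf w^\top \mathbf a_i$ and absorb $\mathbf w$ into the messages, so $\widetilde m_{ij}$ denotes the scalar $\mathbf w^\top\widetilde{\mathbf m}_{ij}$. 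Coordinate-wise, this is exactly the statement for each output logit, which gives the multiclass claim.

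\textbf{Step 1: write the logit as a random linear combination.} From Eq.~\eqref{eq:rade-corrected-agg}, the perturbed logit is $\tilde s_i=\sum_{j\neq i}\alpha_{ij}^{\mathcal G'}\widetilde m_{ij}$. The only randomness sits in the coefficients $\alpha_{ij}^{\mathcal G'}$, which depend on the sampled $\mathbf A'$ and, conditionally, on nothing else.

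\textbf{Step 2: identify $\delta_i$ and its mean.} Define $\delta_i=\tilde s_i-\mathbb E[\tilde s_i]$. By Proposition~\ref{prop:rade-of-unbiased}, $\mathbb E[\tilde s_i]=s_i$ for RADE-OF. By Proposition~\ref{prop:rade-ofs-match}, $\mathbb E[\tilde s_i]$ is the modified inference logit for RADE-OFS. In either case $\delta_i$ is mean-zero and equals $\sum_j(\alpha_{ij}^{\mathcal G'}-\mathbb E[\alpha_{ij}^{\mathcal G'}])\widetilde m_{ij}$. Since adding a deterministic shift does not change the variance, $\mathrm{Var}(\delta_i)=\mathrm{Var}(\tilde s_i)$.

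\textbf{Step 3: expand the variance of the sum.} We have
\[
\mathrm{Var}\Big(\sum_j \alpha_{ij}^{\mathcal G'}\widetilde m_{ij}\Big)=\sum_j \widetilde m_{ij}^2\mathrm{Var}(\alpha_{ij}^{\mathcal G'})+\sum_{j\neq k}\widetilde m_{ij}\widetilde m_{ik}\mathrm{Cov}(\alpha_{ij}^{\mathcal G'},\alpha_{ik}^{\mathcal G'}).
\]
The assumed mutual independence across $j$ gives zero pairwise covariances, which yields Eq.~\eqref{eq:logit-variance}. For sum aggregation the independence is genuine: the $A'_{ij}$ are independent Bernoulli variables, and the symmetry constraint $A'_{ji}=A'_{ij}$ only couples pairs with different row index.

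The main point needing care is not the algebra. It is making sure that $\widetilde m_{ij}$ is indeed deterministic given $\mathbf H^{(\ell-1)}$. In particular, the correction factors $\alpha_{ij}^{\mathcal G}/\mathbb E[\alpha_{ij}^{\mathcal G'}]$ and $\boldsymbol\mu_i$ must not depend on the realized $\mathbf A'$; they depend only on $\mathcal G$, $p$, and $q$. We also have to state that the result is a single-layer, conditional statement. For multi-layer or nonlinear networks, randomness propagates through $\mathbf H^{(\ell-1)}$ itself, and the formula is then the per-layer approximation consistent with the footnote to Eq.~\eqref{eq:reg}.
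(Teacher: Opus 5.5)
Your proof is correct and follows the paper's argument: condition on the last-layer representations so that the corrected scalar contributions $\widetilde m_{ij}$ are deterministic, note that subtracting a deterministic logit leaves the variance unchanged, and use independence across $j$ to drop the cross-covariance terms. Centering $\delta_i$ at $\mathbb E[\tilde s_i]$ rather than at $s_i$ (which makes $\delta_i$ mean-zero for RADE-OFS too) and writing out the covariance expansion are minor refinements that do not change the result.
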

This shows that the variability in aggregation coefficients $\alpha_{ij}^{\mathcal{G'}}$---induced by graph perturbation---determines logit variance $\mathrm{Var}(\delta_i)$, which induces the implicit regularization.\footnote {Proposition~\ref{prop:logit-variance} extends to graph classification, assuming linear graph pooling and prediction head.}

From Proposition~\ref{prop:logit-variance}, one can decompose $\mathrm{Var}(\delta_i)$ into two additive terms with disjoint support: one over neighbors and one over non-neighbors. These two terms isolate how deletion and addition contribute to the regularizer, through variability induced by edges and non-edges. This separability of addition and drop terms also lets us study the relationship between deletion and addition: whether or not they induce interchangeable regularization effects.

\noindent \textbf{Add-Drop Non-interchangeability}. We examine whether each primitive (e.g., Add-only or Drop-only) can subsume the other to understand their interchangeability. 
\begin{proposition}[Restricted interchangeability of Drop-only and Add-only]
\label{prop:subsumption}
For sum aggregation\footnote{We focus on sum aggregation because the variance separates into a rate-dependent scalar and a node-wise message statistic, yielding a simple condition for comparing Drop- and Add-only.}, 
Drop-only and Add-only only subsume each other by matching their node-wise variances under a restrictive setting in which there should exist a constant $\kappa>0$ such that $\sum_{j:A_{ij}=0}\widetilde m_{ij}^{\,2}=\kappa\sum_{j:A_{ij}=1}m_{ij}^{2}$ for every node $i$. 
\end{proposition}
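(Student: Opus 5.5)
We specialize Proposition~\ref{prop:logit-variance} to sum aggregation, where $\alpha_{ij}^{\mathcal G'}=A'_{ij}$ and the independence assumption holds exactly. The argument is a direct variance computation for each primitive, followed by a comparison of the two node-wise variance profiles.

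\textbf{Drop-only} ($q=0$). On an edge we have $A'_{ij}\sim\mathrm{Bernoulli}(1-p)$, and the RADE correction rescales by $1/(1-p)$, so $\widetilde m_{ij}=m_{ij}/(1-p)$. Non-edges contribute nothing. Hence
\[
\mathrm{Var}_{\mathrm{D}}(\delta_i)=\frac{p(1-p)}{(1-p)^2}\sum_{j:A_{ij}=1} m_{ij}^2=\frac{p}{1-p}\,S_i^{E},
\qquad S_i^{E}:=\sum_{j:A_{ij}=1}m_{ij}^2 .
\]

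\textbf{Add-only} ($p=0$). Edges are deterministic and contribute zero variance. On non-edges $A'_{ij}\sim\mathrm{Bernoulli}(q)$ with corrected messages $\widetilde m_{ij}$: these are centered for RADE-OF and raw for RADE-OFS. Conditional on $\mathbf H^{(\ell-1)}$, $\mu_i$ is deterministic, so Eq.~\eqref{eq:logit-variance} applies and gives
\[
\mathrm{Var}_{\mathrm{A}}(\delta_i)=q(1-q)\,S_i^{N},
\qquad S_i^{N}:=\sum_{j:A_{ij}=0}\widetilde m_{ij}^{\,2}.
\]
In both cases the variance factors into a rate-dependent scalar times a node-wise statistic.

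\textbf{Subsumption condition.} One primitive subsumes the other if, for some choice of its rate, it reproduces the other's variance at every node $i$. Because Eq.~\eqref{eq:reg} weights each node's variance by $z_i(1-z_i)$, matching must hold node-wise. It is not enough to match the total variance.

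Fix $p\in(0,1)$ and suppose some $q\in(0,1)$ satisfies $q(1-q)S_i^N=\frac{p}{1-p}S_i^E$ for all $i$. Then $S_i^N=\kappa S_i^E$ for every $i$, with
\[
\kappa=\frac{p}{(1-p)\,q(1-q)}>0,
\]
a constant independent of $i$. Conversely, if such a $\kappa$ exists, $p$ and $q$ can be chosen to realize it, subject to range constraints: $q(1-q)\le 1/4$, while $p/(1-p)$ ranges over $(0,\infty)$. So the map from $q$ to $p$ always succeeds. The map from $p$ to $q$ requires $p/(1-p)\le\kappa/4$, which we note as a further restriction. Either direction of subsumption therefore forces the proportionality condition in the statement.

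We expect no real obstacle. The one point needing care is to state precisely what ``subsume'' means: node-wise equality of $\mathrm{Var}(\delta_i)$ for all $i$, justified by the node-specific weights in Eq.~\eqref{eq:reg}. We should also note that this condition fails generically, for example when some node has $S_i^E=0$ but $S_i^N>0$, or when the ratio $S_i^N/S_i^E$ varies across nodes. That is what makes the condition ``restrictive''.
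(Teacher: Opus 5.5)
Your proposal is correct and follows the paper's proof. You compute the Drop-only variance $\frac{p}{1-p}\sum_{j:A_{ij}=1}m_{ij}^2$ and the Add-only variance $q(1-q)\sum_{j:A_{ij}=0}\widetilde m_{ij}^{\,2}$, then show that node-wise matching with a single pair of global rates forces the node-independent ratio $\kappa$. Dividing by the rate factor $q(1-q)>0$ rather than by the edge statistic is a slightly cleaner way to avoid the paper's caveat about nonzero denominators, and your range-constraint remark on the converse goes beyond the necessity direction the paper proves.
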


The proposition identifies major barriers to exact interchangeability. Drop-only and Add-only act on different node-wise statistics. Drop-only scales the existing-edge statistic $\sum_{j:A_{ij}=1} m_{ij}^2$, whereas Add-only scales a non-edge statistic: $\sum_{j:A_{ij}=0}(m_{ij}-\mu_i)^2$ for RADE-OF and $\sum_{j:A_{ij}=0}m_{ij}^2$ for RADE-OFS. Thus, a single global addition rate can match a Drop-only perturbation, node by node, for all nodes only when these two statistics are proportional with the same constant for every node in the graph.\footnote{Our theory establishes the limited statement that Drop-only and Add-only are generally non-interchangeable. It does not characterize the full set of variance profiles attainable by joint add-drop perturbations, which we leave to future work.} Our ablation studies confirm that add and drop are not only non-interchangeable, but also complementary in practice. 

\noindent \textbf{Adaptive Control of Regularization Strength.}
We have shown that RADE regularizes through the variance of the logit perturbation by perturbing the graph. The remaining question is \emph{how strongly} RADE should regularize during training, which is controlled by perturbation hyperparameters $p$ and $q$ for deletions and additions. This is a practical issue in stochastic graph augmentation: if the perturbation is too weak, its regularization effect is negligible; if it is too strong, the sampled views may deviate excessively from the input graph and distort the message-passing structure. Existing augmentation methods typically handle this trade-off through dataset-specific tuning of perturbation hyperparameters. In contrast, we propose a principled rule that adapts the deletion and addition rates during training and reduces reliance on costly rate sweeps.

We draw inspiration from GradNorm~\cite{chen2018gradnorm}, which balances multiple objectives by matching gradient norms on shared parameters. We treat RADE's variance-based regularization effect as an (implicit) objective and match its gradient scale to that of the supervised loss by adjusting p and q. For a mini-batch $B$, define 
\begin{equation}
\label{eq:rade-reg-proxy}
R(B,p,q)
=
\frac{1}{2}
\sum_{i\in B}
z_i(1-z_i)\,\mathrm{Var}(\delta_i),
\end{equation}
as the RADE regularization proxy. We measure the gradient magnitudes of the supervised loss $L_{\mathrm{data}}$ and the RADE regularization proxy on shared parameters $\boldsymbol{\theta}$:
$G_{\mathrm{data}}^{B}=\left\|\nabla_{\boldsymbol{\theta}} L_{\mathrm{data}}(B)\right\|_2$, and 
$G_{\mathrm{reg}}^{B}(p,q)=\left\|\nabla_{\boldsymbol{\theta}}R(B,p,q)\right\|_2$.
The deletion and addition rates are then updated by minimizing
\begin{equation}
\label{eq:gradnorm-rate-objective}
\mathcal{J}(p,q)
\!=\!
\left[
\log
\left(
\frac{G_{\mathrm{reg}}^{B}(p,q)+\epsilon}
     {G_{\mathrm{data}}^{B}+\epsilon}
\right)
\right]^2 \!+ \lambda\!\left(\frac{q}{D(\mathcal{G})}\right)^2.
\end{equation}
The first term is the gradient-matching objective: it is minimized when the RADE regularization and the supervised loss induce comparable gradient magnitudes on the shared parameters. The logarithmic ratio matches relative scale rather than absolute difference, and $\epsilon>0$ stabilizes the objective when either gradient norm is zero. 
The second term penalizes excessive edge addition. Since $q$ is an addition probability over many non-edges, even a small $q$ can add many edges, thus changing the sparse regime of the input graph to dense. Normalizing by the graph density\footnote{Graph density is the fraction of the number of edges to all possible edges, measuring the extent of sparsity.} $D({\mathcal G})$ makes this penalty depend on the addition rate relative to the original graph density, thus discouraging densification and preserving the inductive bias of the original input graph. The coefficient $\lambda$ controls the density-normalized penalty of the second term: larger values make the model more conservative in adding edges. To avoid introducing dataset-specific hyperparameters, we fix $\lambda$ at the task level (e.g., graph or node classification) rather than tuning it per dataset; dataset-specific adaptation is through the GradNorm rate updates.\footnote{For optimizing $\mathcal J$, we reparameterize the addition rate $q$ by setting $\rho=q/D(\mathcal G)$ and optimize $\rho$ instead, mapping back via $q=\rho D(\mathcal G)$ when evaluating $G_{\mathrm{reg}}^{B}(p,q)$ or sampling edges. This doesn't change the objective or perturbation rule; it only rescales the optimization coordinate through a variable change \cite{boyd2004convex}. Since even a small change in the raw non-edge probability $q$ can induce substantial densification, optimizing $\rho$ instead makes updates relative to the graph density.}

In practice, we take one optimization step on $\mathcal{J}(p,q)$ after each mini-batch, and use the updated rates $(p,q)$ to sample the perturbation for the next mini-batch. This makes RADE \textit{adaptive} throughout training: the perturbation rates evolve during the optimization rather than being fixed hyperparameters. The update increases perturbation strength when the regularization gradient is too weak, and decreases it when the regularization gradient becomes dominant. This directly addresses a key limitation of stochastic graph augmentation: its sensitivity to manually-chosen fixed perturbation rates. Further implementation details are in Appendix~\ref{app:pq-selection}.

\newcolumntype{Y}{>{\centering\arraybackslash}X}

\begin{table*}[t]
\centering
\caption{Node classification results (mean$\pm$std over five runs) with GCN backbone. Accuracy (\%) on all datasets except Flickr with micro-F1 (\%). Best in bold and second underlined. Green and red shading show improvement and degradation over the GCN baseline.}
\vspace{-4pt}
\small
\setlength{\tabcolsep}{0.2pt}
\renewcommand{\arraystretch}{1.15}

\begin{tabularx}{\textwidth}{l*{8}{Y}}
\toprule
\textbf{Method} &
\textbf{Cora} & \textbf{CiteSeer} & \textbf{PubMed} & \textbf{CS} &
\textbf{Computer} & \textbf{Physics} & \textbf{Flickr} & \textbf{ogbn-arxiv} \\
\midrule

GCN & $80.10 \pm 0.64$ & $69.12 \pm 1.10$ & $77.49 \pm 0.42$ & $92.97 \pm 0.18$ & $89.63 \pm 0.24$ & $96.52 \pm 0.05$ & $51.89 \pm 0.16$ & $70.51 \pm 0.14$ \\

Dropout
& \improve{$80.76 \pm 0.85$}
& \improve{$70.04 \pm 0.58$}
& \improve{$77.70 \pm 0.72$}
& \degrade{$92.92 \pm 0.10$}
& \improve{$89.71 \pm 0.27$}
& \improve{$\underline{96.54 \pm 0.04}$}
& \improve{$52.02 \pm 0.10$}
& \improve{$70.77 \pm 0.42$} \\

DropEdge
& \improve{$80.28 \pm 0.34$}
& \improve{$69.42 \pm 1.16$}
& \improve{$77.80 \pm 0.38$}
& \degrade{$92.10 \pm 0.27$}
& \improve{$89.79 \pm 0.31$}
& \degrade{$96.48 \pm 0.06$}
& \improve{$52.08 \pm 0.10$}
& \degrade{$70.38 \pm 0.27$} \\

DropNode
& \improve{$80.80 \pm 0.55$}
& \degrade{$68.42 \pm 1.43$}
& \improve{$77.68 \pm 0.64$}
& \degrade{$92.96 \pm 0.14$}
& \improve{$89.63 \pm 0.22$}
& \degrade{$96.46 \pm 0.06$}
& \improve{$52.07 \pm 0.12$}
& \improve{$70.55 \pm 0.32$} \\

DropMessage
& \improve{$80.65 \pm 1.01$}
& \degrade{$67.94 \pm 1.82$}
& \improve{$77.57 \pm 0.40$}
& \degrade{$91.57 \pm 0.33$}
& \degrade{$89.06 \pm 0.64$}
& \degrade{$96.50 \pm 0.05$}
& \improve{$\underline{52.23 \pm 0.13}$}
& \degrade{$70.45 \pm 0.29$} \\

\midrule

RADE-OF
& \improve{$\textbf{81.08} \pm \textbf{1.06}$}
& \improve{$\textbf{70.20} \pm \textbf{1.31}$}
& \improve{$\underline{78.04 \pm 0.80}$}
& \improve{$\textbf{93.26} \pm \textbf{0.72}$}
& \improve{$\textbf{90.22} \pm \textbf{0.14}$}
& \improve{$\textbf{96.58} \pm \textbf{0.10}$}
& \improve{$\textbf{52.25} \pm \textbf{0.25}$}
& \improve{$\textbf{71.09} \pm \textbf{0.26}$} \\

RADE-OFS
& \improve{$\underline{80.82 \pm 1.24}$}
& \improve{$\underline{70.12 \pm 2.25}$}
& \improve{$\textbf{78.24} \pm \textbf{0.85}$}
& \improve{$\underline{93.21 \pm 0.15}$}
& \improve{$\underline{89.94 \pm 0.50}$}
& \improve{$96.53 \pm 0.03$}
& \improve{$51.92 \pm 0.12$}
& \improve{$\underline{70.95 \pm 0.19}$} \\

\bottomrule
\end{tabularx}
\label{tab:nc_results_main}
\end{table*}

\begin{table*}[t]
\centering
\caption{Graph classification results (mean$\pm$std) with GCN backbone. Accuracy (\%) on TU, ROC-AUC (\%) on ogbg-molhiv, and AP (\%) on peptides-func. Best in bold and second underlined. Green/red shading shows improvements/degradations vs. GCN baseline.}
\vspace{-4pt}
\label{tab:graph_classification_results}
\small
\setlength{\tabcolsep}{4pt}
\renewcommand{\arraystretch}{1.15}
\begin{tabularx}{\textwidth}{l*{6}{Y}}
\toprule
\textbf{Method} &
\textbf{MUTAG} & \textbf{PROTEINS} & \textbf{IMDB-B} & \textbf{IMDB-M} &
\textbf{ogbg-molhiv} & \textbf{peptides-func} \\
\midrule

GCN
& $84.00 \pm 8.00$
& $75.37 \pm 0.15$
& $75.70 \pm 3.13$
& $51.93 \pm 2.15$
& $75.21 \pm 1.24$
& $55.14 \pm 0.54$ \\

Dropout
& \degrade{$83.97 \pm 4.98$}
& \improve{$75.40 \pm 4.41$}
& \improve{$75.80 \pm 2.40$}
& \improve{$52.06 \pm 2.11$}
& \degrade{$74.00 \pm 0.94$}
& \degrade{$54.84 \pm 0.45$} \\

DropEdge
& \improve{$84.50 \pm 7.09$}
& \degrade{$75.29 \pm 2.74$}
& \degrade{$75.70 \pm 3.25$}
& \degrade{$51.60 \pm 2.84$}
& \improve{$76.01 \pm 0.54$}
& \improve{$55.29 \pm 0.88$} \\

DropNode
& \improve{$85.02 \pm 5.99$}
& \degrade{$75.01 \pm 4.29$}
& \improve{$76.20 \pm 4.12$}
& \improve{$\underline{52.20 \pm 3.04}$}
& \improve{$76.04 \pm 1.00$}
& \degrade{$54.21 \pm 0.76$} \\

DropMessage
& \improve{$86.08 \pm 7.39$}
& \improve{$75.45 \pm 3.80$}
& \degrade{$75.40 \pm 3.41$}
& \improve{$51.33 \pm 2.92$}
& \degrade{$74.21 \pm 1.07$}
& \degrade{$54.98 \pm 0.60$} \\
\midrule

RADE-OF
& \improve{$\underline{86.16 \pm 5.62}$}
& \improve{$\underline{75.62 \pm 1.86}$}
& \improve{$\underline{76.20 \pm 3.24}$}
& \improve{${52.09 \pm 2.21}$}
& \improve{$\underline{76.09 \pm 1.24}$}
& \improve{$\underline{56.12 \pm 0.66}$} \\

RADE-OFS
& \improve{$\textbf{86.24} \pm \textbf{6.79}$}
& \improve{${\textbf{75.84} \pm \textbf{3.44}}$}
& \improve{$\textbf{76.60} \pm \textbf{2.25}$}
& \improve{$\textbf{52.24} \pm \textbf{1.95}$}
& \improve{$\textbf{76.28} \pm \textbf{1.41}$}
& \improve{$\textbf{56.97} \pm \textbf{0.82}$} \\

\bottomrule
\end{tabularx}
\end{table*}

\begin{table*}[h]
\centering
\caption{Node-classification ablations of GradNorm (GN), Expectation-Preserving Correction (EPC), and nonlinearity for RADE-OF (mean$\pm$std over five runs) with GCN backbone. Accuracy (\%) is reported for all datasets except Flickr with micro-F1 (\%). RADE-OF-Lin and GCN-Lin have the linearized GCN backbone. Best is in bold.}
\label{tab:nc_rade_ablations}
\vspace{-4pt}
\small
\setlength{\tabcolsep}{2.7pt}
\renewcommand{\arraystretch}{1.10}
\begin{tabularx}{\textwidth}{lcccccccc}
\toprule
\textbf{Method} &
\textbf{Cora} & \textbf{CiteSeer} & \textbf{PubMed} & \textbf{CS} &
\textbf{Computer} & \textbf{Physics} & \textbf{Flickr} & \textbf{ogbn-arxiv} \\
\midrule

RADE-OF
& $\!\textbf{81.08} _{\pm \textbf{1.06}}$
& $\!\textbf{70.20} _{\pm \textbf{1.31}}$
& $\!\textbf{78.04} _{\pm \textbf{0.80}}$
& $\!\textbf{93.26} _{\pm \textbf{0.72}}$
& $\!\textbf{90.22} _{\pm \textbf{0.14}}$
& $\!\textbf{96.58} _{\pm \textbf{0.10}}$
& $\!\textbf{52.25} _{\pm \textbf{0.25}}$
& $\!\textbf{71.09} _{\pm \textbf{0.26}}$ \\
RADE-OF w/o GN                   & $80.90_{\pm 1.32}$ & $70.05_{\pm 0.45}$ & $77.43_{\pm 0.88}$ & $93.10_{\pm 0.22}$ & $90.15_{\pm 0.32}$ & $96.53 _{\pm 0.12}$ & $51.42_{\pm 0.20}$ & $70.52_{\pm 0.35}$ \\
RADE-OF w/o GN \& EPC & $78.94_{\pm 0.50}$ & ${70.16}_{ \pm {1.10}}$ & $77.18_{ \pm 0.76}$ & $93.09_{\pm 0.25}$ & $89.04_{\pm 0.42}$ & $96.52_{\pm 0.12}$ & $49.35_{\pm 0.12}$ & $70.33_{\pm 0.32}$ \\
\midrule
RADE-OF-Lin                            & $80.42_{ \pm 0.54}$ & $68.76_{\pm 1.05}$ & $76.96_{ \pm 0.52}$ & $93.22_{ \pm 0.13}$ & $89.32_{\pm 0.30}$ & $96.47_{\pm 0.05}$ & $50.65_{ \pm 0.18}$ & $69.92_{\pm 0.35}$ \\
RADE-OF-Lin w/o GN               & $80.06_ {\pm 0.75}$ & $68.44_{\pm 0.72}$ & $76.84_{\pm 1.25}$ & $93.07_{ \pm 0.21}$ & $88.32_{\pm 0.21}$ & $96.45_{\pm 0.04}$ & $50.12_{\pm 0.25}$ & $69.05_{\pm 0.12}$ \\
RADE-OF-Lin w/o GN \& EPC               & $77.80_{\pm 0.50}$ & $68.91_{ \pm0.66}$ & $76.72_{\pm 1.01}$ & $92.72 _{\pm 0.26}$ & $88.25 _{\pm 0.25}$ & $96.45 _{\pm 0.03}$ & $50.09 _{\pm 0.29}$ & $68.72 _{\pm 0.18}$ \\
\midrule
GCN-Lin (SGC)               & $79.60_{\pm 0.25}$ & $68.62_{\pm 0.41}$ & $75.24 _{\pm0.65}$ & $93.01_{\pm 0.19}$ & $88.28_{\pm 0.31}$ & $96.44_{\pm 0.08}$ & $50.12_{\pm 0.26}$ & $68.67 _{\pm 0.32}$ \\
\bottomrule
\end{tabularx}
\end{table*}

\section{Experiments}
Our experiments evaluate RADE-OF and RADE-OFS on various node classification and graph classification tasks.\footnote{Code is available at \url{https://github.com/Danial-sb/RADE}}

\textbf{Datasets.}
For node classification, we evaluate on eight datasets: Cora, CiteSeer, and PubMed \cite{sen2008collective}; Computer, CS, and Physics \cite{shchur1811pitfalls}; Flickr \cite{zeng2019graphsaint}; and ogbn-arxiv \cite{hu2020open}.
We use public splits for Cora, CiteSeer, PubMed, Flickr, and ogbn-arxiv; random 60\%/20\%/20\% train/validation/test splits for Computer, CS, and Physics, following~\cite{luo2024classic}.
For graph classification, we evaluate on MUTAG, PROTEINS, IMDB-B, and IMDB-M from TU \cite{morris2020tudataset}, ogbg-molhiv from OGB \cite{hu2020open}, and peptides-func from LRGB \cite{dwivedi2022long}.
For TU datasets, we follow a 10-fold cross-validation protocol  \cite{xu2018powerful}; ogbg-molhiv and peptides-func use their official benchmark splits. Dataset statistics are in Appendix~\ref{app:experimental_details}.

\textbf{Baselines.}
We compare RADE to widely-used regularization and augmentation baselines: Dropout \cite{srivastava2014dropout}, DropEdge \cite{rong2019dropedge}, DropNode \cite{feng2020graph}, and DropMessage \cite{fang2023dropmessage}.
All methods use the same backbone and training pipeline: GCN, GIN, or GAT. Some results are deferred to Appendix~\ref{app:additional_experiments}.

\textbf{Experimental Setup.}
For node classification, we follow recent protocols~\cite{luo2024classic,lee2025aggregationbuffer}, with minor modifications to isolate the effect of graph augmentation by disabling other auxiliary regularizers (e.g., weight decay, dropout, batch normalization). We use 2-layer backbones, a common setup for node classification ~\cite{ju2023graphpatcher,fang2023dropmessage,luo2024classic}. 
For each dataset, we tune the backbone hidden dimension without augmentation and fix it across all regularizers. For augmentation baselines, we tune their corresponding augmentation rate (e.g., drop rate for DropEdge). We train with Adam using a learning rate $0.001$, using full-batch training except for Flickr and ogbn-arxiv with mini-batch training. We apply early stopping with a patience of $100$. We report mean$\pm$std over five runs, using accuracy for all node-classification datasets except Flickr, where we report micro-F1. 
For RADE, we always initialize $p=0.5$ and $q$ with the graph density; during training, $p$ and $q$ are adapted by one Adam step on the GradNorm objective $\mathcal J$ with learning rate $0.001$.
For graph classification, we follow dataset-specific standard protocols. On TU datasets, we follow the GIN protocol~\cite{xu2018powerful}: 4 message-passing layers and 10-fold cross-validation. 
For ogbg-molhiv, we follow the OGB protocol and hyperparameters \cite{hu2020open}; for peptides-func, we follow the LRGB setup \cite{dwivedi2022long}. We report accuracy for TU, ROC-AUC for ogbg-molhiv, and AP for peptides-func. Dataset-specific hyperparameters are provided in Appendix~\ref{app:experimental_details}.

\begin{table*}[!h]
\centering
\caption{Drop-only vs.\ add-only decomposition of RADE-OF on node classification (mean$\pm$std over five runs) with GCN backbone. Accuracy (\%) is reported on all datasets except Flickr with micro-F1 (\%). Best is in bold.}
\label{tab:rade_drop_add_ablations}
\vspace{-4pt}
\small
\setlength{\tabcolsep}{3pt}
\renewcommand{\arraystretch}{1.10}
\begin{tabularx}{\textwidth}{l*{8}{Y}}
\toprule
\textbf{Method} &
\textbf{Cora} & \textbf{CiteSeer} & \textbf{PubMed} & \textbf{CS} &
\textbf{Computer} & \textbf{Physics} & \textbf{Flickr} & \textbf{ogbn-arxiv} \\
\midrule

RADE-OF
& $\textbf{81.08} \pm \textbf{1.06}$
& $\textbf{70.20} \pm \textbf{1.31}$
& $\textbf{78.04} \pm \textbf{0.80}$
& $\textbf{93.26} \pm \textbf{0.72}$
& $\textbf{90.22} \pm \textbf{0.14}$
& $\textbf{96.58} \pm \textbf{0.10}$
& $\textbf{52.25} \pm \textbf{0.25}$
& $\textbf{71.09} \pm \textbf{0.26}$ \\

RADE-OF-Drop
& $80.44 \pm 1.26$
& $69.84 \pm 0.82$
& $77.82 \pm 1.27$
& $92.16 \pm 0.25$
& $89.82 \pm 0.34$
& $96.48 \pm 0.20$
& $52.14 \pm 0.15$
& $70.68 \pm 0.15$ \\

RADE-OF-Add
& $80.68 \pm 0.75$
& $69.34 \pm 1.65$
& $77.96 \pm 0.52$
& $92.94 \pm 0.24$
& $90.09 \pm 0.31$
& $96.49 \pm 0.06$
& $51.85 \pm 0.15$
& $70.12 \pm 0.18$ \\
\bottomrule
\end{tabularx}
\end{table*}

\begin{table*}[!h]
\centering
\caption{Fixed fine-tuned vs.\ adaptive perturbation rates on node classification (mean$\pm$std over five runs) with GCN backbone. ``Tuned'' selects a single best rate pair by hyperparameter search, while RADE-OF/RADE-OFS use adaptive GradNorm rule. Best is in bold.}
\vspace{-4pt}
\label{tab:tuned_vs_grad}
\small
\setlength{\tabcolsep}{1.25pt}
\renewcommand{\arraystretch}{1.10}
\begin{tabularx}{\textwidth}{@{\hskip 3pt}lcccccccc@{}}
\toprule
\textbf{Method} &
\textbf{Cora} & \textbf{CiteSeer} & \textbf{PubMed} & \textbf{CS} &
\textbf{Computer} & \textbf{Physics} & \textbf{Flickr} & \textbf{ogbn-arxiv} \\
\midrule
RADE-OF
& $\textbf{81.08} \pm \textbf{1.06}$
& $\textbf{70.20} \pm \textbf{1.31}$
& $\textbf{78.04} \pm \textbf{0.80}$
& $\textbf{93.26} \pm \textbf{0.72}$
& $\textbf{90.22} \pm \textbf{0.14}$
& $\textbf{96.58} \pm \textbf{0.10}$
& $\textbf{52.25} \pm \textbf{0.25}$
& $\textbf{71.09} \pm \textbf{0.26}$ \\
RADE-OF (tuned)    & $80.90 \pm 1.32$ & $70.08 \pm 0.62$ & $77.43 \pm 0.88$ & $93.12 \pm 0.06$ & $90.15 \pm 0.32$ & $96.53 \pm 0.08$ & $52.15 \pm 0.18$ & $70.56 \pm 0.36$ \\
\midrule
RADE-OFS
& {${\textbf{80.82} \pm \textbf{1.24}}$}
& {$\textbf{70.12} \pm \textbf{2.25}$}
& {$\textbf{78.24} \pm \textbf{0.85}$}
& {${\textbf{93.21} \pm \textbf{0.15}}$}
& {${\textbf{89.94} \pm \textbf{0.50}}$}
& {$\textbf{96.53} \pm \textbf{0.03}$}
& {$\textbf{51.92} \pm \textbf{0.12}$}
& {${\textbf{70.95} \pm \textbf{0.19}}$} \\
RADE-OFS (tuned) & $80.20 \pm 0.84$ & $69.74 \pm 1.43$ & $77.82 \pm 0.48$ & $92.20 \pm 0.10$ & $88.31 \pm 0.53$ & $96.49 \pm 0.06$ & $51.84 \pm 0.18$ & $70.77 \pm 0.25$ \\
\bottomrule
\end{tabularx}
\end{table*}

\textbf{Results.}
On node classification (Table~\ref{tab:nc_results_main}), both RADE variants improve upon the GCN baseline on all datasets. RADE-OF is the strongest overall variant: it is best on seven datasets and second-best on PubMed. Compared with the strongest non-RADE baseline, its gains range from $0.02\%$ (Flickr) to $0.43\%$ (Computer). 
RADE-OFS is best on PubMed and is otherwise below RADE-OF, consistent with its design: retaining the non-edge contribution at inference can help when extra test-time propagation mitigates over-squashing.\footnote{For PubMed, this aligns with its reported moderate over-squashing intensity ~\cite{saber2025over}.}
Overall, since these node-classification datasets are less affected by over-squashing~\cite{saber2025over}, RADE-OF is the more reliable default.

On graph classification (Table~\ref{tab:graph_classification_results}), RADE-OFS is best on all datasets and improves over vanilla GCN on every dataset, with the largest gains on MUTAG ($+2.24\%$) and peptides-func ($+1.83\%$).
RADE-OF also improves over GCN on all datasets; it is second-best on MUTAG, PROTEINS, ogbg-molhiv, and peptides-func, ties for second on IMDB-B, and is below DropNode on IMDB-M.
These results show that both RADE variants are reliable regularizers, and also support the role of RADE-OFS in graph-level tasks, where over-squashing is more pronounced~\cite{saber2025over}.
By retaining expected non-edge contributions at inference, RADE-OFS creates additional propagation paths, aligning with its strongest gains on datasets that require long-range interaction or exhibit over-squashing~\cite{dwivedi2022long,saber2025over}.

\noindent \textbf{Ablations: GradNorm, Corrections, and Nonlinearity.}
Table~\ref{tab:nc_rade_ablations} studies three components of RADE-OF: adaptive rate selection through GradNorm (GN), expectation-preserving correction (EPC), and nonlinear message passing. Removing GradNorm (w/o GN), which fixes the drop/add rates to their default initialization, reduces performance on most datasets, with the largest drop of $-0.83\%$ on Flickr. In the fixed-rate setting, removing both GradNorm and the expectation-preserving correction (w/o GN \& EPC) typically degrades performance more substantially (e.g., $-2.90\%$ on Flickr), highlighting the importance of aggregation corrections under stochastic drop/add perturbations.

Table~\ref{tab:nc_rade_ablations} also reports RADE-OF-Lin, with a linearized GCN backbone---known as SGC~\cite{wu2019simplifying}---matching the linear setting in Eq.~\eqref{eq:reg}. RADE-OF-Lin improves SGC on all datasets, showing that RADE remains effective in the linearized setting. However, nonlinear RADE-OF is consistently stronger, especially on Flickr and ogbn-arxiv. Removing GN and EPC from RADE-OF-Lin also weakens performance on most datasets. Overall, the ablations confirm the contributing value of adaptive rate selection, expectation-preserving corrections, and nonlinearity (despite violating the linearity assumption of Eq.~\ref{eq:reg}).

\noindent \textbf{Decomposing RADE: Drop and Add.}
Table~\ref{tab:rade_drop_add_ablations} decomposes RADE-OF into its drop and add components. The combined add-drop variant outperforms both single-component variants on all datasets. The relative strength of drop-only and add-only differs across datasets---add-only is stronger on Cora, PubMed, CS, Computer, and Physics, while drop-only is stronger on CiteSeer, Flickr, and ogbn-arxiv. This supports that deletion and addition are not interchangeable, and combining them yields better results. RADE-OF-Drop also improves over DropEdge in Table~\ref{tab:nc_results_main}, since it retains adaptive rate selection and expectation-preserving correction.

\noindent \textbf{Adaptive vs. Fine-tuned Rates.}
Table~\ref{tab:tuned_vs_grad} compares adaptive selection of $(p,q)$ with a fine-tuned fixed pair. GradNorm improves over fixed fine-tuned rates on all datasets for both RADE variants, with modest but consistent gains for RADE-OF and larger gains for RADE-OFS. This supports adapting perturbation strength during training rather than using hyperparameter-tuned rates.

\section{Conclusion}
We presented RADE, a stochastic add-drop edge augmentation that jointly addresses overfitting and over-squashing. We introduced two variants. RADE-OF targets overfitting by aligning the expected training-time aggregation with the input-graph inference aggregation, so random perturbations act only as regularization. RADE-OFS targets both overfitting and over-squashing by correcting deletion effects while retaining expected added-edge contributions at inference. We showed RADE's deletion and addition components induce non-interchangeable regularization effects, and introduced a mini-batch GradNorm rule that adapts their rates during training. Across node and graph classification tasks, RADE-OF is a reliable regularizer, while RADE-OFS yields the clearest gains when long-range communication matters. Ablations support the importance of expectation-preserving corrections, adaptive rate selection, and combining deletion with addition.

\section*{Acknowledgements}

This work was supported by the Natural Sciences and Engineering Research Council of Canada (NSERC). We also thank the reviewers for their constructive comments and suggestions.

\section*{Impact Statement}
This paper proposes a graph data augmentation and regularization method intended to improve the training of graph neural networks. The method is domain-agnostic and does not target any specific application area. We do not anticipate unique negative societal or ethical impacts arising specifically from this method.

\bibliography{main_icml}
\bibliographystyle{icml2026}

\newpage
\appendix
\onecolumn

\section{Derivation of the Variance-Regularization View}
\label{app:reg_derivation}

For completeness, we briefly derive the approximation in Eq.~\eqref{eq:reg}, following the same second-order argument used in prior work on stochastic regularization for GNNs~\cite{fang2023dropmessage}.

\noindent \textbf{Binary cross-entropy.}
Let $s_i$ denote the logit of node $i$ on the input graph, and let $\tilde{s}_i$ denote the corresponding logit on the perturbed graph. We observe $\tilde s_i = s_i + \delta_i$ with logit noise of $\delta_i$. We assume that the graph augmentation/perturbation is expectation-preserving at the logit level, so that $\mathbb{E}[\delta_i]=0$.
We denote the binary cross-entropy (BCE) loss of the input graph by $L_{\mathrm{BCE}}$, while letting $\tilde{L}_{\mathrm{BCE}}$ denote the binary cross-entropy loss evaluated on the perturbed graph. We write 
$L_{\mathrm{BCE}}=\sum_i \ell_i(s_i)$ and $\tilde L_{\mathrm{BCE}}=\sum_i \ell_i(\tilde s_i)$ by defining
\[
\ell_i(s_i) = -\,y_i \log z_i - (1-y_i)\log(1-z_i),
\qquad
z_i=\sigma(s_i),
\]
Using a second-order Taylor expansion of $\ell_i(s_i+\delta_i)$ around $s_i$, we have $\ell_i(s_i+\delta_i)\approx
\ell_i(s_i) + \ell_i'(s_i)\delta_i + \frac{1}{2}\ell_i''(s_i)\delta_i^2$. Taking the expectation over the augmentation randomness gives
\[
\mathbb{E}[\ell_i(s_i+\delta_i)]
\approx
\ell_i(s_i)
+
\ell_i'(s_i)\mathbb{E}[\delta_i]
+
\frac{1}{2}\ell_i''(s_i)\mathbb{E}[\delta_i^2].
\]
Since $\mathbb{E}[\delta_i]=0$, $\mathbb{E}[\delta_i^2]=\mathrm{Var}(\delta_i)$, and $\ell_i''(s_i)=z_i(1-z_i)$ for BCE, we have $\mathbb{E}[\ell_i(\tilde s_i)] \approx \ell_i(s_i) + \frac{1}{2}z_i(1-z_i)\,\mathrm{Var}(\delta_i).$ Summing over $i$ yields
\[
\mathbb{E}[\tilde L_{\mathrm{BCE}}]
\approx
L_{\mathrm{BCE}}
+
\frac{1}{2}\sum_i z_i(1-z_i)\,\mathrm{Var}(\delta_i),
\]
which is Eq.~\eqref{eq:reg}. This shows that stochastic expectation-preserving augmentation acts as a variance-based regularizer.

\noindent \textbf{Multi-class cross-entropy.}
The same argument extends to $C$-class classification. Let $\mathbf{s}_i\in\mathbb{R}^C$ be the logits of node $i$ on the input graph, and let $\tilde{\mathbf{s}}_i$ denote the corresponding logits on the perturbed graph. We write $\tilde{\mathbf{s}}_i=\mathbf{s}_i+\boldsymbol{\delta}_i$, with the assumption of expectation-preservation at the logit level, so that $\mathbb{E}[\boldsymbol{\delta}_i]=\mathbf{0}$. Letting $\mathbf{p}_i=\mathrm{softmax}(\mathbf{s}_i)$, we consider the cross-entropy (CE) losses of $L_{\mathrm{CE}}=\sum_i \ell_i(\mathbf{s}_i)$ and $\widetilde L_{\mathrm{CE}}=\sum_i \ell_i(\widetilde{\mathbf s}_i)$, with $\ell_i(\mathbf{s}_i)=-\log p_{i,y_i}$.
Using a second-order Taylor expansion of $\ell_i(\mathbf{s}_i+\boldsymbol{\delta}_i)$ around $\mathbf{s}_i$, we have
\[
\ell_i(\mathbf{s}_i+\boldsymbol{\delta}_i)
\approx
\ell_i(\mathbf{s}_i)
+
\nabla \ell_i(\mathbf{s}_i)^\top \boldsymbol{\delta}_i
+
\frac{1}{2}\boldsymbol{\delta}_i^\top \mathbf{H}_i \boldsymbol{\delta}_i,
\]
where $\mathbf{H}_i=\mathrm{diag}(\mathbf{p}_i)-\mathbf{p}_i\mathbf{p}_i^\top$.
Taking expectation and using $\mathbb{E}[\boldsymbol{\delta}_i]=\mathbf{0}$, we have
$
\mathbb{E}[\ell_i(\tilde{\mathbf{s}}_i)]
\approx
\ell_i(\mathbf{s}_i)
+
\frac{1}{2}\,
\mathbb{E}\!\left[\boldsymbol{\delta}_i^\top \mathbf{H}_i \boldsymbol{\delta}_i\right].
$
As the perturbation is mean-zero, we have 
$\mathbb{E}\!\left[\boldsymbol{\delta}_i^\top \mathbf{H}_i \boldsymbol{\delta}_i\right]=\mathrm{Tr}(\mathbf{H}_i \boldsymbol{\Sigma}_i)$, where $\boldsymbol{\Sigma}_i=\mathrm{Cov}(\boldsymbol{\delta}_i)$ is the covariance matrix.
Thus,
$$
\mathbb{E}[\tilde L_{\mathrm{CE}}]
\approx
L_{\mathrm{CE}}
+
\frac{1}{2}\sum_i \mathrm{Tr}(\mathbf{H}_i \boldsymbol{\Sigma}_i).
$$
Ignoring cross-class covariances, we approximate
$\boldsymbol{\Sigma}_i
=\mathrm{diag}\!\big(\mathrm{Var}(\delta_{i,1}),\dots,\mathrm{Var}(\delta_{i,C})\big)$, 
then $\mathrm{Tr}(\mathbf{H}_i\boldsymbol{\Sigma}_i)
= \sum_{c=1}^C p_{i,c}(1-p_{i,c})\,\mathrm{Var}(\delta_{i,c})$, 
and thus
\[
\mathbb{E}[\tilde L_{\mathrm{CE}}]
\approx
L_{\mathrm{CE}}
+
\frac{1}{2}\sum_i\sum_{c=1}^C
p_{i,c}(1-p_{i,c})\,\mathrm{Var}(\delta_{i,c}).
\]

\section{Proofs of Expectation-Preserving Aggregation of RADE-OF and RADE-OFS}
\label{app:proof_ep_rade_variants}
Throughout this section, expectations are taken over
$\mathcal G'\sim\mathcal Q(\cdot\mid\mathcal G)$ conditional on $\mathbf H^{(\ell-1)}$. Thus, the messages $\{\mathbf m_{ij}^{(\ell-1)}\}_{j\neq i}$ are deterministic under the expectation. For notational brevity, we omit this conditioning in the proofs below. We first provide two lemmas and their proofs, which are used to prove our results.

\begin{lemma}[Existing-edge correction]
\label{lem:existing-edge-correction}
Fix a node $i$ and layer $\ell$. Consider that each message from the existing edge $(i,j)$ is corrected by $
\widetilde{\mathbf m}_{ij}^{(\ell-1)}=\frac{\alpha_{ij}^{\mathcal G}} {\mathbb E[\alpha_{ij}^{\mathcal G'}]}
\mathbf m_{ij}^{(\ell-1)}$, with $\mathbb E[\alpha_{ij}^{\mathcal G'}]\neq 0$. Then
$
\mathbb E\!\left[
\sum_{j:A_{ij}=1}
\alpha_{ij}^{\mathcal G'}
\widetilde{\mathbf m}_{ij}^{(\ell-1)}
\right]
=
\sum_{j:A_{ij}=1}
\alpha_{ij}^{\mathcal G}
\mathbf m_{ij}^{(\ell-1)} .
$
\end{lemma}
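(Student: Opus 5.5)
The proof is a direct computation using linearity of expectation, carried out conditionally on $\mathbf H^{(\ell-1)}$ as fixed at the start of this section. Under this conditioning the messages $\mathbf m_{ij}^{(\ell-1)}$ are deterministic. The input-graph coefficients $\alpha_{ij}^{\mathcal G}$ are also deterministic, since they depend only on $\mathcal G$ and $\mathbf H^{(\ell-1)}$. The same holds for the scalars $\mathbb E[\alpha_{ij}^{\mathcal G'}]$. The only random quantities are therefore the perturbed coefficients $\alpha_{ij}^{\mathcal G'}$.

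The steps, in order, are as follows.

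\begin{enumerate}
\item Use linearity of expectation to exchange the expectation with the finite sum over $\{j : A_{ij}=1\}$. This index set is determined by $\mathbf A$ and is not random, so the exchange is immediate.

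\item For each term, substitute the corrected message. This gives
\[
\mathbb E\bigl[\alpha_{ij}^{\mathcal G'}\widetilde{\mathbf m}_{ij}^{(\ell-1)}\bigr]
=
\mathbb E\Bigl[\alpha_{ij}^{\mathcal G'}\,\frac{\alpha_{ij}^{\mathcal G}}{\mathbb E[\alpha_{ij}^{\mathcal G'}]}\,\mathbf m_{ij}^{(\ell-1)}\Bigr].
\]

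\item Pull the deterministic factor $\frac{\alpha_{ij}^{\mathcal G}}{\mathbb E[\alpha_{ij}^{\mathcal G'}]}\,\mathbf m_{ij}^{(\ell-1)}$ out of the expectation. What remains is $\mathbb E[\alpha_{ij}^{\mathcal G'}]$, which cancels against the denominator. The hypothesis $\mathbb E[\alpha_{ij}^{\mathcal G'}]\neq 0$ makes this cancellation legitimate.

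\item Each term then equals $\alpha_{ij}^{\mathcal G}\mathbf m_{ij}^{(\ell-1)}$. Summing over $j$ gives the claim.
\end{enumerate}

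There is no real obstacle. The one point needing care is justifying that $\mathbf m_{ij}^{(\ell-1)}$ and $\alpha_{ij}^{\mathcal G}$ are not random under $\mathcal Q$; this follows from the conditioning on $\mathbf H^{(\ell-1)}$.

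For attention-type coefficients, $\alpha_{ij}^{\mathcal G'}$ depends on the raw scores $\beta_{ij}$. Given $\mathbf H^{(\ell-1)}$ these scores are fixed, so only the graph randomness enters, and the argument is unchanged. No independence across $j$ is required, because only linearity of expectation is used.
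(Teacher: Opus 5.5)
Your proposal is correct and follows the paper's proof essentially verbatim: you use linearity of expectation over the fixed index set $\{j: A_{ij}=1\}$, pull the deterministic factor $\alpha_{ij}^{\mathcal G}\mathbf m_{ij}^{(\ell-1)}/\mathbb E[\alpha_{ij}^{\mathcal G'}]$ out of each term's expectation, and cancel. Your remark that conditioning on $\mathbf H^{(\ell-1)}$ makes the messages and input-graph coefficients deterministic is the same convention the paper states at the start of that appendix section.
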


\begin{proof}
Using linearity of expectation and the definition of $\widetilde{\mathbf m}_{ij}^{(\ell-1)}$,
\begin{align*}
\mathbb E\!\left[
\sum_{j:A_{ij}=1}
\alpha_{ij}^{\mathcal G'}
\widetilde{\mathbf m}_{ij}^{(\ell-1)}
\right]
=
\sum_{j:A_{ij}=1}
\mathbb E\!\left[
\alpha_{ij}^{\mathcal G'}
\frac{\alpha_{ij}^{\mathcal G}}
{\mathbb E[\alpha_{ij}^{\mathcal G'}]}
\mathbf m_{ij}^{(\ell-1)}
\right]
=
\sum_{j:A_{ij}=1}
\mathbb E[\alpha_{ij}^{\mathcal G'}]
\frac{\alpha_{ij}^{\mathcal G}}
{\mathbb E[\alpha_{ij}^{\mathcal G'}]}
\mathbf m_{ij}^{(\ell-1)}
&=
\sum_{j:A_{ij}=1}
\alpha_{ij}^{\mathcal G}
\mathbf m_{ij}^{(\ell-1)} .
\end{align*}
\end{proof}

\begin{lemma}[Weighted centering identity]
\label{lem:weighted-centering}
Fix a node $i$ and layer $\ell$. Let
$
Z_i
=
\sum_{j:A_{ij}=0}
\mathbb E[\alpha_{ij}^{\mathcal G'}],
$
and, when $Z_i>0$, let
$
\boldsymbol\mu_i^{(\ell-1)}
=
\frac{1}{Z_i}
\sum_{j:A_{ij}=0}
\mathbb E[\alpha_{ij}^{\mathcal G'}]
\mathbf m_{ij}^{(\ell-1)}.
$
Then
$
\sum_{j:A_{ij}=0}
\mathbb E[\alpha_{ij}^{\mathcal G'}]
\left(
\mathbf m_{ij}^{(\ell-1)}
-
\boldsymbol\mu_i^{(\ell-1)}
\right)
=
\mathbf 0 .
$
\end{lemma}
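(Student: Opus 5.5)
The identity is a direct algebraic consequence of the definition of $\boldsymbol\mu_i^{(\ell-1)}$; no probabilistic argument is needed. Conditional on $\mathbf H^{(\ell-1)}$, the messages $\mathbf m_{ij}^{(\ell-1)}$ are deterministic. The weights $w_{ij}:=\mathbb E[\alpha_{ij}^{\mathcal G'}]$ are deterministic scalars, so $\boldsymbol\mu_i^{(\ell-1)}$ is a fixed vector. The plan is to expand the left-hand side by distributing the sum, giving
\[
\sum_{j:A_{ij}=0} w_{ij}\,\mathbf m_{ij}^{(\ell-1)} \;-\; \Big(\sum_{j:A_{ij}=0} w_{ij}\Big)\boldsymbol\mu_i^{(\ell-1)} .
\]
Here $\boldsymbol\mu_i^{(\ell-1)}$ is pulled out of the second sum because it does not depend on $j$.

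Next, recognize the parenthesized factor as $Z_i$. Substituting the definition of $\boldsymbol\mu_i^{(\ell-1)}$ gives $Z_i\boldsymbol\mu_i^{(\ell-1)}=\sum_{j:A_{ij}=0} w_{ij}\mathbf m_{ij}^{(\ell-1)}$. This is valid because $Z_i>0$ by hypothesis, so the factor $Z_i\cdot\frac{1}{Z_i}$ cancels. The two terms are then identical and the difference is $\mathbf 0$, which is the claim.

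The only point requiring care is the hypothesis $Z_i>0$, which is needed for $\boldsymbol\mu_i^{(\ell-1)}$ to be well defined. For completeness, I would remark on the degenerate case $Z_i=0$. It can occur, for example, when $q=0$, or when node $i$ has no non-neighbors. If the weights $w_{ij}$ are nonnegative (as for sum, GCN, and attention coefficients), then $Z_i=0$ forces every $w_{ij}=0$. The left-hand side then vanishes regardless of how $\boldsymbol\mu_i^{(\ell-1)}$ is defined, for instance by the convention $\boldsymbol\mu_i^{(\ell-1)}=\mathbf 0$. So the identity, and its later use in Proposition~\ref{prop:rade-of-unbiased}, still holds in that case.

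I do not expect any step to be an obstacle. The lemma is the finite-dimensional fact that a weighted sum of deviations from the weighted mean, with the same weights, is zero.
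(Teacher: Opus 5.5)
Your proof is correct and follows the paper's argument: you expand the sum, pull the $j$-independent vector $\boldsymbol\mu_i^{(\ell-1)}$ out to get $Z_i\boldsymbol\mu_i^{(\ell-1)}$, and cancel it against the definition using $Z_i>0$. Your extra remark on the degenerate case $Z_i=0$ is a valid addition that the paper does not make: there, nonnegative weights force every term to vanish.
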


\begin{proof}
Substituting the definition of $\boldsymbol\mu_i^{(\ell-1)}$,
\begin{align*}
\sum_{j:A_{ij}=0}
\mathbb E[\alpha_{ij}^{\mathcal G'}]
\left(
\mathbf m_{ij}^{(\ell-1)}
-
\boldsymbol\mu_i^{(\ell-1)}
\right)
&=
\sum_{j:A_{ij}=0}
\mathbb E[\alpha_{ij}^{\mathcal G'}]
\mathbf m_{ij}^{(\ell-1)}
-
\left(
\sum_{j:A_{ij}=0}
\mathbb E[\alpha_{ij}^{\mathcal G'}]
\right)
\boldsymbol\mu_i^{(\ell-1)}
&& \text{expand the sum}
\\
&=
\sum_{j:A_{ij}=0}
\mathbb E[\alpha_{ij}^{\mathcal G'}]
\mathbf m_{ij}^{(\ell-1)}
-
Z_i
\left(
\frac{1}{Z_i}
\sum_{j:A_{ij}=0}
\mathbb E[\alpha_{ij}^{\mathcal G'}]
\mathbf m_{ij}^{(\ell-1)}
\right)
&& \text{definition of $Z_i$ and $\boldsymbol\mu_i^{(\ell-1)}$}
\\
&=
\mathbf 0.
\end{align*}
\end{proof}

\begin{repproposition}[Expectation-Preservation in RADE-OF]{prop:rade-of-unbiased}
Under the RADE perturbation~\eqref{eq:rade-perturb}, any weighted-sum aggregation
(Def.~\ref{def:weighted-sum}) corrected by the RADE-OF rule~\eqref{eq:rade-of-correction}
is expectation-preserving (Def.~\ref{def:EP}).
\end{repproposition}

\begin{proof}
Fix a node $i$ and layer $\ell$. The corrected training-time aggregation is 
$\widetilde{\mathbf a}_i^{(\mathcal G',\ell)}
=
\sum_{j\neq i}
\alpha_{ij}^{\mathcal G'}
\widetilde{\mathbf m}_{ij}^{(\ell-1)}
$.
Splitting the expectation into existing-edge and non-edge terms gives
\begin{align*}
\mathbb E\!\left[
\widetilde{\mathbf a}_i^{(\mathcal G',\ell)}
\right]
&=
\mathbb E\!\left[
\sum_{j:A_{ij}=1}
\alpha_{ij}^{\mathcal G'}
\widetilde{\mathbf m}_{ij}^{(\ell-1)}
\right]
+
\mathbb E\!\left[
\sum_{j:A_{ij}=0}
\alpha_{ij}^{\mathcal G'}
\widetilde{\mathbf m}_{ij}^{(\ell-1)}
\right].
\end{align*}

For existing edges, based on Lemma~\ref{lem:existing-edge-correction}, we have 
$
\mathbb E\!\left[
\sum_{j:A_{ij}=1}
\alpha_{ij}^{\mathcal G'}
\widetilde{\mathbf m}_{ij}^{(\ell-1)}
\right]
=
\sum_{j:A_{ij}=1}
\alpha_{ij}^{\mathcal G}
\mathbf m_{ij}^{(\ell-1)} .
$ For non-edges, RADE-OF uses
$
\widetilde{\mathbf m}_{ij}^{(\ell-1)}
=
\mathbf m_{ij}^{(\ell-1)}
-
\boldsymbol\mu_i^{(\ell-1)},
$
Thus, by Lemma~\ref{lem:weighted-centering},
\[
\mathbb E\!\left[
\sum_{j:A_{ij}=0}
\alpha_{ij}^{\mathcal G'}
\widetilde{\mathbf m}_{ij}^{(\ell-1)}
\right]
=
\sum_{j:A_{ij}=0}
\mathbb E[\alpha_{ij}^{\mathcal G'}]
\left(
\mathbf m_{ij}^{(\ell-1)}
-
\boldsymbol\mu_i^{(\ell-1)}
\right)
=
\mathbf 0 .
\]
Combining the two parts and using that $\alpha_{ij}^{\mathcal G}=0$ when $A_{ij}=0$ give us
\begin{align*}
\mathbb E\!\left[
\widetilde{\mathbf a}_i^{(\mathcal G',\ell)}
\right]
&=
\sum_{j:A_{ij}=1}
\alpha_{ij}^{\mathcal G}
\mathbf m_{ij}^{(\ell-1)}
=
\sum_{j\neq i}
\alpha_{ij}^{\mathcal G}
\mathbf m_{ij}^{(\ell-1)}
=
\mathbf a_i^{(\mathcal G,\ell)},
\end{align*}
which proves that RADE-OF is expectation-preserving.
\end{proof}

\begin{repproposition}[Expectation-Preservation in RADE-OFS]{prop:rade-ofs-match}
Under the RADE perturbation~\eqref{eq:rade-perturb}, any weighted-sum aggregation
(Def.~\ref{def:weighted-sum}) corrected by the RADE-OFS rule~\eqref{eq:rade-ofs-train}
is expectation-preserving with respect to the modified inference-time aggregation in
Eq.~\eqref{eq:rade-ofs-infer}.
\end{repproposition}

\begin{proof}
Fix a node $i$ and layer $\ell$. The corrected training-time aggregation is $\widetilde{\mathbf a}_i^{(\mathcal G',\ell)}
=\sum_{j\neq i}\alpha_{ij}^{\mathcal G'}\widetilde{\mathbf m}_{ij}^{(\ell-1)}$.
Splitting the expectation into existing-edge and non-edge terms gives
\[
\mathbb E\!\left[
\widetilde{\mathbf a}_i^{(\mathcal G',\ell)}
\right]
=
\mathbb E\!\left[
\sum_{j:A_{ij}=1}
\alpha_{ij}^{\mathcal G'}
\widetilde{\mathbf m}_{ij}^{(\ell-1)}
\right]
+
\mathbb E\!\left[
\sum_{j:A_{ij}=0}
\alpha_{ij}^{\mathcal G'}
\widetilde{\mathbf m}_{ij}^{(\ell-1)}
\right].
\]

The existing-edge correction in RADE-OFS is the same as in RADE-OF, so based on
Lemma~\ref{lem:existing-edge-correction} we have
$\mathbb E\!\left[
\sum_{j:A_{ij}=1}
\alpha_{ij}^{\mathcal G'}
\widetilde{\mathbf m}_{ij}^{(\ell-1)}
\right]
=
\sum_{j:A_{ij}=1}
\alpha_{ij}^{\mathcal G}
\mathbf m_{ij}^{(\ell-1)}$.
For non-edges, RADE-OFS leaves messages uncentered: $\widetilde{\mathbf m}_{ij}^{(\ell-1)}=\mathbf m_{ij}^{(\ell-1)}$. Thus,
$\mathbb E\!\left[
\sum_{j:A_{ij}=0}
\alpha_{ij}^{\mathcal G'}
\widetilde{\mathbf m}_{ij}^{(\ell-1)}
\right]
=
\sum_{j:A_{ij}=0}
\mathbb E[\alpha_{ij}^{\mathcal G'}]
\mathbf m_{ij}^{(\ell-1)}$. 
Combining the two parts,
\begin{align*}
\mathbb E\!\left[
\widetilde{\mathbf a}_i^{(\mathcal G',\ell)}
\right]
&=
\sum_{j:A_{ij}=1}
\alpha_{ij}^{\mathcal G}
\mathbf m_{ij}^{(\ell-1)}
+
\sum_{j:A_{ij}=0}
\mathbb E[\alpha_{ij}^{\mathcal G'}]
\mathbf m_{ij}^{(\ell-1)}
=
\mathbf a_i^{(\mathcal G,\ell)}
+
\sum_{j:A_{ij}=0}
\mathbb E[\alpha_{ij}^{\mathcal G'}]
\mathbf m_{ij}^{(\ell-1)}
=
\widehat{\mathbf a}_i^{(\mathcal G,\ell)}.
\end{align*}
Thus, RADE-OFS is expectation-preserving with respect to the modified inference-time aggregation in Eq.~\eqref{eq:rade-ofs-infer}.
\end{proof}

\section{Coefficient Expectations for Common Weighted-sum Aggregators}
\label{APP:GIN/GCN_Unbiased}

The results in Appendix~\ref{app:proof_ep_rade_variants} apply to any
weighted-sum aggregator once the coefficients $\alpha_{ij}^{\mathcal G}$
and the expectations $\mathbb E[\alpha_{ij}^{\mathcal G'}]$ are specified.
Here, we instantiate these quantities for sum aggregation, GCN-style
symmetric degree normalization, and GAT-style attention normalization. In the following, we denote $d_i$ as the degree of node $i$, including a self-loop, and $n$ as the number of nodes.

\subsection*{(I) Sum aggregation}
For sum aggregation, $\alpha_{ij}^{\mathcal G}=A_{ij}$, and under the RADE perturbation (Eq.~\eqref{eq:rade-perturb}), we have $\mathbb E[\alpha_{ij}^{\mathcal G'}] = \mathbb E[A_{ij}']$, which is $1-p$ for dropping edges, and $q$ for adding non-edges. For existing edges, the correction factor in RADE-OF and RADE-OFS (see Eqs. \eqref{eq:rade-of-correction} and \eqref{eq:rade-ofs-train}) becomes
$\frac{\alpha_{ij}^{\mathcal G}}{\mathbb E[\alpha_{ij}^{\mathcal G'}]}=\frac{1}{1-p}$.
Thus, for sum aggregation, the neighbor-side correction reduces to the usual
inverted-drop scaling. For non-edges, $\mathbb E[\alpha_{ij}^{\mathcal G'}]=q$ is constant over all
$j$. Hence the weighted mean in Eq.~\eqref{eq:rade-of-mu}
reduces to the uniform non-neighbor mean:
$\boldsymbol\mu_i^{(\ell-1)}
=
\frac{1}{n - d_i}
\sum_{j:A_{ij}=0}
\mathbf m_{ij}^{(\ell-1)}$.
Therefore, RADE-OF centers added non-edge messages around this uniform mean.
RADE-OFS instead retains its expected contribution in the modified
inference-time aggregation of
$\widehat{\mathbf a}_i^{(\mathcal G,\ell)}
=
\mathbf a_i^{(\mathcal G,\ell)}
+
q\sum_{j:A_{ij}=0}
\mathbf m_{ij}^{(\ell-1)}$.

\subsection*{(II) Symmetric-degree aggregation}

For GCN-style symmetric degree normalization,
$\alpha_{ij}^{\mathcal G}
=\frac{A_{ij}}{\sqrt{d_id_j}}$. 
Thus, the required expectation is
$\mathbb E[\alpha_{ij}^{\mathcal G'}]
=
\mathbb E\!\left[
\frac{A_{ij}'}
{\sqrt{d_i'd_j'}}
\right]$.
Since the coefficient is zero whenever $A_{ij}'=0$,
\begin{align}
\mathbb E[\alpha_{ij}^{\mathcal G'}]
=
\mathbb P(A_{ij}'=1)
\mathbb E\!\left[
\frac{1}{\sqrt{d_i'd_j'}}
\,\middle|\,
A_{ij}'=1
\right]
=
\mathbb P(A_{ij}'=1)
\mathbb E\!\left[
(d_i')^{-1/2}
\,\middle|\,
A_{ij}'=1
\right]
\mathbb E\!\left[
(d_j')^{-1/2}
\,\middle|\,
A_{ij}'=1
\right].
\label{eq:gcn-alpha-factorization}
\end{align}
The main challenge for computing $\mathbb E[\alpha_{ij}^{\mathcal G'}]$ is the expectation term of endpoint degree-normalization $\mathbb E\!\left[
(d_i')^{-1/2}
\,\middle|\,
A_{ij}'=1
\right]$, for which we use the following delta-approximation method.

\begin{lemma}[Endpoint degree-normalization approximation]
\label{lem:endpoint_degree_delta}
Fix a pair $(i,j)$ and condition on $A_{ij}'=1$. Let
$\mu_i=\mathbb E[d_i'\mid A_{ij}'=1]$,
and
$\sigma_i^2=\mathrm{Var}(d_i'\mid A_{ij}'=1)$. 
Then
\[
\mathbb E\!\left[
(d_i')^{-1/2}
\,\middle|\,
A_{ij}'=1
\right]
\approx
\mu_i^{-1/2}
+
\frac{3}{8}\sigma_i^2\mu_i^{-5/2}.
\]
\end{lemma}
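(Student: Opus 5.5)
The plan is a standard second-order delta-method (Taylor) expansion of the smooth function $f(x)=x^{-1/2}$ around the conditional mean $\mu_i$ of the perturbed degree. Conditional on $A_{ij}'=1$, the perturbed degree $d_i'$ is a sum of independent Bernoulli indicators: the self-loop and the conditioned edge $(i,j)$ contribute deterministically, while every other existing edge survives with probability $1-p$ and every other non-edge is added with probability $q$. So $d_i'$ is a shifted sum of two independent binomials, which the paper calls mixed-binomial. Its mean $\mu_i$ and variance $\sigma_i^2$ are explicit, and $\mu_i\ge 1$ because of the self-loop, so $f$ is smooth on the support of $d_i'$.

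Concretely, I would write
\[
f(d_i')=f(\mu_i)+f'(\mu_i)(d_i'-\mu_i)+\tfrac12 f''(\mu_i)(d_i'-\mu_i)^2+R_3,
\]
with $f'(x)=-\tfrac12 x^{-3/2}$ and $f''(x)=\tfrac34 x^{-5/2}$. Taking the conditional expectation given $A_{ij}'=1$ has the following effects:
\begin{itemize}
\item the linear term vanishes because $\mathbb E[d_i'-\mu_i\mid A_{ij}'=1]=0$;
\item the quadratic term becomes $\tfrac12\cdot\tfrac34\,\mu_i^{-5/2}\sigma_i^2=\tfrac38\sigma_i^2\mu_i^{-5/2}$;
\item dropping $\mathbb E[R_3]$ gives exactly the stated approximation $\mu_i^{-1/2}+\tfrac38\sigma_i^2\mu_i^{-5/2}$.
\end{itemize}
For completeness I would also record the explicit values. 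If $d_i$ counts the self-loop, then for an existing edge $(i,j)$ we get $\mu_i=2+(1-p)(d_i-2)+q(n-d_i)$ and $\sigma_i^2=p(1-p)(d_i-2)+q(1-q)(n-d_i)$. The non-edge case is analogous, with the counts shifted by one.

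The main obstacle is justifying that the remainder is negligible, since the statement is only an approximation ($\approx$). I would bound the third-order term using the Lagrange form, $|R_3|\le \tfrac{5}{16}\xi^{-7/2}|d_i'-\mu_i|^3/ \!\,$... with $\xi\ge 1$, so $|R_3|\lesssim |d_i'-\mu_i|^3$. Because $d_i'$ is a sum of independent bounded variables, its third absolute central moment is $O(\sigma_i^3)$ by a Rosenthal-type or Hoeffding-type argument. A more refined estimate that keeps the $\mu_i^{-7/2}$ factor on the bulk (where $d_i'\approx\mu_i$), together with a Chernoff tail for the event $d_i'<\mu_i/2$, shows the error is $O(\sigma_i^3\mu_i^{-7/2})$ plus an exponentially small term. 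This error is of lower order than the retained correction whenever $\sigma_i^2\lesssim\mu_i$, which always holds for sums of Bernoullis.

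Since the lemma is stated only as an approximation, I would present the second-order expansion as the proof and give the remainder discussion as a justification of its accuracy rather than a sharp bound.
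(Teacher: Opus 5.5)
Your proposal is correct and is essentially the paper's proof: a second-order delta-method expansion of $f(x)=x^{-1/2}$ about $\mu_i$. The linear term drops out and $\tfrac12 f''(\mu_i)\sigma_i^2=\tfrac38\sigma_i^2\mu_i^{-5/2}$; your explicit $\mu_i,\sigma_i^2$ also match the paper's case $A_{ij}=1$.

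The paper does not try to control the remainder, so that part of your plan is extra. If you keep it, note that for small $\sigma_i$ the third absolute central moment of a Bernoulli sum can be of order $\sigma_i^2\gg\sigma_i^3$, and a Hoeffding bound scales only with the number of summands. The bulk error should therefore read $O((\sigma_i^2+\sigma_i^3)\mu_i^{-7/2})$, which is still lower order than the retained term when $\mu_i$ is large.
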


\begin{proof}
Let $f(x)=x^{-1/2}$. The second-order delta method gives
\[
\mathbb E[f(d_i')\mid A_{ij}'=1]
\approx
f(\mu_i)
+
\frac{1}{2}f''(\mu_i)\sigma_i^2.
\]
Since
$f''(x)=\frac{3}{4}x^{-5/2}$, 
we obtain
\[
\mathbb E\!\left[
(d_i')^{-1/2}
\,\middle|\,
A_{ij}'=1
\right]
\approx
\mu_i^{-1/2}
+
\frac{3}{8}\sigma_i^2\mu_i^{-5/2}.
\]
\end{proof}

Thus, to approximate Eq.~\eqref{eq:gcn-alpha-factorization}, it remains only to specify $\mu_i=\mathbb E[d_i'\mid A_{ij}'=1]$ and $\sigma_i^2=\mathrm{Var}(d_i'\mid A_{ij}'=1)$ under the two possible cases that the pair $(i,j)$  is in the input graph (i.e., $A_{ij}=1$) or not (i.e., $A_{ij}\neq1$).

\noindent\textbf{Case $A_{ij}=1$.}
Conditional on $A_{ij}'=1$, the self-loop and the retained edge $(i,j)$
contribute $2$ to $d_i'$. Among the remaining nodes $k\neq i,j$, node $i$ has $d_i-2$ input-graph neighbors (excluding self and node $j$) and $n-d_i$ input-graph non-neighbors. Hence,
$\mu_i=2+(d_i-2)(1-p)+(n-d_i)q$, and $\sigma_i^2=(d_i-2)(1-p)p+(n-d_i)q(1-q)$. The terms for $\mu_j$ and $\sigma_j^2$ are similar. Given these terms and 
since $\mathbb P(A_{ij}'=1)=1-p$, we can now approximate:
\begin{align*}
\mathbb E[\alpha_{ij}^{\mathcal G'}]
&\approx
(1-p)
\left(
\mu_i^{-1/2}
+
\frac{3}{8}\sigma_i^2\mu_i^{-5/2}
\right)
\left(
\mu_j^{-1/2}
+
\frac{3}{8}\sigma_j^2\mu_j^{-5/2}
\right).
\end{align*}
Thus, for existing edges, the correction factor is
$\frac{\alpha_{ij}^{\mathcal G}}
{\mathbb E[\alpha_{ij}^{\mathcal G'}]}
\approx
\frac{
{1}/{\sqrt{d_id_j}}
}
{
\mathbb E[\alpha_{ij}^{\mathcal G'}]
}$. Unlike sum aggregation, this correction is pair-dependent because the expected perturbed normalization depends on the endpoint degrees.

\noindent\textbf{Case $A_{ij}=0$.}
Conditional on $A_{ij}'=1$, the self-loop and the added edge $(i,j)$ contribute
$2$ to $d_i'$. Among the remaining nodes $k\neq i,j$, node $i$ has $d_i -1$
input-graph neighbors (excluding itself) and $n-1-d_i$ input-graph non-neighbors excluding self and node $j$. Hence, $\mu_i=2+(d_i-1)(1-p)+(n-1-d_i)q$, and $\sigma_i^2=(d_i-1)(1-p)p+(n-1-d_i)q(1-q)$. The terms for $\mu_j$ and $\sigma_j2$ are similar.  
Given these terms, and since $\mathbb P(A_{ij}'=1)=q$, we can approximate 
\begin{align*}
\mathbb E[\alpha_{ij}^{\mathcal G'}]
&\approx
q
\left(
\mu_i^{-1/2}
+
\frac{3}{8}\sigma_i^2\mu_i^{-5/2}
\right)
\left(
\mu_j^{-1/2}
+
\frac{3}{8}\sigma_j^2\mu_j^{-5/2}
\right).
\end{align*}

For RADE-OF, the non-edge expectations define the weighted centering mean in Eq.~\eqref{eq:rade-of-mu}. Unlike sum aggregation, this
centering is not uniform over non-neighbors: under GCN normalization,
\(\mathbb E[\alpha_{ij}^{\mathcal G'}]\) depends on the perturbed
degree moments of both endpoints. Thus, the RADE-OF centering mean is
computed by substituting the non-edge approximation into Eq.~\eqref{eq:rade-of-mu}. RADE-OFS uses the same coefficient expectations in its modified inference-time aggregation.

\subsection*{(III) Attention aggregation}
For GAT-style attention aggregation, the coefficient on graph \(\mathcal G\) can be written as $\alpha_{ij}^{\mathcal G} = \frac{A_{ij}\exp(\beta_{ij})}
{\sum_{k\neq i}A_{ik}\exp(\beta_{ik})}$, where \(\beta_{ij}\) is the raw attention score before softmax. Conditional on \(\mathbf H^{(\ell-1)}\), the raw scores are fixed. Thus, on the perturbed graph, 
$\alpha_{ij}^{\mathcal G'}
=\frac{A_{ij}'w_{ij}}{\sum_{k\neq i}A_{ik}'w_{ik}}$ with $w_{ik}=\exp(\beta_{ik})$ being constant. 
Since \(\alpha_{ij}^{\mathcal G'}=0\) whenever \(A_{ij}'=0\), we can write 
\begin{equation}
    \mathbb E[\alpha_{ij}^{\mathcal G'}]
=\mathbb P(A_{ij}'=1)w_{ij}\mathbb E\!\left[\frac{1}{w_{ij}+S_{ij}}\,\middle|\, A_{ij}'=1\right],
\label{eq:att_exp}
\end{equation} where $S_{ij}=\sum_{k\neq i,j}A_{ik}'w_{ik}$. Now, we focus on approximating the expectation term through the combination of delta-expansion and sampling methods.

\begin{lemma}[Attention-denominator approximation]
\label{lem:gat_attention_denominator_delta}
Let $S_{ij}=\sum_{k\neq i,j}A_{ik}'w_{ik}$, one can approximate
\[
\mathbb E\!\left[
\frac{1}{w_{ij}+S_{ij}}
\right]
\approx
\frac{1}{w_{ij}+\mu_{ij}}
+
\frac{\sigma_{ij}^2}{(w_{ij}+\mu_{ij})^3},
\]
where
$\mu_{ij}=(1-p)\sum_{\substack{k\neq i,j\\ A_{ik}=1}}w_{ik}
+q\sum_{\substack{k\neq i,j\\ A_{ik}=0}}w_{ik}$, 
and $\sigma_{ij}^2 = p(1-p)\sum_{\substack{k\neq i,j\\ A_{ik}=1}}w_{ik}^2 + q(1-q)\sum_{\substack{k\neq i,j\\ A_{ik}=0}}w_{ik}^2$.

\end{lemma}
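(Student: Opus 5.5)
The plan is to apply the second-order delta method to $g(x)=1/(w_{ij}+x)$ around $\mu_{ij}=\mathbb E[S_{ij}]$, exactly as in Lemma~\ref{lem:endpoint_degree_delta}. Since $g''(x)=2/(w_{ij}+x)^3$, this gives the stated correction term $\tfrac12 g''(\mu_{ij})\sigma_{ij}^2=\sigma_{ij}^2/(w_{ij}+\mu_{ij})^3$. The leading term is $g(\mu_{ij})$, and the first-order term drops out because $\mathbb E[S_{ij}-\mu_{ij}]=0$.

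\textbf{Step 1: reduce to unconditional moments of $S_{ij}$.} By Eq.~\eqref{eq:att_exp}, the expectation is conditional on $A_{ij}'=1$. However, $S_{ij}$ only involves $A_{ik}'$ for $k\neq i,j$. Under the perturbation~\eqref{eq:rade-perturb}, these are independent of $A_{ij}'$: the pairs $(i,k)$ are distinct unordered pairs from $(i,j)$, and symmetry only ties $A_{ik}'$ to $A_{ki}'$. So conditioning does not change the law of $S_{ij}$, and we may drop it.

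\textbf{Step 2: compute the first two moments.} By linearity, $\mathbb E[S_{ij}]=\sum_{k\neq i,j}w_{ik}\mathbb E[A_{ik}']$. Here $\mathbb E[A_{ik}']$ equals $1-p$ when $A_{ik}=1$ and $q$ when $A_{ik}=0$, which gives $\mu_{ij}$. Because the $A_{ik}'$ are independent Bernoulli variables across $k$, the variance is $\sum_k w_{ik}^2\mathrm{Var}(A_{ik}')$. Using $\mathrm{Var}(A_{ik}')=p(1-p)$ on edges and $q(1-q)$ on non-edges gives $\sigma_{ij}^2$.

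\textbf{Step 3: Taylor expansion.} Write $g(S_{ij})\approx g(\mu_{ij})+g'(\mu_{ij})(S_{ij}-\mu_{ij})+\tfrac12 g''(\mu_{ij})(S_{ij}-\mu_{ij})^2$. Taking expectations, the linear term vanishes and the quadratic term contributes $\tfrac12 g''(\mu_{ij})\sigma_{ij}^2$. Substituting the derivative from the first paragraph yields the stated approximation.

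\textbf{Main caveat.} The result is an approximation, not an identity. Its quality rests on neglecting third- and higher-order central moments. These are small relative to $(w_{ij}+\mu_{ij})$ when many $w_{ik}$ contribute, and $g$ is smooth on $[0,\infty)$ because $w_{ij}>0$. I would state the approximation in this delta-method sense, matching the treatment in Lemma~\ref{lem:endpoint_degree_delta}, rather than attempt an error bound.
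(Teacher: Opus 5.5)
Your proposal is correct and matches the paper's proof: a second-order delta expansion of $f(x)=(w_{ij}+x)^{-1}$ about $\mu_{ij}=\mathbb E[S_{ij}]$, using $f''(x)=2(w_{ij}+x)^{-3}$. Your Steps 1--2 make explicit what the paper only asserts when it identifies the stated $\mu_{ij}$ and $\sigma_{ij}^2$ with $\mathbb E[S_{ij}]$ and $\mathrm{Var}(S_{ij})$: dropping the conditioning on $A_{ij}'=1$ because distinct pairs are independent, and computing both moments from independent Bernoulli indicators.
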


\begin{proof}
For \(f(x)=(w_{ij}+x)^{-1}\), the second-order delta method gives
$\mathbb E[f(S_{ij})]
\approx
f(\mu_{ij})
+
\frac{1}{2}f''(\mu_{ij})\sigma_{ij}^2$, where $\mu_{ij} = \mathbb E[S_{ij}]$  and $\sigma_{ij}^2=\mathrm{Var}(S_{ij})$. 
Since \(f''(x)=2(w_{ij}+x)^{-3}\), the approximation is derived.
\end{proof}

Applying this approximation in Eq.~\eqref{eq:att_exp}, and considering $\mathbb P(A_{ij}'=1)$ for add and drop cases,
we obtain
\begin{equation}
\mathbb E[\alpha_{ij}^{\mathcal G'}]
\approx
\begin{cases}
(1-p)w_{ij}
\left(
\frac{1}{w_{ij}+\mu_{ij}}
+
\frac{\sigma_{ij}^2}{(w_{ij}+\mu_{ij})^3}
\right),
& A_{ij}=1,\\[2.5ex]
q w_{ij}
\left(
\frac{1}{w_{ij}+\mu_{ij}}
+
\frac{\sigma_{ij}^2}{(w_{ij}+\mu_{ij})^3}
\right),
& A_{ij}=0.
\end{cases}
\label{eq:att-exp-2case}    
\end{equation}

For existing edges, the RADE-OF and RADE-OFS correction factors become
$\alpha_{ij}^{\mathcal G}/ \mathbb E[\alpha_{ij}^{\mathcal G'}]$.
Thus, as in GCN-style normalization, the correction is pair-dependent. For non-edges, the same expectation terms define the weighted centering mean $\boldsymbol\mu_i^{(\ell-1)}$ in RADE-OF,
while RADE-OFS retains the expected non-edge contribution in the modified
inference-time aggregation.

The computational bottleneck in Eq.~\eqref{eq:att-exp-2case} is the computation of pair-specific \(\mu_{ij}\), \(\sigma_{ij}^2\); specifically, the bottleneck is in the computation of their non-edge part. In practice, we compute the input-edge terms exactly and estimate only the non-edge terms by sampling. We sample once per target node $i$, not once per pair $(i,j)$. We uniformly sample $s$ non-neighbors of node $i$, denoted by $r_1,\ldots,r_s$, through rejection sampling, which rejects existing edges and the self-node $i$. Since the same sample is reused for all pairs $(i,j)$, we do not reject the candidate $j$ during sampling. Instead, we first estimate
$\widehat\mu_{i}
=
(1-p)\sum_{\substack{k\neq i\\ A_{ik}=1}}w_{ik}
+
q\frac{|\{k:k\neq i,\ A_{ik}=0\}|}{s}
\sum_{t=1}^{s}w_{ir_t},$ and 
$\widehat\sigma_{i}^{\,2}
=
p(1-p)\sum_{\substack{k\neq i\\ A_{ik}=1}}w_{ik}^2
+
q(1-q)\frac{|\{k:k\neq i,\ A_{ik}=0\}|}{s}
\sum_{t=1}^{s}w_{ir_t}^2$.
Then, for each pair $(i,j)$, we form the pair-excluded estimates by
subtraction:
$\widehat\mu_{ij}=\widehat\mu_i-(1-p)w_{ij}$ for $A_{ij}=1$, $\widehat\mu_{ij}= \widehat\mu_i-qw_{ij}$ for $A_{ij}=0$, and $\widehat\sigma_{ij}^{\,2} = \widehat\sigma_i^{\,2}-p(1-p)w_{ij}^2$ for $A_{ij}=1$, 
$\widehat\sigma_{ij}^{\,2}=\widehat\sigma_i^{\,2}-q(1-q)w_{ij}^2$ for $A_{ij}=0$.
Substituting these estimates into the approximation above gives the sampled version of \(\mathbb E[\alpha_{ij}^{\mathcal G'}]\). The same sampling idea is used for the non-edge centering term in RADE-OF and the modified inference-time non-edge contribution in RADE-OFS.

\section{Proof of Proposition~\ref{prop:logit-variance}}
\label{App:Proof2}

\begin{repproposition}[Logit perturbation variance]{prop:logit-variance}
Assume that, for each node $i$, the stochastic aggregation coefficients
$\{\alpha_{ij}^{\mathcal G'}\}_{j\neq i}$ are mutually independent across $j$.
Then
$\mathrm{Var}(\delta_i)
=
\sum_{j\neq i}
\widetilde m_{ij}^{\,2}\,
\mathrm{Var}\!\left(\alpha_{ij}^{\mathcal G'}\right).
$
Multiclass results hold coordinate-wise.
\end{repproposition}

\begin{proof}
Condition on $\mathbf H^{(L-1)}$, so that the scalar message contributions
$\{\widetilde m_{ij}\}_{j\neq i}$ are deterministic with respect to the
augmentation randomness. 
At the final layer, the perturbed logit can be written as
$\widetilde s_i=\sum_{j\neq i}
\alpha_{ij}^{\mathcal G'}\widetilde m_{ij}$, where $\delta_i=\widetilde s_i-s_i$. 
Since $s_i$ is deterministic under the conditioning,
\begin{align*}
\mathrm{Var}(\delta_i)
=
\mathrm{Var}(\widetilde s_i-s_i)
=
\mathrm{Var}(\widetilde s_i)
=
\mathrm{Var}\!\left(
\sum_{j\neq i}
\alpha_{ij}^{\mathcal G'}\widetilde m_{ij}
\right)
=
\sum_{j\neq i}
\mathrm{Var}\!\left(
\alpha_{ij}^{\mathcal G'}\widetilde m_{ij}
\right)
=
\sum_{j\neq i}
\widetilde m_{ij}^{\,2}
\mathrm{Var}\!\left(\alpha_{ij}^{\mathcal G'}\right)
\end{align*}
For multiclass classification with a linear head, the same argument applies to
each logit coordinate. For class $c$, let
$\widetilde s_{i,c}=\sum_{j\neq i}
\alpha_{ij}^{\mathcal G'}\widetilde m_{ij,c}$, where
$\delta_{i,c}=\widetilde s_{i,c}-s_{i,c}$. 
Then
$
\mathrm{Var}(\delta_{i,c})
=
\sum_{j\neq i}
\widetilde m_{ij,c}^{\,2}
\mathrm{Var}\!\left(\alpha_{ij}^{\mathcal G'}\right).
$
\end{proof}

\section{Logit Variance for RADE-OF and RADE-OFS}
\label{APP:variance_instantiations}

Proposition~\ref{prop:logit-variance} shows that, conditioned on
$\mathbf H^{(L-1)}$, the logit perturbation variance is $\mathrm{Var}(\delta_i)=\sum_{j\neq i}\widetilde m_{ij}^{\,2}\mathrm{Var}\!\left(\alpha_{ij}^{\mathcal G'}\right)$, where $\widetilde m_{ij}$ is the scalar corrected contribution of message $j$ to $i$ for the final layer. Thus, the computation of \(\mathrm{Var}(\delta_i)\) requires computing
\(\mathrm{Var}(\alpha_{ij}^{\mathcal G'})\) for any chosen aggregation rule.
Below, we derive this term for sum aggregation (in GIN), GCN-style symmetric
normalization, and GAT-style attention aggregation. When possible, we also simplify the logit variance expression for different aggregation rules by first decomposing it into two sums corresponding to existing edges and non-edges:

\[
\mathrm{Var}(\delta_i)
=
\sum_{j:A_{ij}=1}
\widetilde m_{ij}^2
\mathrm{Var}\!\left(\alpha_{ij}^{\mathcal G'}\right)
+
\sum_{j:A_{ij}=0}
\widetilde m_{ij}^2
\mathrm{Var}\!\left(\alpha_{ij}^{\mathcal G'}\right),
\]

\subsection*{(I) Sum aggregation}
For sum aggregation, $\alpha_{ij}^{\mathcal G}=A_{ij}$. Under the RADE perturbation (Eq.~\eqref{eq:rade-perturb}), we have $\mathbb E[\alpha_{ij}^{\mathcal G'}]=\mathbb E[A_{ij}^\prime]$, which is $1-p$ for dropping edges, and $q$ for adding non-edges. Then, $\mathrm{Var}\!\left(\alpha_{ij}^{\mathcal G'}\right)=p(1-p)$ for $A_{ij}=1$ and $q(1-q)$ for $A_{ij}=0$. 
For the existing-edge term, both RADE-OF and RADE-OFS give
\begin{align*}
\sum_{j:A_{ij}=1}
\widetilde m_{ij}^2
\mathrm{Var}\!\left(\alpha_{ij}^{\mathcal G'}\right)
=
\sum_{j:A_{ij}=1}
\frac{
\big(\alpha_{ij}^{\mathcal G}\big)^2 m_{ij}^2
}
{
\big(\mathbb E[\alpha_{ij}^{\mathcal G'}]\big)^2
}
\mathrm{Var}\!\left(\alpha_{ij}^{\mathcal G'}\right)
=
\sum_{j:A_{ij}=1}
\frac{m_{ij}^2}{(1-p)^2}p(1-p)
=
\frac{p}{1-p}
\sum_{j:A_{ij}=1}
m_{ij}^2 .
\end{align*}

For non-edges, $\mathbb E[\alpha_{ij}^{\mathcal G'}]=q$ is constant over all
$j$ such that $A_{ij}=0$. Thus, the RADE-OF weighted mean reduces to the
uniform non-neighbor mean: $\mu_i=\frac{1}{n - d_i}\sum_{j:A_{ij}=0}m_{ij}$. Then, the addition-induced term is $q(1-q)\sum_{j:A_{ij}=0} (m_{ij}-\mu_i)^2$ for RADE-OF, and $q(1-q)\sum_{j:A_{ij}=0} m_{ij}^2$ for RADE-OFS. Thus, under sum aggregation,
\[
\mathrm{Var}(\delta_i)
=
\frac{p}{1-p}
\sum_{j:A_{ij}=1}
m_{ij}^2
+
q(1-q)
\sum_{j:A_{ij}=0}
(m_{ij}-\mu_i)^2
\]
for RADE-OF, while
\[
\mathrm{Var}(\delta_i)
=
\frac{p}{1-p}
\sum_{j:A_{ij}=1}
m_{ij}^2
+
q(1-q)
\sum_{j:A_{ij}=0}
m_{ij}^2
\]
for RADE-OFS.
The difference between the two variants is exactly the centering of the non-edge
messages. Since  $\sum_{j:A_{ij}=0}
m_{ij}^2=\sum_{j:A_{ij}=0}(m_{ij}-\mu_i)^2+\left(n-d_i\right)\mu_i^2$, RADE-OFS has an additional non-edge variance contribution of $q(1-q)\left(n - d_i\right)\mu_i^2$ relative to RADE-OF under sum aggregation.

\subsection*{(II) Symmetric degree normalization}

For GCN-style symmetric normalization, $\alpha_{ij}^{\mathcal G}=\frac{A_{ij}}{\sqrt{d_id_j}}$. The coefficient expectations
$\mathbb E[\alpha_{ij}^{\mathcal G'}]$ were derived in
Appendix~\ref{APP:GIN/GCN_Unbiased}. To obtain the variance terms, we need the second moment
$\mathbb E[(\alpha_{ij}^{\mathcal G'})^2]$, since
$\mathrm{Var}\!\left(\alpha_{ij}^{\mathcal G'}\right)
=\mathbb E\!\left[(\alpha_{ij}^{\mathcal G'})^2\right]-
\left(\mathbb E[\alpha_{ij}^{\mathcal G'}]\right)^2$. As $(A_{ij}')^2=A_{ij}'$, $(\alpha_{ij}^{\mathcal G'})^2=\frac{A_{ij}'}{d_i'd_j'}$. Thus, conditioning on $A_{ij}'=1$ gives
\begin{align*}
\mathbb E\!\left[
(\alpha_{ij}^{\mathcal G'})^2
\right]
&=
\mathbb P(A_{ij}'=1)
\mathbb E\!\left[
(d_i')^{-1}
\,\middle|\,
A_{ij}'=1
\right]
\mathbb E\!\left[
(d_j')^{-1}
\,\middle|\,
A_{ij}'=1
\right].
\end{align*}
The factorization uses the same conditional-independence argument as in
Appendix~\ref{APP:GIN/GCN_Unbiased}. We use the following delta-method approximation for each endpoint inverse-degree term.

\begin{lemma}[Endpoint inverse-degree approximation]
\label{lem:endpoint_inverse_degree_delta}
Fix a pair $(i,j)$ and condition on $A_{ij}'=1$. Let $\mu_i=\mathbb E[d_i'\mid A_{ij}'=1]$, and $\sigma_i^2=\mathrm{Var}(d_i'\mid A_{ij}'=1)$. Then
$
\mathbb E\!\left[
(d_i')^{-1}
\,\middle|\,
A_{ij}'=1
\right]
\approx
\mu_i^{-1}
+
\sigma_i^2\mu_i^{-3}.
$
\end{lemma}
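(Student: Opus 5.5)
The plan mirrors the proof of Lemma~\ref{lem:endpoint_degree_delta}: a second-order delta-method expansion of a smooth function of the perturbed degree around its conditional mean. Throughout, condition on $A_{ij}'=1$. Write $d_i'=\mu_i+\Delta_i$ with $\mathbb E[\Delta_i\mid A_{ij}'=1]=0$ and $\mathbb E[\Delta_i^2\mid A_{ij}'=1]=\sigma_i^2$. The moments $\mu_i$ and $\sigma_i^2$ are exactly those already computed in Appendix~\ref{APP:GIN/GCN_Unbiased}, separately for the cases $A_{ij}=1$ and $A_{ij}=0$. Nothing new is needed about the perturbation; the lemma is purely an analytic approximation step.

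Take $f(x)=x^{-1}$, so $f'(x)=-x^{-2}$ and $f''(x)=2x^{-3}$. Expand to second order around $\mu_i$:
\[
f(d_i')\approx \mu_i^{-1}-\mu_i^{-2}\Delta_i+\mu_i^{-3}\Delta_i^2 .
\]
Then take the conditional expectation. The linear term vanishes because $\Delta_i$ is mean zero. The quadratic term contributes $\tfrac12 f''(\mu_i)\sigma_i^2=\sigma_i^2\mu_i^{-3}$. This gives $\mathbb E[(d_i')^{-1}\mid A_{ij}'=1]\approx\mu_i^{-1}+\sigma_i^2\mu_i^{-3}$, as claimed.

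There is no real obstacle, since the statement is an approximation in the same sense as Lemma~\ref{lem:endpoint_degree_delta}. The only point worth addressing is legitimacy. Conditional on $A_{ij}'=1$, the self-loop and the edge $(i,j)$ force $d_i'\ge 2$, so $f$ is smooth and bounded on the support and the expansion makes sense. The neglected remainder is governed by the third and higher central moments of $d_i'$. Since $d_i'$ is $2$ plus a sum of independent Bernoullis, its third central moment is $O(\sigma_i^2)$. After multiplying by $|f'''(\mu_i)|=6\mu_i^{-4}$, the remainder is of relative order $O(1/\mu_i)$ compared with the retained terms. I would state this remark briefly rather than prove a sharp error bound, matching the level of rigor of the earlier delta-method lemmas.
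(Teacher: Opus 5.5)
Your proposal is correct and matches the paper's proof: both apply the second-order delta method to $g(x)=x^{-1}$, use $g''(x)=2x^{-3}$, and obtain $\mu_i^{-1}+\sigma_i^2\mu_i^{-3}$. Your remark on the remainder, which uses $d_i'\ge 2$ and the fact that a Bernoulli sum has third central moment at most $\sigma_i^2$ in absolute value, goes beyond what the paper states but is consistent with it.
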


\begin{proof}
Let $g(x)=x^{-1}$. The second-order delta method gives
\begin{align*}
\mathbb E[g(d_i')\mid A_{ij}'=1]
&\approx
g(\mu_i)
+
\frac{1}{2}g''(\mu_i)\sigma_i^2
=
\mu_i^{-1}
+
\frac{1}{2}
\left(
2\mu_i^{-3}
\right)
\sigma_i^2
=
\mu_i^{-1}
+
\sigma_i^2\mu_i^{-3}.
\end{align*}
\end{proof}

For an existing edge $A_{ij}=1$, we use the case $A_{ij}=1$ conditional moments $\mu_i,\sigma_i^2,\mu_j,\sigma_j^2$ from
Appendix~\ref{APP:GIN/GCN_Unbiased}. Since
$\mathbb P(A_{ij}'=1)=1-p$, $\mathbb E\!\left[
(\alpha_{ij}^{\mathcal G'})^2
\right]
\approx
(1-p)
\left(
\mu_i^{-1}
+
\sigma_i^2\mu_i^{-3}
\right)
\left(
\mu_j^{-1}
+
\sigma_j^2\mu_j^{-3}
\right)$, and 
$\mathrm{Var}\!\left(\alpha_{ij}^{\mathcal G'}\right)
\approx
(1-p)
\left(
\mu_i^{-1}
+
\sigma_i^2\mu_i^{-3}
\right)
\left(
\mu_j^{-1}
+
\sigma_j^2\mu_j^{-3}
\right)
-
\left(
\mathbb E[\alpha_{ij}^{\mathcal G'}]
\right)^2$.

For a non-edge $A_{ij}=0$, we use the Case $A_{ij}=0$ conditional moments from Appendix~\ref{APP:GIN/GCN_Unbiased}. Since $\mathbb P(A_{ij}'=1)=q$, 
$\mathbb E\!\left[
(\alpha_{ij}^{\mathcal G'})^2
\right]
\approx
q
\left(
\mu_i^{-1}
+
\sigma_i^2\mu_i^{-3}
\right)
\left(
\mu_j^{-1}
+
\sigma_j^2\mu_j^{-3}
\right)$, 
$\mathrm{Var}\!\left(\alpha_{ij}^{\mathcal G'}\right)
\approx
q
\left(
\mu_i^{-1}
+
\sigma_i^2\mu_i^{-3}
\right)
\left(
\mu_j^{-1}
+
\sigma_j^2\mu_j^{-3}
\right)
-
\left(
\mathbb E[\alpha_{ij}^{\mathcal G'}]
\right)^2$.

Substituting this coefficient variance into the generic RADE-OF and RADE-OFS logit-variance forms above gives the GCN variance instantiations.

\subsection*{(III) Attention normalization}
For GAT-style attention aggregation, the coefficient on graph $\mathcal{G} $ can be written as $\alpha_{ij}^{\mathcal G}=\frac{A_{ij}\exp(\beta_{ij})} {\sum_{k\neq i}A_{ik}\exp(\beta_{ik})}$, where $\beta_{ij}$ is the raw attention score before softmax. Conditional on \(\mathbf H^{(L-1)}\), the raw scores are fixed. Then, on the
perturbed graph, $\alpha_{ij}^{\mathcal G'}=\frac{A_{ij}'w_{ij}}
{\sum_{k\neq i}A_{ik}'w_{ik}}$ with $w_{ik} = \exp(\beta_{ik})$ being constant.
 As in Appendix~\ref{APP:GIN/GCN_Unbiased}, fix a pair
\((i,j)\), condition on \(A_{ij}'=1\), and write $S_{ij}=\sum_{k\neq i,j}A_{ik}'w_{ik}$. 
Let $\mu_{ij}=(1-p)\sum_{\substack{k\neq i,j\\ A_{ik}=1}}w_{ik}+
q\sum_{\substack{k\neq i,j\\ A_{ik}=0}}w_{ik}$, and $\sigma_{ij}^2=p(1-p)\sum_{\substack{k\neq i,j\\ A_{ik}=1}}w_{ik}^2
+q(1-q)\sum_{\substack{k\neq i,j\\ A_{ik}=0}}w_{ik}^2$. 
The first moment \(\mathbb E[\alpha_{ij}^{\mathcal G'}]\) is given by
Eq.~\eqref{eq:att-exp-2case}. To compute the coefficient variance, we also
need the second moment:
\begin{equation}
\label{eq:gat_second_moment_eq}
    \mathbb E[(\alpha_{ij}^\mathcal{G'})^2] = \mathbb P(A_{ij}'=1)
w_{ij}^2\mathbb E\!\left[
\frac{1}{(w_{ij}+S_{ij})^2}\mid A_{ij}'=1
\right],
\end{equation}
where we use the following lemma for its approximation.

\begin{lemma}[Attention second-moment approximation]
\label{lem:gat_attention_second_moment_delta}
With \(S_{ij}\), \(\mu_{ij}\), and \(\sigma_{ij}^2\) defined above,
\[
\mathbb E\!\left[
\frac{1}{(w_{ij}+S_{ij})^2}
\right]
\approx
\frac{1}{(w_{ij}+\mu_{ij})^2}
+
\frac{3\sigma_{ij}^2}{(w_{ij}+\mu_{ij})^4}.
\]
\end{lemma}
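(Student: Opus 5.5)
We apply the second-order delta method, exactly as in Lemmas~\ref{lem:gat_attention_denominator_delta} and~\ref{lem:endpoint_inverse_degree_delta}, now with the function
\[
h(x)=(w_{ij}+x)^{-2}
\]
evaluated at the random variable $S_{ij}$. Conditional on $\mathbf H^{(L-1)}$, the weights $w_{ik}$ are constants, so $S_{ij}=\sum_{k\neq i,j}A_{ik}'w_{ik}$ is a weighted sum of independent Bernoulli variables. These are $\mathrm{Bernoulli}(1-p)$ on input-graph neighbors and $\mathrm{Bernoulli}(q)$ on non-neighbors.

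\textbf{Step 1 (moments of $S_{ij}$).} We first verify that $\mathbb E[S_{ij}]=\mu_{ij}$ and $\mathrm{Var}(S_{ij})=\sigma_{ij}^2$. Linearity of expectation gives the mean, since $\mathbb E[A_{ik}']=1-p$ or $q$ according to whether $A_{ik}=1$ or $0$. Independence of the $A_{ik}'$ across $k$ gives the variance as a sum of $w_{ik}^2$ times the Bernoulli variances $p(1-p)$ and $q(1-q)$. Because $S_{ij}$ excludes the index $j$, conditioning on $A_{ij}'=1$ leaves its distribution unchanged. This is why the lemma can be stated unconditionally and then substituted into Eq.~\eqref{eq:gat_second_moment_eq}.

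\textbf{Step 2 (Taylor expansion).} We expand $h(S_{ij})$ to second order around $\mu_{ij}$:
\[
h(S_{ij})\approx h(\mu_{ij})+h'(\mu_{ij})(S_{ij}-\mu_{ij})+\tfrac12 h''(\mu_{ij})(S_{ij}-\mu_{ij})^2 .
\]
Taking expectations, the linear term vanishes because $S_{ij}-\mu_{ij}$ has mean zero, and the quadratic term contributes $\tfrac12 h''(\mu_{ij})\sigma_{ij}^2$.

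\textbf{Step 3 (derivatives).} We have $h'(x)=-2(w_{ij}+x)^{-3}$ and $h''(x)=6(w_{ij}+x)^{-4}$. Substituting, $\tfrac12 h''(\mu_{ij})\sigma_{ij}^2=3\sigma_{ij}^2/(w_{ij}+\mu_{ij})^4$, which gives the stated approximation.

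\textbf{Main obstacle.} The one subtle point is justifying that the approximation is meaningful, since the delta method is heuristic here: it neglects the cubic and higher central moments of $S_{ij}$. As with the earlier lemmas, the statement is only an approximation, so no remainder bound is required. If we wanted more rigor, we would note two facts. First, $h$ is smooth and bounded on $[0,\infty)$, because $w_{ij}>0$. Second, the remainder is controlled by $\mathbb E|S_{ij}-\mu_{ij}|^3/(w_{ij}+\mu_{ij})^5$, which is small relative to the retained terms when the relative fluctuation $\sigma_{ij}/(w_{ij}+\mu_{ij})$ is small.

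Finally, substituting the result into Eq.~\eqref{eq:gat_second_moment_eq} and subtracting the squared first moment from Eq.~\eqref{eq:att-exp-2case} yields $\mathrm{Var}(\alpha_{ij}^{\mathcal G'})$ in both the $A_{ij}=1$ and $A_{ij}=0$ cases.
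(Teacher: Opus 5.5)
Your proof is correct and matches the paper's argument: a second-order delta method applied to $h(x)=(w_{ij}+x)^{-2}$ with $h''(x)=6(w_{ij}+x)^{-4}$. Your Step 1 also makes explicit what the paper leaves implicit, namely that the stated $\mu_{ij},\sigma_{ij}^2$ are the mean and variance of $S_{ij}$, and that $S_{ij}$ is independent of $A_{ij}'$, so the conditioning in Eq.~\eqref{eq:gat_second_moment_eq} can be dropped.

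One caveat concerns your optional rigor remark. The remainder is not uniformly controlled by $\mathbb E|S_{ij}-\mu_{ij}|^3/(w_{ij}+\mu_{ij})^5$: on the event $S_{ij}<\mu_{ij}$ (for example $S_{ij}=0$, which has positive probability), it picks up an extra factor of up to $(1+\mu_{ij}/w_{ij})^2$. A genuine error bound would therefore also need a lower-tail estimate for $S_{ij}$, though none is required for the approximate statement.
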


\begin{proof}
For \(g(x)=(w_{ij}+x)^{-2}\), the second-order delta method gives
$\mathbb E[g(S_{ij})]
\approx
g(\mu_{ij})+\frac{1}{2}g''(\mu_{ij})\sigma_{ij}^2$, where $\mu_{ij} = \mathbb{E}[S_{ij}]$, and $\sigma^2_{ij} = \mathrm{Var}(S_{ij})$. Since \(g''(x)=6(w_{ij}+x)^{-4}\), the result follows.
\end{proof}
Applying Lemma~\ref{lem:gat_attention_second_moment_delta} to Eq.~\eqref{eq:gat_second_moment_eq}, and considering $\mathbb P(A'_{ij}=1)$ for add and drop cases, we obtain
\[
\mathbb E\!\left[
(\alpha_{ij}^{\mathcal G'})^2
\right]
\approx
\begin{cases}
(1-p)w_{ij}^2
\left(
\frac{1}{(w_{ij}+\mu_{ij})^2}
+
\frac{3\sigma_{ij}^2}{(w_{ij}+\mu_{ij})^4}
\right),
& A_{ij}=1,\\[2.5ex]
q w_{ij}^2
\left(
\frac{1}{(w_{ij}+\mu_{ij})^2}
+
\frac{3\sigma_{ij}^2}{(w_{ij}+\mu_{ij})^4}
\right),
& A_{ij}=0.
\end{cases}
\label{eq:gat-alpha-second-delta}
\]
Thus,
\[
\mathrm{Var}\!\left(\alpha_{ij}^{\mathcal G'}\right)
\approx
\mathbb E\!\left[
(\alpha_{ij}^{\mathcal G'})^2
\right]
-
\left(
\mathbb E[\alpha_{ij}^{\mathcal G'}]
\right)^2.
\label{eq:gat-alpha-var-delta}
\]

The computational bottleneck is again the computation of
$\mu_{ij}$ and $\sigma_{ij}^2$, in
$\mathrm{Var}(\alpha_{ij}^{\mathcal G'})$, especially for their non-edge part. Thus, we compute the
input-edge terms exactly and estimate only the non-edge terms by sampling. We
uniformly sample $s$ non-neighbors for node $i$, denoted by
$r_1,\ldots,r_s$, through rejection sampling, which rejects existing edges and the self-node \(i\). Since the same sample is reused for all pairs
$(i,j)$, we do not reject the candidate $j$ during sampling. Instead, we first estimate 
$\widehat\mu_{i}
=
(1-p)\sum_{\substack{k\neq i\\ A_{ik}=1}}w_{ik}
+
q\frac{|\{k:k\neq i,\ A_{ik}=0\}|}{s}
\sum_{t=1}^{s}w_{ir_t}$, and
$\widehat\sigma_{i}^{\,2}
=
p(1-p)\sum_{\substack{k\neq i\\ A_{ik}=1}}w_{ik}^2
+
q(1-q)\frac{|\{k:k\neq i,\ A_{ik}=0\}|}{s}
\sum_{t=1}^{s}w_{ir_t}^2$. Then, for each pair $(i,j)$, we form the pair-excluded estimates by subtraction: $\widehat\mu_{ij} = \widehat\mu_i-(1-p)w_{ij}$ for $A_{ij}=1$, $\widehat\mu_{ij}=\widehat\mu_i-qw_{ij}$ for $A_{ij}=0$, and $\widehat\sigma_{ij}^{2} = \widehat\sigma_i^{2}-p(1-p)w_{ij}^2$ for $A_{ij}=1$, $\widehat\sigma_{ij}^{2}=\widehat\sigma_i^{\,2}-q(1-q)w_{ij}^2$ for $A_{ij}=0$.
Substituting these estimates into the moment approximations above gives the
sampled version of \(\mathrm{Var}(\alpha_{ij}^{\mathcal G'})\). Substituting this
coefficient variance into Proposition~\ref{prop:logit-variance} gives the GAT
variance instantiation.

The same sampling idea is used for the final non-edge sums in $\mathrm{Var}(\delta_i)$: for RADE-OF, we estimate 
$\sum_{\substack{j\neq i\\ A_{ij}=0}}(m_{ij}-\mu_i)^2\mathrm{Var}(\alpha_{ij}^{\mathcal G'})$, while for
RADE-OFS, we estimate $\sum_{\substack{j\neq i\\ A_{ij}=0}}m_{ij}^2\mathrm{Var}(\alpha_{ij}^{\mathcal G'})$.

\section{Proof of Proposition~\ref{prop:subsumption}}

\begin{repproposition}[Restricted interchangeability of Drop-only and Add-only]{prop:subsumption}
For sum aggregation, Drop-only and Add-only only subsume each other by matching their node-wise variances under a restrictive setting in which there should exist a constant $\kappa>0$ such that $\sum_{j:A_{ij}=0}\widetilde m_{ij}^{\,2}=\kappa\sum_{j:A_{ij}=1}m_{ij}^{2}$ for every node $i$. 
\end{repproposition}

\begin{proof}
Under sum aggregation, the RADE variance decomposition gives the
node-wise variances for the two pure strategies. Drop-only, obtained by
setting $q=0$, gives $\mathrm{Var}(\delta_i)\big|_{q=0}
=\frac{p}{1-p}\sum_{j:A_{ij}=1} m_{ij}^2$. Add-only, obtained by setting the deletion rate to zero, gives $\mathrm{Var}(\delta_i)\big|_{p=0}=q(1-q)
\sum_{j:A_{ij}=0} \widetilde m_{ij}^{\,2}$. For exact node-by-node matching with single global rates, one requires that these two quantities 
be equal for every node $i$:
$\frac{p}{1-p}\sum_{j:A_{ij}=1} m_{ij}^2=q(1-q)\sum_{j:A_{ij}=0} \widetilde m_{ij}^{\,2}$. 
For a single node, this equality may be satisfied by choosing an
appropriate rate. However, there is a restriction that the same rates must
match all nodes. For two nodes $i$ and $k$, matching requires
\[
\frac{
\sum_{j:A_{ij}=0} \widetilde m_{ij}^{\,2}
}{
\sum_{j:A_{ij}=1} m_{ij}^2
}
=
\frac{
\frac{p}{1-p}
}{
q(1-q)
}
=
\frac{
\sum_{j:A_{kj}=0} \widetilde m_{kj}^{\,2}
}{
\sum_{j:A_{kj}=1} m_{kj}^2
},
\]
whenever the edge-message statistics in the denominators are nonzero.
Thus, the ratio between the non-edge and edge message statistics must be
the same for both nodes. Extending this argument to all nodes, exact
global matching is possible only if there exists a constant $\kappa>0$
such that $\sum_{j:A_{ij}=0} \widetilde m_{ij}^{\,2}=\kappa
\sum_{j:A_{ij}=1} m_{ij}^2$ for every node $i$. This proportionality condition is restrictive because Drop-only message statistic for each node $i$ is impacted by node i's existing edges, whereas its Add-only counterpart depends on a distinct set of non-edges. Even these sets change from one node to another.


\end{proof}

\section{Adaptive \texorpdfstring{$(p,q)$}{(p,q)} via Gradient-Norm Balancing}
\label{app:pq-selection}

We describe the procedure to adapt the RADE augmentation rates $(p,q)$
during training. The goal is to choose perturbation rates whose variance-based
regularization effect has a gradient scale comparable to that of the supervised
objective, while avoiding excessive graph densification from edge addition. The training procedure is summarized in Algorithm~\ref{alg:rade-train-pqgradnorm}.

Let $\theta_s$ denote the shared trainable parameters. For a mini-batch $B$, let
$L_{\mathrm{data}}(B)$ denote the supervised objective on batch $B$. We measure the supervised
gradient norm by $G_{\mathrm{data}}^{B}=\left\|\nabla_{\theta_s}L_{\mathrm{data}}(B)\right\|_2$. 
For a candidate pair $(p,q)$, we define the RADE regularization proxy as
$R(B,p,q)=\frac{1}{2}\sum_{i\in B}
z_i(1-z_i)\,\mathrm{Var}(\delta_i)$, with
$z_i=\sigma(s_i)$, $s_i$ the logit from the input-graph forward pass, and
$\mathrm{Var}(\delta_i)$ the RADE logit-variance expression under rates $(p,q)$. For multi-class classification, with
$\mathbf p_i=\mathrm{softmax}(\mathbf s_i)$, we use the coordinate-wise analogue
$R(B,p,q)
=
\frac{1}{2}
\sum_{i\in B}
\sum_{c=1}^{C}
p_{i,c}(1-p_{i,c})\,
\mathrm{Var}(\delta_{i,c})$. 
The corresponding regularization gradient norm is 
$G_{\mathrm{reg}}^{B}(p,q)
=
\left\|
\nabla_{\theta_s}R(B,p,q)
\right\|_2$. 
We update $(p,q)$ by minimizing the mini-batch objective
\[
\mathcal J(p,q)
=
\left[
\log
\left(
\frac{
G_{\mathrm{reg}}^{B}(p,q)+\epsilon
}{
G_{\mathrm{data}}^{B}+\epsilon
}
\right)
\right]^2
+
\lambda
\left(
\frac{q}{D(\mathcal G)}
\right)^2 .
\]
The first term matches the regularization and supervised gradient magnitudes in relative scale. The stabilizer $\epsilon>0$ prevents numerical instability when either norm is small. The second term penalizes edge addition relative to the input-graph density $D(\mathcal G)$. This is important because $q$ is applied over non-edges, so even a small raw value of $q$ can add many edges to sparse graphs. The parameter $\lambda$ controls this trade-off: $\lambda=0$ recovers pure gradient-norm matching, while larger $\lambda$ yields more conservative edge addition. We fix $\lambda$ at the task level, using $\lambda=1$ for node classification and $\lambda=0.1$ for graph classification, rather than tuning it per dataset.

In practice, after each mini-batch, we take one Adam step on
$\mathcal J(p,q)$ with respect to the rate variables. The updated rates are
then used to sample RADE perturbations for the next mini-batch. In full-batch training, there is only one
batch per epoch, so the same update rule reduces to one rate update per epoch.

\noindent \textbf{Density-normalized addition coordinate.}
To make the addition rate comparable across
graphs with different densities, we optimize the density-normalized coordinate $\rho=\frac{q}{D(\mathcal G)}$, where $D(\mathcal G)$ is the graph density.
Thus, Adam updates act on relative edge addition rather than raw non-edge
probability. This is only a change of optimization coordinate: the RADE variance
formulas and the edge-sampling procedure are still evaluated using the actual addition probability $q$. The penalty term $\lambda(q/D(\mathcal G))^2=\lambda\rho^2$ therefore directly discourages excessive densification relative to the input graph density. After each update, we project $(p,\rho)$ to the feasible range and map back to $q = D(\mathcal G)\rho$ for sampling.

\begin{algorithm}[t]
\caption{RADE Training with Mini-batch PQ-GradNorm Updates}
\label{alg:rade-train-pqgradnorm}
\begin{algorithmic}[1]
\REQUIRE training loader producing mini-batches $B$; epochs $E$
\REQUIRE model parameters $\theta$ with shared parameters $\theta_s$; model optimizer $\mathrm{Opt}_{\theta}$
\REQUIRE rate optimizer $\mathrm{Opt}_{p,\rho}$ for $(p,\rho)$
\REQUIRE initial rates $(p,\rho)$; graph density $D(\mathcal G)$; penalty weight $\lambda$; numerical constant $\epsilon$
\FOR{$e=1,\dots,E$}
    \FOR{mini-batch $B$ in the training loader}
        \STATE set $q = D(\mathcal G)\rho$
        \STATE sample RADE perturbation on the current batch using current $(p,q)$
        \STATE compute supervised loss $L_{\mathrm{data}}(B)$
        \STATE update model parameters:
        $\theta \gets \mathrm{Opt}_{\theta}(\theta,\nabla_{\theta}L_{\mathrm{data}}(B))$
        \STATE compute $G_{\mathrm{data}}^{B}$ and $G_{\mathrm{reg}}^{B}(p,q)$, with $q=D(\mathcal G)\rho$
        \STATE form
        \[
        \mathcal J(p,\rho)
        =
        \left[
        \log
        \left(
        \frac{
        G_{\mathrm{reg}}^{B}\left(p,D(\mathcal{G})\rho\right)+\epsilon
        }{
        G_{\mathrm{data}}^{B}+\epsilon
        }
        \right)
        \right]^2
        +
        \lambda\rho^2
        \]
        \STATE take one Adam step on $\mathcal J(p,\rho)$ with respect to $(p,\rho)$
        \STATE project $(p,\rho)$ to the feasible range and set $q=D(\mathcal G)\rho$
    \ENDFOR
\ENDFOR
\end{algorithmic}
\end{algorithm}

\noindent \textbf{Detached-variance convention.}
To keep the GradNorm computation lightweight, we use a detached-variance
convention when computing \(G_{\mathrm{reg}}^{B}(p,q)\). The message statistics entering
$\mathrm{Var}(\delta_i)$ are computed from a forward pass on the input graph and are treated as fixed when differentiating $R(B,p,q)$ with respect to $\theta_s$. Thus, gradients in
$\nabla_{\theta_s}R(B,p,q)$ flow through the uncertainty weights
$z_i(1-z_i)$, or $p_{i,c}(1-p_{i,c})$ in the multi-class case, while the variance term acts as a nonnegative coefficient controlled by $(p,q)$. The variance expression remains a function of $(p,q)$ for the rate update.

\noindent \textbf{GIN, GCN, and GAT evaluation.}
For GIN-style sum aggregation, the variance proxy separates into deletion and addition components, and for a fixed batch, 
$R(B,p,q)=\frac{p}{1-p}R_{\mathrm{drop}}(B)+q(1-q)R_{\mathrm{add}}(B)$. 
Once the two base gradients
$\nabla_{\theta_s}R_{\mathrm{drop}}(B)$ and
$\nabla_{\theta_s}R_{\mathrm{add}}(B)$ are computed, the regularization
gradient norm for any $(p,q)$ can be evaluated analytically:
\begin{align*}
\left(G_{\mathrm{reg}}^{B}(p,q)\right)^2
&=
\left\|
\frac{p}{1-p}\nabla_{\theta_s}R_{\mathrm{drop}}(B)
+
q(1-q)\nabla_{\theta_s}R_{\mathrm{add}}(B)
\right\|_2^2
\\
&=
\left(\frac{p}{1-p}\right)^2
\left\|\nabla_{\theta_s}R_{\mathrm{drop}}(B)\right\|_2^2
+
\bigl(q(1-q)\bigr)^2
\left\|\nabla_{\theta_s}R_{\mathrm{add}}(B)\right\|_2^2
\\
&+
2\frac{p}{1-p}q(1-q)
\left\langle
\nabla_{\theta_s}R_{\mathrm{drop}}(B),
\nabla_{\theta_s}R_{\mathrm{add}}(B)
\right\rangle.
\end{align*}
This makes the GIN case inexpensive: after computing the two base gradients once, the regularization gradient norm for any candidate $(p,q)$ is obtained by an algebraic combination, without an additional backward pass.

For GCN and GAT, this fixed decomposition is unavailable. In GCN,
$(p,q)$ changes the degree-normalization moments entering
\(\mathrm{Var}(\delta_i)\). In GAT, $(p,q)$ changes the attention-denominator
moments, and thus the coefficient expectations and variances. Therefore, for
each current or candidate $(p,q)$ considered by the rate optimizer, we
recompute the corresponding variance expression, form $R(B,p,q)$ under the same
detached-variance convention, and differentiate it to obtain
$G_{\mathrm{reg}}^{B}(p,q)$.

\noindent \textbf{Graph classification.}
Graph classification follows the same principle, except that the supervised logit is graph-level. RADE perturbs node representations through message passing, and the readout pools these perturbations into a graph-level logit perturbation. Thus, $\delta_i$ denotes the perturbation of the graph-level logit for graph $i$, after readout and prediction, rather than a node-level logit perturbation. With linear readout and a linear prediction head, this perturbation is obtained by pooling the node-level perturbation contributions; under sum pooling, this corresponds to summing node-level variance contributions. 

\noindent \textbf{Mini-batch semantics.}
For neighbor-sampling node classification, the supervised loss is taken over the labeled seed nodes, while the RADE variance proxy is constructed from the corresponding sampled
computation graph; the GradNorm objective is then evaluated on those same seed
nodes. For graph classification, complement-based quantities, such as non-edge centering statistics, are computed separately within each graph in the batch,
not across different graphs.

\begin{table*}[h]
\centering
\caption{Node classification results (mean$\pm$std over five runs) with GIN and GAT backbones. Accuracy (\%) on all datasets except Flickr with micro-F1 (\%). Best in bold, second underlined. Green and red shading show improvement and degradation over the backbone.}
\label{tab:node_classification_results_backbones}
\small
\setlength{\tabcolsep}{4pt}
\renewcommand{\arraystretch}{1.15}
\begin{tabularx}{\textwidth}{l*{8}{Y}}
\toprule
\textbf{Method} &
\textbf{Cora} & \textbf{CiteSeer} & \textbf{PubMed} & \textbf{CS} &
\textbf{Computer} & \textbf{Physics} & \textbf{Flickr} & \textbf{ogbn-arxiv} \\
\midrule
\multicolumn{9}{c}{\textbf{Backbone: GIN}} \\
\midrule
GIN & $77.96 \pm 2.66$ & $64.04 \pm 2.64$ & $75.88 \pm 0.19$ & $91.79 \pm 0.38$ & $87.58 \pm 3.15$ & $95.77 \pm 0.02$ & $52.98 \pm 0.19$ & $70.46 \pm 0.19$ \\

Dropout
& \degrade{$76.66 \pm 1.19$}
& \improve{$64.44 \pm 1.32$}
& \degrade{$75.12 \pm 1.32$}
& \improve{$\!\underline{92.88 \pm 0.28}$}
& \improve{$\textbf{90.76} \pm \textbf{0.17}$}
& \improve{$96.01 \pm 0.12$}
& \improve{$\!\underline{53.25 \pm 0.13}$}
& \improve{$70.77 \pm 0.42$} \\

DropEdge
& \degrade{$77.56 \pm 1.77$}
& \improve{$65.02 \pm 1.73$}
& \improve{$76.54 \pm 0.71$}
& \improve{$91.97 \pm 0.11$}
& \improve{$89.64 \pm 0.52$}
& \improve{$95.96 \pm 0.07$}
& \degrade{$52.74 \pm 0.30$}
& \improve{$71.00 \pm 0.32$} \\

DropNode
& \degrade{$77.20 \pm 0.80$}
& \improve{$65.66 \pm 1.01$}
& \improve{$75.88 \pm 0.66$}
& \degrade{$91.73 \pm 0.20$}
& \improve{$89.72 \pm 0.32$}
& \improve{$95.97 \pm 0.15$}
& \degrade{$52.83 \pm 1.23$}
& \improve{$71.49 \pm 0.24$} \\

DropMessage
& \improve{$78.20 \pm 1.83$}
& \improve{$64.70 \pm 1.70$}
& \improve{$\!\underline{77.14 \pm 0.70}$}
& \improve{$92.64 \pm 0.35$}
& \improve{$90.38 \pm 0.90$}
& \improve{$\!\underline{96.20 \pm 0.09}$}
& {$52.98 \pm 0.19$}
& \improve{$\!\underline{71.55 \pm 0.20}$} \\

\midrule

RADE-OF
& \improve{$\textbf{78.64} \pm \textbf{0.68}$}
& \improve{$\textbf{65.94} \pm \textbf{1.45}$}
& \improve{$77.09 \pm 0.95$}
& \improve{$\textbf{93.01} \pm \textbf{0.14}$}
& \improve{$\!\underline{90.50 \pm 0.15}$}
& \improve{$\textbf{96.25} \pm \textbf{0.10}$}
& \improve{$\textbf{53.35} \pm \textbf{0.08}$}
& \improve{$\textbf{71.85} \pm \textbf{0.32}$} \\

RADE-OFS
& \improve{$\!\underline{78.22 \pm 2.17}$}
& \improve{$\!\underline{65.69 \pm 1.85}$}
& \improve{$\textbf{77.52} \pm \textbf{1.19}$}
& \improve{$92.75 \pm 0.25$}
& \improve{$89.82 \pm 0.48$}
& \improve{$96.24\pm 0.05$}
& \improve{$53.02 \pm 0.15$}
& \improve{$70.85 \pm 0.65$} \\

\midrule
\multicolumn{9}{c}{\textbf{Backbone: GAT}} \\
\midrule
GAT & $77.44 \pm 1.03$ & $65.12 \pm 0.80$ & $73.46 \pm 0.96$ & $90.75 \pm 0.68$ & $88.86 \pm 0.48$ & $95.90 \pm 0.06$ & $51.40 \pm 0.24$ & $69.51 \pm 0.42$ \\

Dropout
& \improve{$78.04 \pm 1.00$}
& \improve{$\textbf{66.70} \pm \textbf{1.26}$}
& \improve{$74.78 \pm 1.72$}
& \improve{$91.44 \pm 0.18$}
& \improve{$89.67 \pm 0.28$}
& \improve{$95.95 \pm 0.09$}
& \improve{$51.41 \pm 0.42$}
& \improve{$70.63 \pm 0.10$} \\

DropEdge
& \improve{$78.10 \pm 1.06$}
& \improve{$65.44 \pm 1.02$}
& \improve{$74.06 \pm 1.08$}
& \improve{$91.23 \pm 0.34$}
& \improve{$90.13 \pm 0.13$}
& \improve{$96.01 \pm 0.09$}
& \improve{$51.92 \pm 0.16$}
& \improve{$69.72 \pm 0.14$} \\

DropNode
& \improve{$79.22 \pm 2.29$}
& \degrade{$64.34 \pm 1.79$}
& \improve{$74.16 \pm 0.98$}
& \improve{$91.41 \pm 0.39$}
& \degrade{$82.70 \pm 1.57$}
& \improve{$\! \underline{96.01\pm 0.05}$}
& \improve{$51.95 \pm 0.10$}
& \improve{$70.18 \pm 0.34$} \\

DropMessage
& \improve{$79.72 \pm 1.30$}
& \degrade{$63.78 \pm 1.51$}
& \improve{$75.82 \pm 1.45$}
& \degrade{$88.44 \pm 0.69$}
& \improve{$89.99 \pm 0.30$}
& \improve{$95.96 \pm 0.05$}
& \degrade{$51.30 \pm 0.18$}
& \improve{$69.83 \pm 0.27$} \\

\midrule

RADE-OF
& \improve{$\textbf{80.32} \pm \textbf{0.78}$}
& \improve{$\! \underline{66.34 \pm 0.85}$}
& \improve{$\! \underline{76.96 \pm 0.60}$}
& \improve{$\textbf{92.08} \pm \textbf{0.13}$}
& \improve{$\textbf{90.40} \pm \textbf{0.36}$}
& \improve{$\textbf{96.03} \pm \textbf{0.25}$}
& \improve{$\textbf{51.98} \pm \textbf{0.21}$}
& \improve{$\textbf{70.98} \pm \textbf{0.12}$} \\

RADE-OFS
& \improve{$\! \underline{80.22 \pm 1.44}$}
& \improve{$66.10 \pm 0.72$}
& \improve{$\textbf{77.02} \pm \textbf{1.15}$}
& \improve{$\! \underline{92.07 \pm 0.24}$}
& \improve{$\! \underline{90.24 \pm 0.12}$}
& \improve{$95.93 \pm 0.14$}
& \improve{$\! \underline{51.94 \pm 0.15}$}
& \improve{$\! \underline{70.66 \pm 0.16}$} \\

\bottomrule
\end{tabularx}
\end{table*}

\begin{table*}[!h]
\centering
\caption{Graph classification results (mean$\pm$std) with GIN and GAT backbones. Accuracy (\%) on TU, ROC-AUC (\%) on ogbg-molhiv, and AP (\%) on peptides-func. Best in bold and second underlined. Green/red shading shows gains/losses vs. the backbone.}
\label{tab:graph_classification_results_backbones}
\small
\setlength{\tabcolsep}{4pt}
\renewcommand{\arraystretch}{1.15}

\begin{tabularx}{\textwidth}{l*{6}{Y}}
\toprule
\textbf{Method}
& \textbf{MUTAG}
& \textbf{PROTEINS}
& \textbf{IMDB-B}
& \textbf{IMDB-M}
& \textbf{ogbg-molhiv}
& \textbf{peptides-func} \\
\midrule
\multicolumn{7}{c}{\textbf{Backbone: GIN}} \\
\midrule
GIN
& $84.22 \pm 6.12$
& $75.19 \pm 5.27$
& $72.40 \pm 2.93$
& $48.66 \pm 4.40$
& $73.84 \pm 1.25$
& $55.63 \pm 1.24$ \\
Dropout
& \improve{$84.41 \pm 8.94$}
& \degrade{$74.74 \pm 3.81$}
& \degrade{$71.10 \pm 7.68$}
& \improve{$48.73 \pm 4.83$}
& \improve{$73.98 \pm 1.66$}
& \improve{$55.65 \pm 0.82$} \\
DropEdge
& \improve{$85.05 \pm 6.28$}
& \degrade{$75.14 \pm 4.62$}
& \degrade{$71.70 \pm 3.46$}
& \degrade{$47.73 \pm 2.68$}
& \improve{$74.02 \pm 1.01$}
& \improve{$58.18 \pm 0.68$} \\
DropNode
& \improve{$85.02 \pm 8.95$}
& \improve{$75.73 \pm 4.14$}
& \improve{$72.60 \pm 3.13$}
& \improve{$50.13 \pm 3.58$}
& \improve{$73.90 \pm 1.20$}
& \degrade{$55.12 \pm 0.72$} \\
DropMessage
& \improve{$84.50 \pm 6.69$}
& \degrade{$74.39 \pm 3.15$}
& \improve{$\textbf{74.30} \pm \textbf{3.37}$}
& \improve{$47.60 \pm 2.93$}
& \improve{$74.18 \pm 1.62$}
& \improve{$58.48 \pm 0.84$} \\
\midrule
RADE-OF
& \improve{$\textbf{85.55} \pm \textbf{8.19}$}
& \improve{$\underline{75.82 \pm 2.98}$}
& \improve{$73.70 \pm 3.52$}
& \improve{$\underline{50.15\pm 4.62}$}
& \improve{$\underline{74.30 \pm 1.02}$}
& \improve{$\underline{58.82 \pm 0.65}$} \\
RADE-OFS
& \improve{$\underline{85.32 \pm 7.75}$}
& \improve{$\textbf{75.95} \pm \textbf{3.10}$}
& \improve{$\underline{74.10 \pm 2.80}$}
& \improve{$\textbf{50.32}\pm \textbf{2.96}$}
& \improve{$\textbf{74.52} \pm \textbf{1.34}$}
& \improve{$\textbf{59.12} \pm \textbf{0.91}$} \\
\midrule
\multicolumn{7}{c}{\textbf{Backbone: GAT}} \\
\midrule
GAT
& $83.45 \pm 5.71$ 
& $75.64 \pm 5.12$ 
& $74.10 \pm 2.73$ 
& $50.46 \pm 2.34$ 
& $68.41 \pm 1.43$ 
& $53.16 \pm 3.14$ \\
Dropout
& \improve{$83.97 \pm 5.99$} 
& \degrade{$74.56 \pm 4.96$} 
& \improve{$74.30 \pm 2.57$} 
& \improve{$50.46 \pm 2.00$} 
& \improve{$69.35 \pm 1.65$} 
& \degrade{$51.72 \pm 1.24$} \\
DropEdge
& \improve{$85.55 \pm 6.07$} 
& \degrade{$72.86 \pm 4.60$} 
& \improve{$74.60 \pm 2.72$}
& \degrade{$49.93 \pm 3.75$} 
& \improve{$69.63 \pm 1.11$} 
& \improve{$53.40 \pm 0.57$} \\
DropNode
& \improve{$85.02 \pm 7.24$} 
& \degrade{$75.01 \pm 4.99$ }
& \improve{$76.10 \pm 2.77$} 
& \improve{$50.60 \pm 2.52$} 
& \improve{$70.01 \pm 1.33$} 
& \degrade{$52.92 \pm 0.82$} \\
DropMessage
& \improve{$85.60 \pm 6.94$} 
& \improve{$74.80 \pm 4.52$}
& \improve{$75.50 \pm 1.56$} 
& \improve{$50.46 \pm 1.88$} 
& \improve{$69.82 \pm 1.12$} 
& \improve{$53.34 \pm 0.76$} \\
\midrule
RADE-OF
& \improve{$\underline{85.86 \pm 8.13}$} 
& \improve{$\underline{75.68 \pm 5.18}$} 
& \improve{$\textbf{76.30} \pm \textbf{2.42}$} 
& \improve{$\underline{50.64 \pm 3.31}$} 
& \improve{$\underline{70.48 \pm 1.22}$} 
& \improve{$\underline{53.90 \pm 0.81}$} \\
RADE-OFS
& \improve{$\textbf{86.10} \pm \textbf{7.19}$} 
& \improve{${\textbf{75.82} \pm \textbf{4.92}}$} 
& \improve{$\underline{76.20 \pm 2.85}$} 
& \improve{$\textbf{50.78} \pm \textbf{2.22}$} 
& \improve{$\textbf{70.55} \pm \textbf{1.09}$} 
& \improve{$\textbf{54.48} \pm \textbf{0.76}$} \\
\bottomrule
\end{tabularx}
\end{table*}

\begin{table*}[t]
\centering
\caption{Node-classification ablations of GradNorm (GN), Expectation-Preserving Correction (EPC), and nonlinearity for RADE-OF (mean$\pm$std over five runs) under GIN and GAT backbones. Accuracy (\%) is reported on all datasets except Flickr with micro-F1 (\%). RADE-OF-Lin and GIN-Lin have the linearized GIN backbone. Best is in bold.}
\label{tab:nc_rade_ablations_backbones}
\small
\setlength{\tabcolsep}{2.7pt}
\renewcommand{\arraystretch}{1.10}
\begin{tabularx}{\textwidth}{lcccccccc}
\toprule
\textbf{Variant} &
\textbf{Cora} & \textbf{CiteSeer} & \textbf{PubMed} & \textbf{CS} &
\textbf{Computer} & \textbf{Physics} & \textbf{Flickr} & \textbf{ogbn-arxiv} \\
\midrule
\multicolumn{9}{c}{\textbf{Backbone: GIN}} \\
\midrule
RADE-OF           & $\!\textbf{78.64} _{\pm \textbf{0.68}}$ & $\!\textbf{65.94} _{\pm \textbf{1.45}}$ & $\!\textbf{77.09} _{\pm \textbf{0.95}}$ & $\!\textbf{93.01} _{\pm \textbf{0.14}}$ & $\!\textbf{90.50} _{\pm \textbf{0.15}}$ & $\!\textbf{96.25} _{\pm \textbf{0.10}}$ & $\!\textbf{53.35} _{\pm \textbf{0.08}}$ & $\!\textbf{71.85} _{\pm \textbf{0.32}}$ \\
RADE-OF w/o GN                   & $78.24_ {\pm 2.26}$ & $63.74_ {\pm 1.87}$ & $76.60_{\pm 0.68}$ & $92.73_{\pm 0.25}$ & $89.72_{\pm 0.33}$ & $96.24 _{\pm 0.04}$ & $52.74_{\pm 0.31}$ & $71.04_{\pm 0.14}$ \\
RADE-OF w/o GN \& EPC             & $78.20_{\pm 0.78}$ & $63.60_{\pm 2.07}$ & $76.46_{\pm 0.79}$ & $92.42_{\pm 0.44}$ & $88.94_{\pm 0.71}$ & $96.22_{\pm 0.10}$ & $52.55_{ \pm 0.23}$ & $70.90_{\pm 0.35}$ \\
\midrule
RADE-OF-Lin                            & $76.48_{\pm 1.18}$ & $63.72_{\pm 1.10}$ & $76.64_{\pm 0.26}$ & $91.58_{\pm 0.27}$ & $87.80_{ \pm 0.48}$ & $95.59_{\pm 0.12}$ & $50.62_{\pm 0.30}$ & $65.95_{\pm 0.30}$ \\
RADE-OF-Lin w/o GN               & $76.32_{\pm 1.03}$ & $61.45_{\pm 2.88}$ & $76.36_{\pm 0.84}$ & $91.54_ {\pm 0.11}$ & $87.52_{\pm 0.41}$ & $95.57_{\pm 0.16}$ & $50.30_ {\pm 0.26}$ & $65.73_{\pm 0.41}$ \\
RADE-OF-Lin w/o GN \& EPC               & $75.98 _{\pm 1.30}$ & ${61.85} _{\pm {1.30}}$ & $76.20 _{\pm 0.98}$ & $91.49 _ {\pm 0.14}$ & $86.24 _{\pm 0.55}$ & $95.50 _{\pm 0.12}$ & $50.32 _{\pm 0.37}$ & $65.55 _{\pm 0.45}$ \\
\midrule
GIN-Lin                                & $75.84_{\pm 1.69}$ & $62.56_{\pm 1.23}$ & $75.92_{\pm 1.05}$ & $91.34_{\pm 0.18}$ & $84.56_{\pm 1.22}$ & $95.43_{\pm 0.13}$ & $50.16_{\pm 0.16}$ & $67.19_{\pm 0.18}$ \\
\midrule
\multicolumn{9}{c}{\textbf{Backbone: GAT}} \\
\midrule
RADE-OF         & $\!\textbf{80.32} _{\pm \textbf{0.78}}$ & $\!\textbf{66.34} _{\pm \textbf{0.85}}$ & $\!\textbf{76.96} _{\pm \textbf{0.60}}$ & $\!\textbf{92.08} _{\pm \textbf{0.13}}$ & $\!\textbf{90.40} _{\pm \textbf{0.36}}$ & $\!\textbf{96.03} _{\pm \textbf{0.25}}$ & ${51.98} _{\pm {0.21}}$ & $\!\textbf{70.98} _{\pm \textbf{0.12}}$ \\
RADE-OF w/o GN                   & $79.46 _{\pm 1.56}$ & $65.62 _{\pm 2.15}$ & $76.78 _{\pm 0.98}$ & $88.52 _{\pm 3.00}$ & $90.27 _{\pm 0.16}$ & $95.95 _{\pm 0.16}$ & $\!\textbf{52.04} _{\pm \textbf{0.27}}$ & $70.92 _{\pm0.15} $ \\
RADE-OF w/o GN \& EPC             & $77.54 _{\pm 0.94}$ & $62.68 _{\pm 2.14}$ & $76.64 _{\pm 0.32}$ & $88.28 _{\pm 1.24}$ & $90.25 _{\pm 0.20}$ & $95.92 _{\pm 0.10}$ & $51.82 _{\pm 0.18}$ & $70.83 _{\pm 0.20}$ \\
\bottomrule
\end{tabularx}
\end{table*}

\section{Additional Experiments}
\label{app:additional_experiments}
We report experimental results under GIN and GAT backbones for node- and graph-classification tasks.
\subsection{Results}
\noindent \textbf{Node Classification.} Table~\ref{tab:node_classification_results_backbones} reports node-classification
results with GIN and GAT backbones. With GIN, RADE-OF is the strongest overall:
it is best on six of the eight datasets (Cora, CiteSeer, CS, Physics, Flickr,
and ogbn-arxiv) and second on Computer. PubMed is the main exception, where
RADE-OFS is best, while RADE-OF remains close to the strongest
non-RADE baseline. On the datasets where RADE-OF is best, its gains over the
strongest non-RADE baseline range from \(0.05\%\) on Physics to \(0.44\%\) on
Cora. RADE-OFS also improves over the GIN backbone on all datasets, but is
typically below RADE-OF except on PubMed.

With GAT, RADE variants are best on seven of the eight datasets. RADE-OFS is
best on PubMed, while RADE-OF is best on Cora, CS, Computer, Physics,
Flickr, and ogbn-arxiv. CiteSeer is the only exception, where Dropout is best and RADE-OF is second. Both RADE variants improve over the GAT backbone, with RADE-OF again serving as the stronger default and RADE-OFS being
most competitive on PubMed. These results
show that the gains of RADE are not specific to GCN and extend to both
sum-aggregation and attention-based backbones.

\noindent \textbf{Graph Classification.}
Table~\ref{tab:graph_classification_results_backbones} reports graph-classification results with GIN and GAT backbones.
With GIN, both RADE variants improve over the backbone on all datasets.
RADE-OFS is best on four of the six datasets (PROTEINS, IMDB-M, ogbg-molhiv, and peptides-func), while RADE-OF is best on MUTAG.
The exception is IMDB-B, where DropMessage is best and RADE-OFS is second.
The gains are largest on peptides-func, where RADE-OFS improves over the strongest non-RADE baseline by $0.64$ AP, consistent with peptides-func requiring long-range interactions~\cite{dwivedi2022long}.

With GAT, one of the RADE variants is best on every dataset. RADE-OFS is best
on five datasets (MUTAG, PROTEINS, IMDB-M, ogbg-molhiv, and peptides-func),
while RADE-OF is best on IMDB-B. Both RADE variants improve over the GAT
backbone on all datasets. Overall, these graph-classification results support
the same pattern observed with GCN: RADE-OF is a reliable regularizer, while
RADE-OFS is especially effective when additional inference-time propagation
benefits graph-level prediction.

\subsection{Ablation Studies under GIN and GAT Backbones}

\noindent \textbf{GradNorm, Corrections, and Nonlinearity.}
Table~\ref{tab:nc_rade_ablations_backbones} studies the RADE-OF components under GIN and GAT backbones. We use \emph{w/o GN} to denote removing GradNorm rate adaptation and \emph{w/o GN \& EPC} to denote removing both GradNorm and the expectation-preserving aggregation correction. Under GIN, removing GradNorm weakens RADE-OF on all datasets, with larger drops on CiteSeer, Computer, Flickr, and ogbn-arxiv. Removing both GradNorm and the expectation-preserving correction degrades performance more broadly, supporting the importance of controlling both perturbation strength and train-inference aggregation mismatch. The same pattern largely holds under GAT: removing GradNorm hurts RADE-OF on seven of eight datasets, with Flickr as the only small exception, while removing both GradNorm and the expectation-preserving correction is worse than full RADE-OF on all datasets.

For GIN, we also report RADE-OF-Lin, a linearized backbone that removes nonlinearities and matches the linear setting used in the variance-regularization derivation in Eq.~\eqref{eq:reg}. RADE-OF-Lin improves over the linear GIN baseline on seven of eight datasets, showing that RADE remains effective in the linearized setting. However, nonlinear RADE-OF is consistently stronger, with especially large gaps on CiteSeer, Flickr, and ogbn-arxiv. Removing GradNorm from RADE-OF-Lin also weakens performance on most datasets. Overall, these ablations show that adaptive rate selection and expectation-preserving correction remain useful beyond GCN, while nonlinear expressivity is important in practice. We do not report an analogous linearized GAT variant because, even without activation functions, GAT still uses representation-dependent softmax attention and therefore does not reduce to a straightforward linear message-passing model.

\begin{table*}[t]
\centering
\caption{Drop-only vs.\ add-only decomposition of RADE-OF on node classification (mean$\pm$std over five runs) with GIN and GAT backbones. Accuracy (\%) is reported on all datasets except Flickr, where we report micro-F1 (\%). Best is in bold.}
\label{tab:rade_drop_add_ablations_backbones}
\vspace{-4pt}
\small
\setlength{\tabcolsep}{3pt}
\renewcommand{\arraystretch}{1.15}
\begin{tabularx}{\textwidth}{l*{8}{Y}}
\toprule
\textbf{Method} &
\textbf{Cora} & \textbf{CiteSeer} & \textbf{PubMed} & \textbf{CS} &
\textbf{Computer} & \textbf{Physics} & \textbf{Flickr} & \textbf{ogbn-arxiv} \\
\midrule
\multicolumn{9}{c}{\textbf{Backbone: GIN}} \\
\midrule
RADE-OF           & $\textbf{78.64} \pm \textbf{0.68}$ & $\textbf{65.94} \pm \textbf{1.45}$ & $\textbf{77.09} \pm \textbf{0.95}$ & $\textbf{93.01} \pm \textbf{0.14}$ & $\textbf{90.50} \pm \textbf{0.15}$ & $\textbf{96.25} \pm \textbf{0.10}$ & $\textbf{53.35} \pm \textbf{0.08}$ & $\textbf{71.85} \pm \textbf{0.32}$ \\

RADE-OF-Drop
& $78.14 \pm 0.94$
& $64.82 \pm 2.74$
& $76.54 \pm 1.06$
& $91.98 \pm 0.19$
& $89.70 \pm 1.12$
& $95.98 \pm 0.05$
& $52.82 \pm 0.12$
& $71.09 \pm 0.43$ \\

RADE-OF-Add
& $78.54 \pm 1.10$
& $64.40 \pm 1.62$
& $76.98 \pm 0.85$
& $92.31 \pm 0.20$
& $89.92 \pm 0.51$
& $96.02 \pm 0.06$
& $52.72 \pm 0.19$
& $71.08 \pm 0.50$ \\
\midrule
\multicolumn{9}{c}{\textbf{Backbone: GAT}} \\
\midrule
RADE-OF         & $\textbf{80.32} \pm \textbf{0.78}$ & $\textbf{66.34} \pm \textbf{0.85}$ & $\textbf{76.96} \pm \textbf{0.60}$ & $\textbf{92.08} \pm \textbf{0.13}$ & $\textbf{90.40} \pm \textbf{0.36}$ & $\textbf{96.03} \pm \textbf{0.25}$ & $\textbf{51.98} \pm \textbf{0.21}$ & $\textbf{70.98} \pm \textbf{0.12}$ \\

RADE-OF-Drop
& $79.54 \pm 0.87$ & $66.28 \pm 1.19$ & $76.58 \pm 1.34$ & $91.53 \pm 0.78$ & $90.12 \pm 0.14$ & $95.84 \pm 0.09$ & $51.93 \pm 0.12$ & $70.26 \pm 0.32$ \\

RADE-OF-Add
& $79.14 \pm 1.12$ & $65.56 \pm 1.49$ & $76.72 \pm 0.75$ & $91.97 \pm 0.28$ & $90.37 \pm 0.24$ & $95.80 \pm 0.12$ & $51.84 \pm 0.08$ & $70.51 \pm 0.26$ \\
\bottomrule
\end{tabularx}
\end{table*}

\noindent \textbf{Decomposing RADE: Drop and Add.}
Table~\ref{tab:rade_drop_add_ablations_backbones} decomposes RADE-OF into
drop-only and add-only components under GIN and GAT backbones. The full add-drop
variant is best on all datasets for both backbones, improving over the better
single-component variant by up to 1.12\% under GIN (CiteSeer) and 0.78\% under
GAT (Cora). The better single component also varies across datasets and
backbones: e.g., add-only is stronger on GIN PubMed, while drop-only is
stronger on GIN CiteSeer. Thus, neither primitive consistently dominates the
other, and combining deletion with addition gives the most reliable performance.

\noindent \textbf{Adaptive vs. Fine-tuned Rates.}
Table~\ref{tab:ablation_tuned_pq_backbones} compares GradNorm-adaptive rate
selection with fixed tuned rates under GIN and GAT backbones. Adaptive selection
is stronger in 27 of the 32 method-backbone-dataset comparisons. Under GIN,
RADE-OF improves over its tuned counterpart on all datasets, while RADE-OFS
improves on six of eight, with small exceptions on Physics and Flickr. 
Under GAT, RADE-OF improves on six of eight datasets, except PubMed and Flickr, and
RADE-OFS improves on seven of eight, except Computer. Overall, these results
support GradNorm-based rate selection over using a single dataset-specific fixed
pair \((p,q)\).

\begin{table*}[!h]
\centering
\caption{Fixed fine-tuned vs. adaptive perturbation rates on node classification (mean$\pm$std over five runs) under GIN/GAT backbones. ``Tuned'' selects the best rate pair by hyperparameter search, while RADE-OF/RADE-OFS use adaptive GradNorm rule. Best is in bold.}
\label{tab:ablation_tuned_pq_backbones}
\small
\setlength{\tabcolsep}{1.3pt}
\renewcommand{\arraystretch}{1.15}
\begin{tabularx}{\textwidth}{@{\hskip 3pt}lcccccccc@{}}
\toprule
\textbf{Method} &
\textbf{Cora} & \textbf{CiteSeer} & \textbf{PubMed} & \textbf{CS} &
\textbf{Computer} & \textbf{Physics} & \textbf{Flickr} & \textbf{ogbn-arxiv} \\
\midrule
\multicolumn{9}{c}{\textbf{Backbone: GIN}} \\
\midrule
RADE-OF           & $\textbf{78.64} \pm \textbf{0.68}$ & $\textbf{65.94} \pm \textbf{1.45}$ & $\textbf{77.09} \pm \textbf{0.95}$ & $\textbf{93.01} \pm \textbf{0.14}$ & $\textbf{90.50} \pm \textbf{0.15}$ & $\textbf{96.25} \pm \textbf{0.10}$ & $\textbf{53.35} \pm \textbf{0.08}$ & $\textbf{71.85} \pm \textbf{0.32}$ \\
RADE-OF (tuned)
& $78.50 \pm 0.42$
& $64.84 \pm 1.90$
& $76.60 \pm 0.68$
& $92.85 \pm 0.18$
& $89.72 \pm 0.33$
& $96.24 \pm 0.04$
& $53.00 \pm 0.16$
& $71.40 \pm 0.18$ \\
\midrule
RADE-OFS        & $\textbf{78.22} \pm \textbf{2.17}$ & $\textbf{65.69} \pm \textbf{1.85}$ & $\textbf{77.52} \pm \textbf{1.19}$ & $\textbf{92.75} \pm \textbf{0.25}$ & $\textbf{89.82} \pm \textbf{0.48}$ & $96.24\pm 0.05$ & $53.02 \pm 0.15$ & $\textbf{70.85} \pm \textbf{0.65}$ \\
RADE-OFS (tuned)
& $78.18 \pm 1.08$
& $65.00 \pm 1.42$
& $77.04 \pm 0.77$
& $92.70 \pm 0.21$
& $89.05 \pm 0.44$
& $\textbf{96.29} \pm \textbf{0.05}$
& $\textbf{53.05} \pm \textbf{0.22}$
& $70.56 \pm 0.47$ \\
\midrule
\multicolumn{9}{c}{\textbf{Backbone: GAT}} \\
\midrule
RADE-OF         & $\textbf{80.32} \pm \textbf{0.78}$ & $\textbf{66.34} \pm \textbf{0.85}$ & $76.96 \pm 0.60$ & $\textbf{92.08} \pm \textbf{0.13}$ & $\textbf{90.40} \pm \textbf{0.36}$ & $\textbf{96.03} \pm \textbf{0.25}$ & $51.98 \pm 0.21$ & $\textbf{70.98} \pm \textbf{0.12}$ \\
RADE-OF (tuned)
& $80.06 \pm 1.20$ & $65.76 \pm 1.96$ & $\textbf{76.98} \pm \textbf{0.52}$ & $90.06 \pm 0.18$ & $90.27 \pm 0.16$ & $96.01 \pm 0.16$ & $\textbf{52.06} \pm \textbf{0.20}$ & $70.92 \pm 0.15$ \\
\midrule
RADE-OFS        & ${\textbf{80.22} \pm \textbf{1.44}}$ & $\textbf{66.10} \pm \textbf{0.72}$ & $\textbf{77.02} \pm \textbf{1.15}$ & ${\textbf{92.07} \pm \textbf{0.24}}$ & ${90.24} \pm {0.12}$ & $\textbf{95.93} \pm \textbf{0.14}$ & $\textbf{51.94} \pm \textbf{0.15}$ & $\textbf{70.66} \pm \textbf{0.16}$ \\
RADE-OFS (tuned)
& $79.94 \pm 0.68$ & $65.96 \pm 0.50$ & $76.84 \pm 0.91$ & $91.90 \pm 0.30$ & $\textbf{90.28} \pm \textbf{0.15}$ & $95.89 \pm 0.08$ & $51.80 \pm 0.10$ & $70.52 \pm 0.19$ \\
\bottomrule
\end{tabularx}
\end{table*}

\subsection{Runtime Analysis}
\label{app:runtime_analysis}
We report the runtime overhead of RADE-OF on node classification with the GCN
backbone. Compared with GCN, RADE adds three sources of computation: sampling
perturbed graphs, applying expectation-preserving aggregation corrections, and
adapting $(p,q)$ with the GradNorm objective. Edge deletion is sampled over
observed edges, while edge addition samples a fixed number of non-edges without
constructing a dense complement. For GCN, the correction terms require
degree-moment quantities under the perturbed graph. These are computed from
node-wise degree statistics, cached graph statistics, global sums, and sparse
scatter operations, rather than by enumerating all non-edges. Thus, RADE
increases the constant factor relative to GCN, but does not require dense
$n\times n$ complement matrices.

Table~\ref{tab:runtime_analysis} reports the average training time per epoch. RADE-OF w/o GN fixes the rates, so its gap from RADE-OF reflects the overhead of adaptive rate selection. RADE-OF w/o GN \& EPC further removes the correction, so its gap from RADE-OF w/o GN reflects the overhead of expectation-preserving correction. The remaining runtime corresponds to the underlying stochastic add-drop sampling and training pipeline.

\begin{table*}[!h]
\centering
\caption{Runtime analysis on node classification with the GCN backbone. We report mean$\pm$std training time per epoch in seconds. RADE-OF includes sampling, Expectation-Preserving Correction (EPC), and GradNorm (GN) rate adaptation.}
\label{tab:runtime_analysis}
\small
\setlength{\tabcolsep}{1.5pt}
\renewcommand{\arraystretch}{1.10}
\begin{tabularx}{\textwidth}{l*{8}{>{\centering\arraybackslash}X}}
\toprule
\textbf{Method} &
\textbf{Cora} & \textbf{CiteSeer} & \textbf{PubMed} & \textbf{CS} &
\textbf{Computer} & \textbf{Physics} & \textbf{Flickr} & \textbf{ogbn-arxiv} \\
\midrule
RADE-OF
& $0.103 _{\pm 0.035}$ & $0.111 _{\pm 0.036}$ & $0.295 _{\pm 0.036}$ & $0.195 _{\pm 0.043}$ & $0.460 _{\pm 0.035}$ & $0.423 _{\pm 0.030}$ & $\!\! 10.263 _{\pm 0.888}$ & $18.986 _{\pm 1.489}$ \\
RADE-OF w/o GN
& $0.051 _{\pm 0.032}$ & $0.050 _{\pm 0.034}$ & $0.159 _{\pm 0.030}$ & $0.146 _{\pm 0.014}$ & $0.288 _{\pm 0.012}$ & $0.290 _{\pm 0.017}$ & $8.770 _{\pm 0.265}$ & $15.400 _{\pm 0.463}$ \\
RADE-OF w/o GN \& EPC
& $0.042 _{\pm 0.033}$ & $0.046 _{\pm 0.034}$ & $0.139 _{\pm 0.030}$ & $0.138 _{\pm 0.012}$ & $0.263 _{\pm 0.019}$ & $0.277 _{\pm 0.035}$ & $8.394 _{\pm 0.225}$ & $14.561 _{\pm 0.589}$ \\
GCN
& $0.018 _{\pm 0.033}$ & $0.025 _{\pm 0.038}$ & $0.052 _{\pm 0.031}$ & $0.035 _{\pm 0.015}$ & $0.068 _{\pm 0.011}$ & $0.079 _{\pm 0.015}$ & $4.577 _{\pm 0.141}$ & $7.143 _{\pm 0.358}$ \\
\bottomrule
\end{tabularx}
\end{table*}

The results show that stochastic add-drop sampling and GradNorm-based rate
adaptation account for most of the added runtime. EPC adds a
smaller cost: RADE-OF w/o GN remains close to RADE-OF w/o GN \& EPC across
datasets. Full RADE-OF is slower because it additionally computes the GradNorm
objective and updates $(p,q)$ during training. Even with this cost, RADE-OF does
not check or store all missing edges. It samples only the added edges it uses and
computes correction terms from graph summary statistics. Thus, RADE-OF is slower
than GCN, but avoids the much higher cost of dense all-pairs computation.

\section{Experimental Details}
\label{app:experimental_details}
\noindent \textbf{Hardware Specification and Environment.}
We ran all our experiments on a server with 40 CPU cores, 377 GB RAM and 11 GB GTX 1080 Ti GPU. The code is written in Python 3.8.0, PyTorch 2.0.0, and PyTorch Geometric 2.2.0.

\noindent \textbf{Dataset Statistics.}
Table~\ref{tab:dataset_stats} reports the statistics of node- and graph-classification datasets.

\begin{table*}[t]
\centering
\begin{threeparttable}
\caption{Dataset statistics for node classification and graph classification.}
\label{tab:dataset_stats}
\small
\setlength{\tabcolsep}{5pt}
\renewcommand{\arraystretch}{1.12}
\begin{tabular}{lccccccc}
\toprule
\multicolumn{8}{c}{\textbf{Node Classification}} \\
\midrule
\textbf{Dataset} & \textbf{\#Nodes} & \textbf{\#Edges} & \textbf{\#Feats} & \textbf{\#Classes} & \textbf{Train/Val/Test} & \textbf{Training} & \textbf{Metric} \\
\midrule
Cora        &$2,708$& $5,429$ & $1,433$ & 7 & Official & Full-batch & Acc \\
CiteSeer    & $3,327$ & $4,732$ & $3,703$ & 6 & Official & Full-batch & Acc \\
PubMed      & $19,717$ & $44,338$ & 500 & 3 & Official & Full-batch & Acc \\
Computer   & $13,752$ & $245,861$ & 767 & 10 & 60/20/20 & Full-batch & Acc \\
CS          & $18,333$ & $81,894$ & $6,805$ & 15 & 60/20/20 & Full-batch & Acc \\
Physics     & $34,493$ & $247,962$ & $8,415$ & $5$ & 60/20/20 & Full-batch & Acc \\
Flickr      & $89,250$ & $899,756$ & 500 & 7 & Official & Mini-batch & Micro-F1 \\
ogbn-arxiv  & $169,343$ & $1,166,243$ & 128 & 40 & Official & Mini-batch & Acc \\
\midrule

\multicolumn{8}{c}{\textbf{Graph Classification}} \\
\midrule
\textbf{Dataset} & \textbf{\#Graphs} & \textbf{Avg \#Nodes} & \textbf{Avg \#Edges} & \textbf{\#Feats} & \textbf{\#Classes/Tasks} & \textbf{Train/Val/Test} & \textbf{Metric} \\
\midrule
MUTAG        & $188$ & $17.9$ & $39.6$ & $7$ & $2$ & 10-fold CV & Acc \\
PROTEINS     & $1{,}113$ & $39.1$ & $145.6$ & $3$ & $2$ & 10-fold CV & Acc \\
IMDB-B       & $1{,}000$ & $19.8$ & $193.1$ & $65^*$ & $2$ & 10-fold CV & Acc \\
IMDB-M       & $1{,}500$ & $13.0$ & $65.9$ & $59^*$ & $3$ & 10-fold CV & Acc \\
ogbg-molhiv  & $41{,}127$ & $25.5	$ & $27.5$ & $9$ & $1$ & Official & ROC-AUC \\
peptides-func & $15{,}535$ & $150.94$ & $307.30$ & $9$ & $10$ & Official & AP \\
\bottomrule
\end{tabular}

\begin{tablenotes}[flushleft]
\footnotesize
\item * IMDB-BINARY and IMDB-MULTI have no raw node attributes; we use one-hot degree features following the convention.
\end{tablenotes}
\end{threeparttable}
\end{table*}

\noindent \textbf{Dataset-Specific Hyperparameters.}
We report dataset-specific backbone hyperparameters in
Table~\ref{tab:nc_backbone_hparams} for node classification and
Table~\ref{tab:gc_backbone_hparams} for graph classification, and the selected
augmentation rates in Table~\ref{tab:nc_aug_hparams} and
Table~\ref{tab:gc_aug_hparams}. To ensure a controlled comparison, we use the
same backbone configuration across all augmentation methods. For node
classification, we select the hidden dimension from
\(\{64,128,256,512\}\). For TU graph-classification datasets, we search over
hidden dimensions \(\{16,32,64\}\) and batch sizes \(\{16,32,64\}\), and train
with Adam using a StepLR scheduler. For ogbg-molhiv and peptides-func, we use the hidden dimensions and batch sizes reported in their respective benchmark protocols. For augmentation baselines, we tune the augmentation rate over \(\{0.2,0.3,0.5,0.7,0.8,0.9\}\). For graph classification, all models are trained with Adam, except peptides-func, for which we use AdamW with a ReduceLROnPlateau scheduler following the LRGB protocol~\cite{dwivedi2022long}.

\begin{table*}[h]
\centering
\caption{Node classification backbone/training hyperparameters. Hidden dimensions can differ by backbone.}
\label{tab:nc_backbone_hparams}
\small
\setlength{\tabcolsep}{4pt}
\renewcommand{\arraystretch}{1.12}
\begin{tabularx}{0.92\textwidth}{
@{}l
>{\centering\arraybackslash}X
>{\centering\arraybackslash}X
>{\centering\arraybackslash}X
>{\centering\arraybackslash}X
>{\centering\arraybackslash}X
>{\centering\arraybackslash}p{3.2cm}
@{}}
\toprule
\multirow{2}{*}{\textbf{Dataset}} &
\multicolumn{3}{c}{\textbf{Hidden channels}} &
\multicolumn{2}{c}{\textbf{Training}} &
\multirow{2}{*}{\textbf{Batching}} \\
\cmidrule(lr){2-4}
\cmidrule(lr){5-6}
&
\textbf{GCN} & \textbf{GIN} & \textbf{GAT} &
\textbf{LR} & \textbf{Epochs} & \\
\midrule
Cora        & 512 & 512 & $256$ & 0.001 & 500  & Full \\
CiteSeer    & 512 & 512 & $512$ & 0.001 & 500  & Full \\
PubMed      & 512 & 512 & $256$ & 0.001 & 500  & Full \\
CS          & 64  & 256 & $128$ & 0.001 & 1500 & Full \\
Computer    & 128 & 128 & $128$ & 0.001 & 1000 & Full \\
Physics     & 64  & 64  & $64$ & 0.001 & 1500 & Full \\
Flickr      & 256 & 256 & $256$ & 0.001 & 1000 & Mini ($2{,}048$ nodes/batch) \\
ogbn-arxiv  & 256 & 256 & $256$ & 0.001 & 2000 & Mini ($2{,}048$ nodes/batch) \\
\bottomrule
\end{tabularx}
\end{table*}

\begin{table*}[h]
\centering
\caption{Graph classification backbone/training hyperparameters. Hidden dimensions and batch sizes can differ by backbone.}
\label{tab:gc_backbone_hparams}
\small
\setlength{\tabcolsep}{3pt}
\renewcommand{\arraystretch}{1.12}
\begin{tabularx}{0.92\textwidth}{
@{}l
>{\centering\arraybackslash}X
>{\centering\arraybackslash}X
>{\centering\arraybackslash}X
>{\centering\arraybackslash}X
>{\centering\arraybackslash}X
>{\centering\arraybackslash}X
>{\centering\arraybackslash}X
>{\centering\arraybackslash}X
@{}}
\toprule
\multirow{2}{*}{\textbf{Dataset}} &
\multicolumn{3}{c}{\textbf{Hidden channels}} &
\multicolumn{2}{c}{\textbf{Training}} &
\multicolumn{3}{c}{\textbf{Batch size}} \\
\cmidrule(lr){2-4}
\cmidrule(lr){5-6}
\cmidrule(lr){7-9}
&
\textbf{GCN} & \textbf{GIN} & \textbf{GAT} &
\textbf{LR} & \textbf{Epochs} &
\textbf{GCN} & \textbf{GIN} & \textbf{GAT} \\
\midrule
MUTAG         & 32  & 32  & 32 & 0.001 & 350 & 16  & 32  & 16 \\
PROTEINS      & 64  & 32  & 64 & 0.001 & 350 & 32  & 32  & 16 \\
IMDB-B        & 32  & 16  & 32 & 0.001 & 350 & 64  & 32  & 64 \\
IMDB-M        & 32  & 16  & 32 & 0.001 & 350 & 64  & 32  & 64 \\
ogbg-molhiv   & 300  & 300  & 300 & 0.001 & 350 & 256 & 256 & 32 \\
peptides-func & 300 & 300 & 300 & 0.001 & 350 & 128 & 128 & 128 \\
\bottomrule
\end{tabularx}
\end{table*}

\begin{table*}[h]
\centering
\caption{Node classification augmentation hyperparameters. The table summarizes the augmentation-specific rates used by each baseline.
For RADE variants, we report the initialization $(p_0,q_0)$; GradNorm then adapts $(p,q)$ during training.}
\label{tab:nc_aug_hparams}
\small
\setlength{\tabcolsep}{10pt}
\renewcommand{\arraystretch}{1.12}
\begin{threeparttable}
\begin{tabularx}{\textwidth}{@{\hskip 3pt}lcccccc}
\toprule
\textbf{Dataset} &
\textbf{Dropout rate} &
\textbf{DropEdge rate} &
\textbf{DropNode rate} &
\textbf{DropMessage rate} &
\textbf{RADE $p_0$} &
\textbf{RADE $q_0$} \\
\midrule
Cora        & 0.70 & 0.50 & 0.90 & 0.50 & 0.50 & 0.000721 \\
CiteSeer    & 0.50 & 0.20 & 0.90 & 0.90 & 0.50 & 0.000410 \\
PubMed      & 0.70 & 0.20 & 0.20 & 0.90 & 0.50 & 0.000114 \\
CS          & 0.30 & 0.20 & 0.50 & 0.90 & 0.50 & 0.000239 \\
Computer   & 0.50 & 0.50 & 0.50 & 0.50 & 0.50 & 0.001300 \\
Physics     & 0.30 & 0.20 & 0.20 & 0.20 & 0.50 & 0.000208 \\
Flickr      & 0.50 & 0.20 & 0.20 & 0.50 & 0.50 & 0.000056 \\
ogbn-arxiv  & 0.20 & 0.20 & 0.50 & 0.50 & 0.50 & 0.000040 \\
\bottomrule
\end{tabularx}
\end{threeparttable}
\end{table*}

\begin{table*}[!h]
\centering
\caption{Graph classification augmentation hyperparameters. The table summarizes the augmentation-specific rates used by each baseline.
For RADE variants, we report the initialization $(p_0,q_0)$; GradNorm then adapts $(p,q)$ during training.}
\label{tab:gc_aug_hparams}
\small
\setlength{\tabcolsep}{10pt}
\renewcommand{\arraystretch}{1.12}
\begin{tabularx}{\textwidth}{@{\hskip 3pt}lcccccc}
\toprule
\textbf{Dataset} &
\textbf{Dropout rate} &
\textbf{DropEdge rate} &
\textbf{DropNode rate} &
\textbf{DropMessage rate} &
\textbf{RADE $p_0$} &
\textbf{RADE $q_0$} \\
\midrule
MUTAG        & 0.50 & 0.20 & 0.20 & 0.20 & 0.50 & 0.0605 \\
PROTEINS     & 0.50 & 0.80 & 0.50 & 0.80 & 0.50 & 0.0203 \\
IMDB-B       & 0.50 & 0.20 & 0.50 & 0.50 & 0.50 & 0.2044 \\
IMDB-M       & 0.80 & 0.20 & 0.80 & 0.20 & 0.50 & 0.2884 \\
ogbg-molhiv  & 0.50 & 0.20 & 0.20 & 0.50 & 0.50 & 0.0355 \\
peptides-func & 0.50 & 0.20 & 0.50 & 0.50 & 0.50 & 0.0052 \\
\bottomrule
\end{tabularx}
\end{table*}

\end{document}